\documentclass{article} 
\usepackage{iclr2026_conference,times}

\usepackage[nospace,noadjust]{cite}
\usepackage{amsthm,amsmath,amssymb,amsfonts,times}
\usepackage{graphicx}
\usepackage{textcomp}
\usepackage{xcolor}
\usepackage{epsfig}
\usepackage{multirow}
\usepackage{caption}
\usepackage{subcaption}
\usepackage{algorithm}
\usepackage{algorithmicx}
\usepackage{algpseudocode}
\usepackage{array}
\usepackage{epstopdf}
\usepackage{color}
\usepackage{diagbox}
\usepackage{bm}
\usepackage{bbm}
\usepackage{mathrsfs}
\usepackage{tcolorbox}
\usepackage{pdfsync}

\usepackage{url}
\usepackage[toc, page]{appendix}

\usepackage{fancyhdr}
\usepackage{cite}
\usepackage{cleveref}

\newtheorem{lemma}{Lemma}
\newtheorem{cor}{Corollary}
\newtheorem{definition}{Definition}
\newtheorem{assum}{Assumption}
\newtheorem{theorem}{Theorem}

\newcommand{\R}{\mathbb{R}}

\newcommand{\e}{\begin{equation}}
\newcommand{\ee}{\end{equation}}
\newcommand{\en}{\begin{equation*}}
\newcommand{\een}{\end{equation*}}
\newcommand{\eqn}{\begin{eqnarray}}
\newcommand{\eeqn}{\end{eqnarray}}
\newcommand{\bmat}{\begin{bmatrix}}
\newcommand{\emat}{\end{bmatrix}}

\DeclareMathAlphabet\mathbfcal{OMS}{cmsy}{b}{n}

\newcommand{\E}{\operatorname{\mathbb{E}}}

\newcommand{\vct}[1]{\boldsymbol{#1}}
\newcommand{\mtx}[1]{\boldsymbol{#1}}

\newcommand{\<}{\langle}
\renewcommand{\>}{\rangle}

\newcommand{\trace}{\operatorname{trace}}

\def \vec       {\operatorname*{vec}}
\newcommand{\wh}{\widehat}

\newcommand{\wt}{\widetilde}
\newcommand{\ol}{\overline}

\newcommand{\calD}{\mathcal{D}}

\newcommand{\calF}{\mathcal{F}}

\newcommand{\calM}{\mathcal{M}}
\newcommand{\calN}{\mathcal{N}}

\newcommand{\calX}{\mathcal{X}}

\newcommand{\ve}{\vct{e}}

\newcommand{\vk}{\vct{k}}

\newcommand{\vo}{\vct{o}}

\newcommand{\vq}{\vct{q}}

\newcommand{\vu}{\vct{u}}
\newcommand{\vv}{\vct{v}}
\newcommand{\vw}{\vct{w}}
\newcommand{\vx}{\vct{x}}
\newcommand{\vy}{\vct{y}}
\newcommand{\vz}{\vct{z}}

\newcommand{\vtheta}{\vct{\theta}}

\newcommand{\mA}{\mtx{A}}
\newcommand{\mB}{\mtx{B}}

\newcommand{\mH}{\mtx{H}}

\newcommand{\mS}{\mtx{S}}
\newcommand{\mT}{\mtx{T}}
\newcommand{\mU}{\mtx{U}}

\newcommand{\mW}{\mtx{W}}
\newcommand{\mX}{\mtx{X}}
\newcommand{\mY}{\mtx{Y}}
\newcommand{\mZ}{\mtx{Z}}

\newcommand{\mLambda}{\mtx{\Lambda}}

\newcommand{\mTheta}{\mtx{\Theta}}

\newcommand{\mId}{{\bf I}}

\graphicspath{{./figs/}}

\newlength{\imgwidth}
\newboolean{twoColVersion}
\setboolean{twoColVersion}{false}
\newcommand{\twoCol}[2]{\ifthenelse{\boolean{twoColVersion}} {#1} {#2} }

\usepackage{enumerate}
\usepackage{enumitem}

\usepackage{mathtools}

\title{Learning to Adapt: In-Context Learning \\ Beyond Stationarity}

\author{
Zhen Qin$^{\dagger,\ddagger,}$\thanks{Corresponding Author: zhenqin@umich.edu} \ ,
Jiachen Jiang$^{\ddagger}$,
Zhihui Zhu$^{\ddagger}$ \\[0.5em]
$^{\dagger}$Michigan Institute for Computational Discovery and Engineering, \\
\hspace*{0.4em}Department of Electrical Engineering and Computer Science, \\
\hspace*{0.4em}Department of Statistics, University of Michigan \\
$^{\ddagger}$Department of Computer Science and Engineering, The Ohio State University
}

\iclrfinalcopy 
\begin{document}

\maketitle

\begin{abstract}
Transformer models have become foundational across a wide range of scientific and engineering domains due to their strong empirical performance. A key capability underlying their success is in-context learning (ICL): when presented with a short prompt from an unseen task, transformers can perform per-token and next-token predictions without any parameter updates. Recent theoretical efforts have begun to uncover the mechanisms behind this phenomenon, particularly in supervised regression settings. However, these analyses predominantly assume stationary task distributions, which overlook a broad class of real-world scenarios where the target function varies over time. In this work, we bridge this gap by providing a theoretical analysis of ICL under non-stationary regression problems. We study how the gated linear attention (GLA) mechanism adapts to evolving input-output relationships and rigorously characterize its advantages over standard linear attention in this dynamic setting. To model non-stationarity, we adopt a first-order autoregressive process and show that GLA achieves lower training and testing errors by adaptively modulating the influence of past inputs---effectively implementing a learnable recency bias. Our theoretical findings are further supported by empirical results, which validate the benefits of gating mechanisms in non-stationary ICL tasks.
\end{abstract}

\vspace{-0.7cm}
\section{Introduction}
\vspace{-0.3cm}

Transformer-based architectures \citep{vaswani2017attention} have emerged as a powerful and versatile modeling framework, achieving state-of-the-art results across a wide spectrum of scientific and engineering domains. Their remarkable effectiveness has been demonstrated in natural language processing \citep{radford2019language,brown2020language}, recommendation systems \citep{zhou2018deep,chen2019behavior}, reinforcement learning \citep{chen2021decision,janner2021offline}, computer vision \citep{dosovitskiy2020image}, and multi-modal signal processing \citep{tsai2019multimodal}, as well as in more specialized areas such as quantum information \citep{ma2025tomography} and wireless communication systems \citep{kim2023transformer}. A particularly notable instance is their pivotal role in the development of large language models like GPT-4 \citep{achiam2023gpt}, where the Transformer backbone enables highly advanced generative capabilities.

A distinctive and increasingly studied feature of Transformer models is in-context learning (ICL) \citep{min2021metaicl}, which allows the model to perform previously unseen tasks at inference time by conditioning on sequences of input-output examples, without requiring any explicit parameter updates. This emergent capability has spurred a growing body of research aiming to understand the underlying mechanisms that enable such behavior \citep{brown2020language,min2021metaicl,dong2022survey,wies2023learnability,zhang2023makes,bai2023transformers,li2024long,bertsch2024context,akyurek2024context,jiang2025compression,
song2024unraveling,wu2023convergence,qin2025convergence}. In particular, recent theoretical works have investigated the realization of ICL in supervised regression settings, showing that certain architectural components--such as linear attention mechanisms--can effectively emulate simple learning algorithms, e.g., a single step of gradient descent, when the input data distribution is stationary \citep{garg2022can,akyurek2022learning, von2023transformers,zhang2024trained, huang2023context,chen2024training, yang2024context, zhang2025training,mahankali2023one,ahn2023transformers,li2024fine, li2025gating, fu2024transformers,dingcausallm}. These findings offer valuable insights into the algorithmic behaviors implicitly encoded by architectural design, shedding light on the interplay between representation, memory, and adaptation in modern Transformer models.

However, much of the existing theoretical understanding is limited to stationary data settings, where the input-output relationships remain consistent across in-context examples and the query point. In contrast, many practical scenarios--including time-series forecasting, streaming data, and natural language--exhibit non-stationarity, where the underlying target function evolves over time. In such settings, recency bias, or the increased predictive relevance of more recent examples, plays a crucial role in accurate prediction. Empirically, linear attention mechanisms are often insufficient for these non-stationary tasks, motivating the introduction of architectural variants that incorporate inductive biases better suited for adaptation, such as gated linear attention (GLA) \citep{yang2023gated,jiang2025context}, RetNet \citep{sun2023retentive},  Gateloop \citep{katsch2023gateloop}, RWKV-6 \citep{peng2024eagle}, as well as state-space models like Mamba-2 \citep{gu2023mamba}. These methods have achieved strong performance in non-stationary sequence modeling, yet there remains a lack of formal theoretical understanding of their behavior in ICL settings.

{\bf Contribution: In this paper, we aim to bridge this gap by presenting a theoretical analysis of ICL in non-stationary or time-varying regression problems.} We investigate how the GLA mechanism adapts to evolving input-output relationships and provide a rigorous characterization of its advantages over standard linear attention in this setting. To model non-stationarity, we adopt a first-order autoregressive process, which allows us to analytically capture temporal variations in the regression targets. Within this framework, we show that standard linear attention incurs higher training and testing errors due to its limited capacity to adapt to distributional shifts over time. In contrast, GLA exhibits inherent adaptability by dynamically modulating the contributions of past inputs, effectively inducing a learnable recency bias. This gating mechanism enables the model to better accommodate time-varying input-output mappings, thereby achieving more robust in-context generalization. Our analysis underscores the importance of architectural components--particularly gating--in equipping transformer models with the ability to implement adaptive learning algorithms in non-stationary environments. Experimental results further corroborate our theoretical findings. Collectively, our work contributes a theoretical perspective that clarifies the design choices behind transformer variants and offers a conceptual framework for understanding and developing architectures suited for adaptive ICL.

\vspace{-0.2cm}

\paragraph{Notation} We use  bold capital letters (e.g., $\mY$) to denote matrices,  bold lowercase letters (e.g., $\vy$) to denote vectors, and italic letters (e.g., $y$) to denote scalar quantities.  Elements of matrices are denoted in parentheses, as in Matlab notation. For example, $\mY(s_1, s_2)$ denotes the element in position
$(s_1, s_2)$ of the matrix $\mY$.
The inner product of $\mA\in\R^{d_1\times d_2}$ and $\mB\in\R^{d_1\times d_2}$ can be denoted as $\<\mA, \mB \> = \sum_{s_1=1}^{d_1}\sum_{s_2=1}^{d_2} \mA(s_1,s_2)\mB(s_1,s_2) $.
$\|\mX\|_F$  represents the Frobenius norm of $\mX$. ${\bm 0}_{d}$ and ${\bm 0}_{d\times d}$ denote the zero vector in $\R^d$ and the zero matrix in $\R^{d\times d}$, respectively. For a positive integer $K$, $[K]$ denotes the set $\{1,\dots, K \}$.

\vspace{-0.3cm}

\subsection{Related Works}
\label{sec: related works}
\vspace{-0.2cm}

A growing body of work has investigated the emergent phenomenon of ICL, with a focus on understanding its behavior in stationary regression tasks. For example, \citep{garg2022can} empirically demonstrated the ICL capabilities of transformers by analyzing prompts where each input is labeled by a task-specific function drawn from a predefined function class, such as linear models. Along similar lines, \citep{akyurek2022learning} investigated linear regression and introduced a transformer construction capable of performing a single gradient descent (GD) step using in-context examples. Building upon this, \citep{von2023transformers} designed weight matrices for linear attention-only transformers that replicate GD updates in linear regression tasks, and notably, they observed that the learned weights resemble those obtained through end-to-end training on ICL prompts.

Further progress has been made by studying the convergence behavior of transformer architectures. In particular, \citep{zhang2024trained} showed that, for a single-layer linear self-attention model, gradient flow with carefully chosen random initialization converges to a global minimum, yielding low prediction error on anisotropic Gaussian data. Complementary work by \citep{huang2023context} initiated the theoretical study of softmax attention, analyzing the training dynamics of one-layer, single-head transformers and providing convergence guarantees for linear regression. This line of research was subsequently extended by \citep{chen2024training, yang2024context, zhang2025training}, who provided sufficient conditions for the convergence of multi-head softmax transformers trained with GD in ICL scenarios. Alternative theoretical perspectives have also been explored: for instance, \citep{mahankali2023one} demonstrated that a transformer performing a single GD step on a least-squares objective can serve as a global minimizer of the pre-training loss, offering a different interpretation of training objectives in ICL. Similarly, \citep{ahn2023transformers} showed that a single-layer model, when trained on random linear regression tasks, implicitly learns to perform a preconditioned GD step at test time, further reinforcing the connection between ICL and optimization-based learning rules. Meanwhile, \citep{li2024fine, li2025gating} offered a theoretical interpretation of GLA through the lens of weighted preconditioned GD, although their analysis remains limited to stationary regression settings. Beyond first-order methods, more advanced optimization techniques have also been considered; for example, \citep{fu2024transformers} analyzed the convergence behavior of second-order methods in ICL, highlighting their potential for accelerated adaptation relative to first-order approaches.

\vspace{-0.3cm}

\section{In-context Learning Time-varying Functions}

\vspace{-0.2cm}

This work builds upon the well-established in-context learning (ICL) framework introduced in \citep{garg2022can}, which aims to train models capable of performing ICL within a specified function class.  As discussed in prior work, significant efforts have been devoted to elucidating the mechanisms underlying ICL. In particular, a number of studies~\citep{garg2022can,akyurek2022learning,mahankalione,ahn2023transformers,huang2024context,zhang2024trained,li2024fine,li2025gating,zhang2025training} have investigated the dynamics of ICL in transformer architectures through the lens of linear regression tasks, where the target function is typically assumed to take the form $f(\vx) = \< \vw, \vx \>$. However, these studies commonly rely on the simplifying assumption that the regression weight vector $\vw$ remains fixed throughout the task. This stationarity assumption creates a theoretical-practical gap, as it does not faithfully reflect real-world scenarios in which data distributions are often nonstationary and the underlying regression weights may vary across different input samples.

{\bf In-context Learning Time-varying Functions} To bridge this gap and advance the theoretical understanding of ICL in non-stationary settings, we introduce a more realistic framework in which the labels in the training prompt are generated by time-varying functions. Formally, let $\calD_{\calX}$ denote a distribution over inputs and $\calD_{\calF_i}$ a time-varying distribution over functions in $\calF_i$. A prompt $P$ is defined as a sequence $(\vx_1, f_1(\vx_1), \dots, \vx_n, f_n(\vx_n), \vx_{\text{query}})$, where the inputs $\vx_1, \ldots, \vx_n\in \R^{d}$ and query $\vx_{\text{query}} = \vx_{n+1}\in \R^{d}$ are drawn from $\calD_{\calX}$, and each $f_i$ is drawn from $\calD_{\calF_i}$. One may consider two canonical types of time-varying functions inspired by the literature:
\begin{itemize}
[leftmargin=12pt,itemsep=2pt,topsep=0pt,parsep=0pt]
\item {\it Deterministic time-varying functions}: Here, $f_i = f(\cdot, i/(n+1))$, where $f$ is assumed to vary smoothly over rescaled time. This setting captures gradual and predictable evolution in the underlying mapping, as extensively studied in time-varying nonlinear regression models \citep{zhang2012inference, zhang2015time}. \item {\it Stochastic time-varying functions}: In this case, the evolution of $f_i$ is modeled as a stochastic process, allowing for random fluctuations in the function mapping. A representative model is $f_i(x) = \gamma f_{i-1}(x) + e_i(x)$, where $0<\gamma<1$ is a forgetting factor modeling gradual drift in task mappings and $e_i(x)$ is a zero-mean stochastic perturbation.
\end{itemize}

\begin{definition}
We say that a model $\calM$ can {in-context learn} the time-varying function class $\calF_i$ up to accuracy $\epsilon$, with respect to $(\calF_i, \calD_{\calX})$, if it can predict $f_{n+1}(\vx_{\text{query}})$ based on the prompt $P$ with average error
\begin{equation}
\E_{P}\big[\ell(\calM(P), f_{n+1}(\vx_{\text{query}}))\big] \leq \epsilon,
\label{eq:icl_def}
\end{equation}
where $\ell(\cdot,\cdot)$ denotes an appropriate loss function, such as squared error.
\end{definition}
Within this framework, we then pose the following central question:

\centerline{\textbf{Question:} Can we train a model to in-context learn a given time-varying function class?}


In this work, to facilitate theoretical analysis while preserving non-stationarity, we consider a simple yet expressive instantiation of the function class:
\begin{eqnarray}
    \label{definition of y}
    y_i = f_i(\vx_i) = \<\vw_i,  \vx_i \>\in\R, i \in[n+1],
\end{eqnarray}
where each weight vector $\vw_i$ evolves according to a first-order autoregressive  process given by
\begin{eqnarray}
    \label{definition of w}
    \vw_i = \gamma \vw_{i-1} + \ve_i, i \in[n+1].
\end{eqnarray}
Here, $0< \gamma \le 1 $ is the autoregressive coefficient that controls the temporal correlation of the weight vectors, the sequence ${\vw_i}$ follows a random walk model, which is a widely adopted generative model in signal processing and adaptive filtering literature~\citep{sayed2011adaptive}. To facilitate tractable analysis, we further assume that the initial weight vector is drawn i.i.d. as $\vw_0 \overset{\text{i.i.d.}}{\sim} \calN({\bm 0}, \sigma_w^2\mId)$, the noisy terms are i.i.d. Gaussian with\footnote{One may assume $\sigma_e^2 = (1 - \gamma^2)\sigma_w^2$ to ensure $\E[\|\vw_i\|^2] = \E[\|\vw_{i-1}\|^2]$. In this work, we relax this constraint to allow more general settings.} $\ve_i \overset{\text{i.i.d.}}{\sim} \calN({\bm 0}, \sigma_e^2\mId)$, and the input vectors are i.i.d. samples from a zero-mean Gaussian distribution with covariance matrix $\vx_i \overset{\text{i.i.d.}}{\sim} \calN({\bm 0}, {\bm \Lambda})$. Moreover, we assume that the random variables ${\vw_{i-1}}$, ${\ve_i}$, and ${\vx_i}$ are mutually independent. Following a long line of theoretical work on in-context learning \citep{mahankalione,ahn2023transformers,zhang2024trained,chen2024training,yang2024context,li2024fine,li2025gating,zhang2025training}, we adopt Gaussian assumptions in our analysis. This modeling choice enables sharp and explicit characterizations of both the training and test errors--rather than only providing loose upper bounds--and is therefore essential for isolating how key quantities such as $\gamma$ govern the behavior of the learned in-context learner.


\paragraph{Gated Linear Attention}

In the non-stationary regression setting introduced above, where the underlying task weights evolve gradually over time, it is crucial for the model to effectively capture pairwise correlations while adapting to the dynamics of changing tasks. Although standard linear attention mechanisms offer computational efficiency and scalability, they lack the flexibility to modulate the influence of prior context based on its relevance to the current input--an ability that is particularly important in nonstationary environments.

To address this limitation, we employ Gated Linear Attention (GLA) \citep{yang2023gated,li2024fine,li2025gating}, which enhances linear attention by introducing a gating mechanism that controls the flow of past information. This structure enables the model to selectively integrate relevant historical patterns while suppressing outdated ones, thereby offering a better inductive bias for capturing evolving structures in non-stationary tasks.

Formally, we consider the following implementation of GLA. Let $\mW_Q \in\R^{(d+1) \times (d+1)}$, $\mW_K\in\R^{(d+1) \times (d+1)}$, and $\mW_V\in\R^{(d+1)\times (d+1)}$ denote the query, key, and value weight matrices, respectively. To streamline the subsequent analysis, we follow prior works~\citep{ ahn2023transformers,huang2024context,zhang2024trained,li2024fine,li2025gating,zhang2025training} and construct the prompt by evaluating each function $f_i$ on the sampled inputs and pairing each input with its corresponding output:
\begin{eqnarray}
    \label{definition of input prompt}
    \mZ = \begin{bmatrix}\vz_1 & \cdots & \vz_n & \vz_{n+1} \end{bmatrix}= \begin{bmatrix}\vx_1 & \cdots & \vx_n & \vx_{n+1} \\ y_1 & \cdots & y_n & 0 \end{bmatrix}\in\R^{(d+1)\times (n+1)}.
\end{eqnarray}
For each input $\vz_i$, we define the corresponding query, key, and value vectors as $\vq_i = \mW_Q \vz_i$, $\vk_i = \mW_K \vz_i$ and $\vv_i = \mW_V \vz_i$. The output of GLA at position $i$ is given by:
\begin{eqnarray}
    \label{out of GLA1}
    \vo_{i} = \mS_{i}\vq_i \ \  \text{and} \ \  \mS_{i} = \lambda \mS_{i-1} + \vv_i\vk_i^\top,
\end{eqnarray}
where $\lambda \in (0,1]$ is a forgetting factor that determines how quickly the attention mechanism discounts earlier information. For ease of theoretical analysis, we adopt a simplified formulation where a single global forgetting factor $\lambda$ is used, rather than assigning a separate, data-dependent gating coefficient to each token as done in the original GLA model. By unrolling the recursive update in \eqref{out of GLA1}, we obtain:
\begin{eqnarray}
    \label{out of GLA2}
    \mS_{n+1} = \lambda \mS_{n} + \vv_{n+1}\vk_{n+1}^\top = \sum_{i=1}^{n+1} \lambda^{n+1-i} \vv_i\vk_i^\top = \mW_V\bigg(\sum_{i=1}^{n+1} \lambda^{n+1-i} \vz_i\vz_i^\top \bigg)\mW_K^\top,
\end{eqnarray}
which leads to the following expression for the output vector:
\begin{eqnarray}
    \label{out of GLA3}
    \vo_{n+1} = \mS_{n+1}\vq_{n+1} = \mW_V\bigg(\sum_{i=1}^{n+1} \lambda^{n+1-i} \vz_i\vz_i^\top \bigg)\mW_K^\top \mW_Q\vz_{n+1}.
\end{eqnarray}
It is worth noting that when $\lambda = 1$, the weighted sum degenerates into an unweighted accumulation, i.e., $\sum_{i=1}^{n+1} \vz_i\vz_i^\top = \mZ\mZ^\top$, under which the GLA formulation reduces to the standard linear attention model. This highlights that GLA generalizes linear attention by introducing a learnable memory decay.

Since the final prediction is taken as the last entry of the token vector output by the GLA layer, only a subset of the entries in the  weight matrices $\mW_{V}$ and $\mW_Q$, $\mW_{K}$ influence the output. To simplify the notation and subsequent analysis, we merge the query and key matrices into a single matrix and define
\begin{eqnarray}
    \label{out of WV and WKQ}
    \mW_V = \begin{bmatrix}\mW_{11}^V &   \vw_{12}^{V} \\ {\vw_{21}^{V}}^\top   & w_{-1}^V \end{bmatrix}\in\R^{(d+1)\times (d+1)} \ \ \text{and} \ \ \mW_{KQ} = \begin{bmatrix}\mW_{11}^{KQ} &   \vw_{12}^{KQ} \\ {\vw_{21}^{KQ}}^\top   & w_{-1}^{KQ} \end{bmatrix}\in\R^{(d+1)\times (d+1)},
\end{eqnarray}
where $\mW_{11}^V, \mW_{11}^{KQ}\in\R^{d\times d}$, $\vw_{12}^{V}, \vw_{21}^{V}, \vw_{12}^{KQ}, \vw_{21}^{KQ}\in\R^{d\times 1}$ and $w_{-1}^V, w_{-1}^{KQ}\in\R$. Using this decomposition, we express the predicted output as
\begin{eqnarray}
    \label{output of y}
    \wh y_{n+1} = \vo_{n+1}(d+1) = \begin{bmatrix} {\vw_{21}^{V}}^\top   & w_{-1}^V \end{bmatrix} \bigg(\sum_{i=1}^{n+1} \lambda^{n+1-i} \vz_i\vz_i^\top \bigg) \begin{bmatrix}\mW_{11}^{KQ}  \\ {\vw_{21}^{KQ}}^\top   \end{bmatrix} \vx_{n+1}.
\end{eqnarray}
Note that only the last row of $\mW_V$ and the first $d$ columns of $\mW_{KQ}$ contribute to the final prediction. Therefore, without loss of generality, we may set the remaining entries in $\mW_V$ and $\mW_{KQ}$ to zero in the subsequent analysis.

\vspace{-0.3cm}

\section{Theoretical Analysis of GLA for Time-varying Regression}

\vspace{-0.2cm}

In this work, we investigate the convergence behavior, training error, and testing error of ICL linear predictors based on the GLA model for time-varying functions. Suppose we are given $B$ independent in-context learning training tasks, where each task prompt corresponds to an embedding matrix $\mZ_{\tau}$, for $\tau = 1,\dots, B$, constructed according to the transformation defined in~\eqref{definition of input prompt}:
\begin{eqnarray}
    \label{definition of input prompt sample}
    \mZ_{\tau} = \begin{bmatrix}\vz_{\tau,1} & \cdots & \vz_{\tau,n} & \vz_{\tau,n+1} \end{bmatrix}=\begin{bmatrix}\vx_{\tau,1} & \cdots & \vx_{\tau,n} & \vx_{\tau,n+1} \\ \<\vw_{\tau,1}, \vx_{\tau,1}  \> & \cdots & \<\vw_{\tau,n}, \vx_{\tau,n}  \> & 0 \end{bmatrix},
\end{eqnarray}
where the weight vectors $\vw_{\tau,i}$ evolves according to \eqref{definition of w}.

We denote the prediction produced by the GLA model on the query input of task $\tau$ as $\wh y_{\tau, n+1}$, whose exact form is given in~\eqref{output of y}. The empirical risk over $B$ independent task prompts is then defined as:
\begin{eqnarray}
    \label{empirical loss function}
    l(\vtheta) = \frac{1}{2B}\sum_{\tau = 1}^{B}\big( \wh y_{\tau, n+1} - \<\vw_{\tau, n+1}, \vx_{\tau, n+1}  \> \big)^2,
\end{eqnarray}
where the model parameters are denoted by $\vtheta = \{\mW_{KQ}, \mW_{V} \}$. To analyze the learning dynamics, we consider the population risk induced in the limit as the number of training prompts tends to infinity, i.e., $B \to \infty$:
\begin{eqnarray}
    \label{definition of loss function}
    L(\vtheta) = \lim_{B\to\infty}l(\vtheta) =  \frac{1}{2}\E_{\vw_{n+1}, \vx_{n+1}}[ (\wh y_{n+1} - \<\vw_{n+1}, \vx_{n+1}  \>)^2 ],
\end{eqnarray}
where we omit the task index $\tau$ for notational simplicity.

We study the evolution of the model parameters under gradient flow, which characterizes the continuous-time limit of gradient descent with infinitesimal step sizes. The parameter dynamics are governed by the ordinary differential equation $\frac{\text{d}\vtheta}{\text{d}t}  = -\nabla L(\vtheta)$. To facilitate the analysis, following the approach of \citep{zhang2024trained}, we introduce an initialization scheme that satisfies the following assumption.
\begin{assum} (Initialization)
\label{Assumption of initialization}
Let $\sigma>0$ be a parameter and $\mTheta \in\R^{d\times d}$ be any matrix satisfying $\|\mTheta\mTheta^\top\|_F=1$ and $\mLambda\mTheta \neq {\bm 0}_{d\times d}\in\R^{d\times d}$. We assume
\begin{eqnarray}
    \label{out of WV initialization}
    \mW_V(0) = \sigma\begin{bmatrix}{\bm 0}_{d\times d} &   {\bm 0}_{d} \\ {\bm 0}_{d}^\top   & 1 \end{bmatrix}\in\R^{(d+1)\times (d+1)} \ \ \text{and} \ \ \mW_{KQ}(0) = \sigma\begin{bmatrix}\mTheta\mTheta^\top &   {\bm 0}_{d} \\ {\bm 0}_{d}^\top   & 0 \end{bmatrix}\in\R^{(d+1)\times (d+1)}.
\end{eqnarray}
\end{assum}
Under this setup, the following result establishes that the gradient flow dynamics with respect to the population loss converge to a specific global optimum.
\begin{theorem} (Convergence of gradient flow)
\label{Theorem of global minimum simplified}
Consider gradient flow over the population loss in \eqref{definition of loss function}. Assume that $\gamma<1$, the initial task weight $\vw_0\overset{\text{i.i.d.}}{\sim}\calN({\bm 0}, \sigma_w^2\mId)$, noises $\ve_i\overset{\text{i.i.d.}}{\sim}\calN({\bm 0}, \sigma_e^2\mId)$ and inputs $\vx_i\overset{\text{i.i.d.}}{\sim}\calN({\bm 0}, {\bm \Lambda})$.  Suppose the initialization satisfies {Assumption} \ref{Assumption of initialization}  with
initialization scale $\sigma>0$  satisfying $\sigma < \sqrt{\frac{2D_1}{\sqrt{d}\|\wt\mLambda\|}}$ where
\begin{eqnarray*}
D_1 =\begin{cases}
       \lambda^{2n+2}n\sigma_w^2 + \bigg(\frac{\lambda^2( 1 - \lambda^{2n}) }{(1-\lambda^2)^2} - \frac{\lambda^{2n+2}}{1-\lambda^2}n \bigg)\sigma_e^2, &  \lambda = \gamma,\\
    \frac{\lambda^{n+1}\gamma^{n+2} - \lambda \gamma^{2n+2} }{\lambda - \gamma}\sigma_w^2 +\bigg( \frac{\lambda\gamma( 1 - \lambda^n\gamma^n) }{(1-\gamma^2)(1-\lambda\gamma)} - \frac{\lambda^{n+1}\gamma^{n+2} - \lambda \gamma^{2n+2} }{(\lambda - \gamma)(1-\gamma^2)}  \bigg)\sigma_e^2, &  \lambda \neq \gamma,
\end{cases}
\end{eqnarray*}
and $\wt\mLambda = D_2(2\mLambda +\trace(\mLambda)\mId ) + D_3\mLambda $ with
\begin{eqnarray*}
D_2 =\begin{cases}
     \lambda^{2n+2}n\sigma_w^2 - (\frac{n\lambda^{2n+2}}{1-\lambda^2} - \frac{\lambda^4 - \lambda^{2n+2}}{(1- \lambda^2 )^2}  )\sigma_e^2, &  \lambda = \gamma,\\
    \frac{\gamma^2\lambda^{2n+2} - \lambda^2 \gamma^{2n+2}}{\lambda^2 - \gamma^2}\sigma_w^2 - (\frac{\gamma^2\lambda^{2n+2} - \lambda^2 \gamma^{2n+2}}{(\lambda^2 - \gamma^2)(1-\gamma^2)} - \frac{\lambda^2 - \lambda^{2n+2}}{(1-\gamma^2)(1-\lambda^2)}    )\sigma_e^2, &  \lambda \neq \gamma,
\end{cases}
\end{eqnarray*}
and
{\small\begin{eqnarray*}
D_3 =  \begin{cases}

    \lambda^{2n+2}n(n-1)\sigma_w^2 + (\frac{2n(\lambda^4 - \lambda^{2n+2}) }{(1-\lambda^2)^2} - \frac{2(\lambda^{2n+4}  - n\lambda^6 + (n-1)\lambda^4)}{(1-\lambda^2)^3}
     - \frac{\lambda^{2n+2}n(n-1)}{1-\lambda^2} )\sigma_e^2, &  \lambda = \gamma,\\
    (\frac{2\gamma^3\lambda^{2n+3} - 2\lambda^5\gamma^{2n+1}}{\lambda(\lambda - \gamma)^2(\lambda + \gamma)} - \frac{2\gamma^{n+2}\lambda^{n+2} - 2\gamma^{2n+1}\lambda^3}{\lambda - \gamma}     )\sigma_w^2  +  (\frac{2\gamma^{-1}(\lambda^4 - \lambda^{2n+2})}{(1-\gamma^2)(\lambda - \gamma)(1-  \lambda^2)}\\
    - \frac{2(\lambda^{3} - \lambda^{n+2}\gamma^{n-1})}{(1 - \gamma^2)(\lambda - \gamma)(1 - \lambda\gamma)} - \frac{2\gamma^3\lambda^{2n+3} - 2\lambda^5\gamma^{2n+1}}{\lambda(\lambda - \gamma)^2(\lambda + \gamma)(1 - \gamma^2)} + \frac{2\gamma^{n+2}\lambda^{n+2} - 2\gamma^{2n+1}\lambda^3}{(\lambda - \gamma)(1 - \gamma^2)}    )\sigma_e^2, &  \lambda \neq \gamma.
\end{cases}
\end{eqnarray*}
}
Then gradient flow converges to a global minimum of the population loss \eqref{definition of loss function}. Moreover, $\mW_{KQ}(0)$ and $\mW_V(0)$ respectively converge to
{\small\begin{eqnarray}
    \label{out of WV global minimum simplified}
    \lim_{t\to\infty}\mW_V(t) = \sqrt{D_1\|\wt\mLambda^{-1} \|_F}\begin{bmatrix}{\bm 0}_{d\times d} &   {\bm 0}_{d} \\ {\bm 0}_{d}^\top   & 1 \end{bmatrix} \ \ \text{and} \ \ \lim_{t\to\infty}\mW_{KQ}(t) = \sqrt{D_1\|\wt\mLambda^{-1} \|_F^{-1}}\begin{bmatrix}\wt\mLambda^{-1} &   {\bm 0}_{d} \\ {\bm 0}_{d}^\top   & 0 \end{bmatrix}.
\end{eqnarray}}
\end{theorem}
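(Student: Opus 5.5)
Our plan follows the strategy of \citep{zhang2024trained}, adapted to the weighted Gram matrix $\mH=\sum_{i=1}^{n+1}\lambda^{n+1-i}\vz_i\vz_i^\top$.

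\textbf{Step 1: invariant structure.} Write $\vu=\vw_{21}^V$, $v=w_{-1}^V$, $\mU=\mW_{11}^{KQ}$, $\vu'=\vw_{21}^{KQ}$. By \eqref{output of y},
\[
\wh y_{n+1}= \vu^\top\Big(\sum_i \lambda^{n+1-i}\vx_i\vx_i^\top\Big)\mU\vx_{n+1}+v\sum_{i\le n}\lambda^{n+1-i}y_i\vx_i^\top\mU\vx_{n+1}+(\text{terms involving }\vu').
\]
We show that the set $\{\vu=\bm 0,\ \vu'=\bm 0\}$ is invariant under the gradient flow. This holds because $\partial L/\partial\vu$ and $\partial L/\partial\vu'$ evaluated there are expectations of odd-degree Gaussian moments in the pair $(\vx,\vw)$ (using the symmetry $\vw_0\to-\vw_0$, $\ve_i\to-\ve_i$), so they vanish. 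The initialization of Assumption \ref{Assumption of initialization} lies in this set. Hence the dynamics reduce to $(v(t),\mU(t))$ with prediction $\wh y=v\,\vy^\top\mathrm{diag}(\lambda^{n+1-i})\mX^\top\mU\vx_{n+1}$.

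\textbf{Step 2: closed-form loss.} We compute $L(v,\mU)=\tfrac12\E[\wh y^2]-\E[\wh y\,y_{n+1}]+\text{const}$. Using $\vw_i=\gamma^{i}\vw_0+\sum_{k\le i}\gamma^{i-k}\ve_k$, we get
\[
\E[\vw_i\vw_j^\top]=\Big(\gamma^{i+j}\sigma_w^2+\gamma^{|i-j|}\tfrac{1-\gamma^{2\min(i,j)}}{1-\gamma^2}\sigma_e^2\Big)\mId .
\]
Combined with Gaussian fourth moments $\E[\vx\vx^\top\mA\vx\vx^\top]=\mLambda(\mA+\mA^\top)\mLambda+\trace(\mA\mLambda)\mLambda$, this gives
\[
\E[\wh y\,y_{n+1}]=v\,D_1\,\trace(\mLambda\mU\mLambda),\qquad \E[\wh y^2]=v^2\trace\big(\mU^\top\wt\mLambda\,\mU\mLambda\big)\ \text{(up to the scaling of $\mLambda$ in $\wt\mLambda$).}
\]
Here the diagonal sums $\sum_i\lambda^{2(n+1-i)}\E\|\vw_i\|^2$ give $D_2$, and the cross sums $i\ne j$ give $D_3$. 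Summing the geometric series, with the separate case $\lambda=\gamma$, reproduces the stated $D_1,D_2,D_3$. This is bookkeeping, and we expect it to be the most error-prone part.

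\textbf{Step 3: dynamics.} With $L=\tfrac{v^2}{2}\langle \mU,\wt\mLambda\mU\mLambda\rangle-vD_1\langle\mLambda,\mU\mLambda\rangle$, gradient flow conserves $v^2-\|\mU\|_F^2$. At initialization this quantity equals $\sigma^2-\sigma^2\|\mTheta\mTheta^\top\|_F^2=0$, so $v^2=\|\mU\|_F^2$ for all $t$. The condition $\sigma^2<2D_1/(\sqrt d\|\wt\mLambda\|)$ ensures $L(0)<0$, because
\[
\tfrac{\sigma^4}{2}\langle\Theta\Theta^\top,\wt\mLambda\Theta\Theta^\top\mLambda\rangle<\sigma^2D_1\trace(\mLambda\Theta\Theta^\top\mLambda),
\]
using $\mLambda\mTheta\ne0$. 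Since $L$ is nonincreasing, $v(t)$ stays bounded away from $0$ and keeps its sign, and the trajectory avoids the saddle at the origin. The loss is coercive on the balanced manifold, so trajectories are bounded and converge to critical points by a Łojasiewicz/analyticity argument.

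On $v\ne0$, any critical point satisfies $v\wt\mLambda\mU\mLambda=D_1\mLambda^2$, so $v\mU=D_1\wt\mLambda^{-1}$, assuming $\wt\mLambda\succ0$ and $\mLambda$ invertible. Combined with $v^2=\|\mU\|_F^2$, this yields $v^2=D_1\|\wt\mLambda^{-1}\|_F$ and $\mU=\sqrt{D_1\|\wt\mLambda^{-1}\|_F^{-1}}\,\wt\mLambda^{-1}$, which is the stated limit. This point minimizes $L$ globally over the product $v\mU$, which is the global minimum.

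\textbf{Main obstacle.} The hardest step is ruling out convergence to a degenerate critical point, and in particular dealing with $\mLambda$ singular, where only $\mLambda\mU\mLambda$ is determined. Following \citep{zhang2024trained}, we would first try tracking $\mU(t)$ in the span structure started at $\mTheta\mTheta^\top$.
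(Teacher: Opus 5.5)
Your proposal is correct. Up to the last step it matches the paper: invariance of $\vw_{21}^V=\vw_{21}^{KQ}=\bm 0$ by the $\vw\mapsto-\vw$ symmetry; reduction to a loss in $(v,\mU)=(w_{-1}^V,\mW_{11}^{KQ})$ that depends only on $v\mU$; conservation of $v^2-\|\mU\|_F^2=0$; and identification of the limit from $v\mU=D_1\wt\mLambda^{-1}$ together with $v^2=\|v\mU\|_F$.

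Where you differ is the convergence mechanism.
\begin{itemize}
\item The paper proves an explicit lower bound $v(t)\ge c_0>0$. With $\mtx{E}=v\mLambda^{1/2}\mU\mLambda^{1/2}-D_1\mLambda\wt\mLambda^{-1}$, it then uses $\|\partial_{\mU}L\|_F^2=v^2\|\wt\mLambda\mLambda^{1/2}\mtx{E}\mLambda^{1/2}\|_F^2$ and $L-L^\star=\tfrac12\|\wt\mLambda^{1/2}\mtx{E}\|_F^2$. These combine into a Polyak--\L{}ojasiewicz inequality, giving $L(t)-L^\star\le e^{-\alpha t}(L(0)-L^\star)$ with explicit $\alpha$. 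The parameter limits follow because the gap controls $v\mU-D_1\wt\mLambda^{-1}$ and $v=\sqrt{\|v\mU\|_F}$.
\item You obtain $|v|$ bounded away from zero qualitatively, from $\wt L(0)<0$ and monotonicity. This is cleaner, and it only needs $\sigma^2\|\wt\mLambda\|<2D_1$, because $\trace(\wt\mLambda\mLambda\mtx{P}\mLambda\mtx{P})\le\|\wt\mLambda\|\trace(\mLambda^2\mtx{P})$ for $\mtx{P}=\mTheta\mTheta^\top$; the $\sqrt d$ is slack. You then use coercivity plus \L{}ojasiewicz's theorem for analytic gradient flows.
\end{itemize}
Your route costs an a priori boundedness argument and gives no rate; the paper's gives linear convergence directly. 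Your lower bound on $|v|$ already plugs into the paper's PL argument, so you could drop the \L{}ojasiewicz step.

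Two small corrections.
\begin{itemize}
\item The quadratic term is $\E[\wh y^2]=v^2\trace(\wt\mLambda\mLambda\mU\mLambda\mU^\top)$: the second moment of $\sum_{i\le n}\lambda^{n+1-i}\vx_i\vx_i^\top\vw_i$ is $\wt\mLambda\mLambda$, not $\wt\mLambda$. The critical-point equation is therefore $v\wt\mLambda\mLambda\mU\mLambda=D_1\mLambda^2$. This still gives $v\mU=D_1\wt\mLambda^{-1}$, since $\wt\mLambda$ and $\mLambda$ commute.
\item Your ``main obstacle'' is not one. Degenerate critical points are already excluded by your lower bound on $|v|$. Singular $\mLambda$ cannot be rescued by span tracking, because the statement fails there. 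The velocity $\frac{\text{d}\mU}{\text{d}t}$ has row and column spaces inside $\mathrm{range}(\mLambda)$. So $\mU-\mtx{P}_{\mLambda}\mU\mtx{P}_{\mLambda}$ (with $\mtx{P}_{\mLambda}$ the projector onto $\mathrm{range}(\mLambda)$) stays frozen at its value at $\sigma\mTheta\mTheta^\top$, and the limit is generally not a multiple of $\wt\mLambda^{-1}$. The paper tacitly takes $\mLambda\succ0$ (its PL constant involves $\trace(\mLambda^{-1})$), and your coercivity step needs this too, so you should assume it.
\end{itemize}
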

\vspace{-0.25cm}
The proof is deferred to {Appendix}~\ref{proof of Theorem of global minimum}. Despite the non-stationary nature of the regression model considered in this work, we establish that gradient flow converges to a global minimum even under random initialization. The closed-form solution in \eqref{out of WV global minimum simplified} reveals that the location of the global optimum is explicitly determined by $\lambda$ and $\gamma$, highlighting their structural influence on the solution. While the main theorem focuses on the regime $0 < \lambda \leq 1$ and $0 < \gamma < 1$, a more general result accommodating arbitrary $\lambda > 0$ and $\gamma > 0$ is established in \Cref{Theorem of global minimum} of {Appendix}~\ref{proof of Theorem of global minimum}. Moreover, in the limiting case where $\lambda = \gamma = 1$ and $\sigma_e^2 = 0$, the result reduces precisely to that in \citep[Theorem 4]{zhang2024trained}, thereby recovering the stationary setting as a special case of our more general formulation.

\vspace{-0.2cm}
\paragraph{Training error} We now analyze the training error of the learned network. At the global optimum--i.e., when the parameters converge to $\lim_{t\to\infty}\mW_V(t)$ and $\lim_{t\to\infty}\mW_{KQ}(t)$ in \eqref{out of WV global minimum simplified}, a straightforward calculation yields the prediction $\wh y_{n+1}$ as follows:
\begin{eqnarray}
    \label{output of y global minimum}
    \wh y_{n+1}  \! = \! D_1 \begin{bmatrix} {\bm 0}_{d}^\top   & 1 \end{bmatrix} \bigg(\sum_{i=1}^{n+1} \lambda^{n+1-i} \vz_i\vz_i^\top \bigg) \begin{bmatrix}\wt\mLambda^{-1}  \\ {\bm 0}_{d}^\top   \end{bmatrix} \vx_{n+1}
     =  D_1 \bigg( \sum_{i=1}^n\lambda^{n+1-i} \vw_i^\top\vx_i\vx_i^\top \bigg)\wt\mLambda^{-1} \vx_{n+1}.
\end{eqnarray}
This expression confirms that, for sufficiently long prompts, the trained model successfully in-context learns the family of linear predictors. We emphasize that both $\lambda$ and $\gamma$ jointly influence the degree of time variation in the underlying model. We next quantify the training error at the global optimum.
\begin{theorem} (Training error)
\label{theorem recovery error using global minimum simplified}
Assuming the conditions in \Cref{Theorem of global minimum simplified} hold, the recovery error between \eqref{output of y global minimum} and \eqref{definition of y} is
\begin{eqnarray}
    \label{recovery error final global minimum simplified}
    \E[(\wh y_{n+1} - y_{n+1})^2] &\!\!\!\! = \!\!\!\!& D_1^2\trace\big(D_2(\mLambda\trace(\wt\mLambda^{-1}\mLambda\wt\mLambda^{-1}\mLambda ) + 2\mLambda \wt\mLambda^{-1}\mLambda \wt\mLambda^{-1}\mLambda)\nonumber\\
    &\!\!\!\! \!\!\!\!& + D_3 \mLambda \wt\mLambda^{-1}\mLambda \wt\mLambda^{-1}\mLambda  \big) + D_4\trace(\mLambda) - 2D_1^2\trace(\mLambda \wt\mLambda^{-1}\mLambda ),
\end{eqnarray}
where $D_4 = \gamma^{2n+2}\sigma_w^2 + \frac{1- \gamma^{2n+2}}{1-\gamma^2 }\sigma_e^2$.
\end{theorem}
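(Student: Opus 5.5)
Starting from the global-optimum predictor \eqref{output of y global minimum}, I will write $\wh y_{n+1} = D_1\,\vu^\top \wt\mLambda^{-1}\vx_{n+1}$ with $\vu = \sum_{i=1}^n \lambda^{n+1-i}\vx_i\vx_i^\top\vw_i$, and $y_{n+1} = \vw_{n+1}^\top\vx_{n+1}$. Expanding the square gives three terms:
\begin{equation*}
\E[(\wh y_{n+1}-y_{n+1})^2] = D_1^2\,\E[\vu^\top\wt\mLambda^{-1}\vx_{n+1}\vx_{n+1}^\top\wt\mLambda^{-1}\vu] - 2D_1\E[\vu^\top\wt\mLambda^{-1}\vx_{n+1}\vx_{n+1}^\top\vw_{n+1}] + \E[(\vw_{n+1}^\top\vx_{n+1})^2].
\end{equation*}
Since $\vx_{n+1}$ is independent of everything else, I will first condition on it and replace $\vx_{n+1}\vx_{n+1}^\top$ by $\mLambda$. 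The last term is then $\trace(\mLambda\,\E[\vw_{n+1}\vw_{n+1}^\top])$. Unrolling \eqref{definition of w} gives $\vw_{n+1} = \gamma^{n+1}\vw_0 + \sum_{j=1}^{n+1}\gamma^{n+1-j}\ve_j$, so $\E[\vw_{n+1}\vw_{n+1}^\top] = D_4\mId$ and this term equals $D_4\trace(\mLambda)$.

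For the cross term, I will condition on the weights and use the independence of $\vx_i$ ($i\le n$) from $\vx_{n+1}$, so that $\E[\vx_i\vx_i^\top]=\mLambda$. The covariances $\E[\vw_i\vw_{n+1}^\top] = (\gamma^{n+1+i}\sigma_w^2 + \gamma^{n+1-i}\tfrac{1-\gamma^{2i}}{1-\gamma^2}\sigma_e^2)\mId$ then reduce the term to $-2D_1\big(\sum_i \lambda^{n+1-i}c_i\big)\trace(\mLambda\wt\mLambda^{-1}\mLambda)$. The scalar sum is a geometric-type sum, and I will check it equals $D_1$, splitting into the cases $\lambda=\gamma$ and $\lambda\ne\gamma$. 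This should match because $D_1$ is the same quantity that appears in the gradient-flow stationarity equation of \Cref{Theorem of global minimum simplified}. The result is $-2D_1^2\trace(\mLambda\wt\mLambda^{-1}\mLambda)$.

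The quadratic term is the main obstacle. Setting $\mA=\wt\mLambda^{-1}\mLambda\wt\mLambda^{-1}$, I need $\E[\vu^\top\mA\vu] = \sum_{i,j}\lambda^{2n+2-i-j}\E[\vw_i^\top\vx_i\vx_i^\top\mA\vx_j\vx_j^\top\vw_j]$.
\begin{itemize}
\item For $i\ne j$, the $\vx$'s are independent, giving $\E[\vw_i^\top\mLambda\mA\mLambda\vw_j]$. With $\E[\vw_i\vw_j^\top]=c_{ij}\mId$, this is $c_{ij}\trace(\mLambda\mA\mLambda)$.
\item For $i=j$, I will use the Gaussian fourth-moment identity $\E[\vx\vx^\top\mA\vx\vx^\top] = 2\mLambda\mA\mLambda + \trace(\mA\mLambda)\mLambda$, which gives $c_{ii}\big(2\trace(\mLambda\mA\mLambda)+\trace(\mA\mLambda)\trace(\mLambda)\big)$.
\end{itemize}
Since $\trace(\mA\mLambda)\trace(\mLambda) = \trace\big(\mLambda\,\trace(\wt\mLambda^{-1}\mLambda\wt\mLambda^{-1}\mLambda)\big)$, the diagonal weight $\sum_i\lambda^{2(n+1-i)}c_{ii}$ should reproduce $D_2$. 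The off-diagonal weight $\sum_{i\ne j}\lambda^{2n+2-i-j}c_{ij}$, with $c_{ij}=\gamma^{|i-j|}\mathrm{Var}$ of the earlier index, should reproduce $D_3$, matching the structure of \eqref{recovery error final global minimum simplified}.

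The labor is in verifying these double geometric sums in both cases $\lambda=\gamma$ and $\lambda\ne\gamma$. I expect that the proof of \Cref{Theorem of global minimum simplified} already computes exactly these sums, since $\wt\mLambda = D_2(2\mLambda+\trace(\mLambda)\mId)+D_3\mLambda$ is the same second-moment matrix $\E[\vu\vu^\top]$. So I plan to cite those identities rather than recompute them. Indeed, $\E[\vu\vu^\top]=\wt\mLambda$ directly gives the quadratic term $D_1^2\trace(\mA\wt\mLambda)$, and expanding $\wt\mLambda$ then yields the stated formula.
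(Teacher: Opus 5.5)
Your three-term expansion, the evaluations $D_4\trace(\mLambda)$ and $-2D_1^2\trace(\mLambda\wt\mLambda^{-1}\mLambda)$, and your bullet computation of the quadratic term are correct. That computation uses the covariances $c_{ij}$, independence for $i\neq j$, and the Gaussian fourth-moment identity for $i=j$, then identifies the diagonal and off-diagonal weighted sums with $D_2$ and $D_3$; this is exactly the paper's proof.

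The step that is wrong is your closing shortcut. The matrix computed in the gradient-flow analysis, \eqref{statisctis of fourth-order Gaussian vector}, is $\wt\mLambda\mLambda$, not $\wt\mLambda$. Your own bullets show this:
\[
\E[\vu\vu^\top]=\sum_{i,j}\lambda^{2n+2-i-j}c_{ij}\,\E[\vx_i\vx_i^\top\vx_j\vx_j^\top]=D_2\big(2\mLambda^2+\trace(\mLambda)\mLambda\big)+D_3\mLambda^2=\wt\mLambda\mLambda .
\]
This is why $\wt\mLambda\mLambda\mU_{11}\mLambda$ appears in the $\mU_{11}$ dynamics. If you use $\E[\vu\vu^\top]=\wt\mLambda$, you get $D_1^2\trace(\mA\wt\mLambda)=D_1^2\trace(\wt\mLambda^{-1}\mLambda)$. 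Expanding $\wt\mLambda$ there gives $D_1^2\big[D_2\big(2\trace(\mA\mLambda)+\trace(\mLambda)\trace(\mA)\big)+D_3\trace(\mA\mLambda)\big]$. Every term is one factor of $\mLambda$ short of the statement: $\trace(\mA\mLambda)$ should be $\trace(\mLambda\mA\mLambda)$, and $\trace(\mA)$ should be $\trace(\mA\mLambda)$. So ``expanding $\wt\mLambda$ then yields the stated formula'' is false unless $\mLambda=\mId$. You say you will cite the gradient-flow identities rather than recompute them, so this misidentification is exactly the step you would rely on.

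With the correct identity, the shortcut works and is cleaner than the paper's term-by-term expansion. The quadratic term becomes $D_1^2\trace(\wt\mLambda^{-1}\mLambda\wt\mLambda^{-1}\,\wt\mLambda\mLambda)=D_1^2\trace(\mLambda\wt\mLambda^{-1}\mLambda)$. Expanding $\wt\mLambda\mLambda$ reproduces the displayed expression, and the whole error collapses to $D_4\trace(\mLambda)-D_1^2\trace(\mLambda\wt\mLambda^{-1}\mLambda)$. This equals $2\min L$, consistent with \Cref{Corollary: minimum of Different loss function} since $L-\wt L=\tfrac12\E[y_{n+1}^2]=\tfrac12 D_4\trace(\mLambda)$. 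It also agrees with the $\mLambda=\mId$ specialization in the main text.

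One caution for the verification you plan. What the argument needs is that $D_2$ and $D_3$ are the double sums $\sum_i\lambda^{2(n+1-i)}c_{ii}$ and $\sum_{i\neq j}\lambda^{2n+2-i-j}c_{ij}$, the same constants that enter $\wt\mLambda$. Some printed closed forms do not match these sums. For instance, at $\lambda=\gamma$ and $n=1$ the printed $D_2$ has $\sigma_e^2$-coefficient $-\lambda^4/(1-\lambda^2)$ instead of $\lambda^2$. Treat the sums, not the closed forms, as the definitions.
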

The proof is provided in {Appendix}~\ref{proof of Theorem of recovery error global minimum}. Equation~\eqref{recovery error final global minimum simplified} illustrates that the training error depends jointly on the parameters $\lambda$ and $\gamma$. Consequently, for fixed $\gamma$, there exists an optimal value of $\lambda$ that minimizes the error. Although the expressions of $D_i$ suggest a symmetric structure in $\lambda$ and $\gamma$, it does not necessarily imply that choosing $\lambda=\gamma$ minimizes the recovery error. In fact, the error involves a subtle balance between the $\sigma_w^2$- and $\sigma_e^2$-dependent terms as well as the trace terms with $\wt\Lambda^{-1}$. When $\lambda=\gamma$, the simplification of $D_i$ may amplify certain noise-dependent factors and deteriorate the overall error. This observation highlights that the optimal choice of $\lambda$ depends not only on the apparent algebraic symmetry but also on the interplay between noise statistics, system dimension, and the spectral structure of $\mLambda$.

We now consider a special case with $\mLambda = \mId$, in which the training error in \eqref{recovery error final global minimum simplified} reduces to $\E[(\wh y_{n+1} - y_{n+1})^2]= dD_4 - \frac{d D_1^2 }{(2+d)D_2+D_3} $. The first term $dD_4$ only depends on the autoregressive process and is independent to the choice of $\lambda$. To study how the second term varies according to $\lambda$, assume $\gamma<1$ is fixed and $d$ is sufficiently large, such that $\frac{d D_1^2 }{(2+d)D_2+D_3} \approx D_1^2/D_2=:\xi(\lambda)$. To further illustrate how $\xi(\lambda)$ might vary with $\lambda$, we consider two particular cases. Case I: $\sigma_e \ll \sigma_w$. In this regime, $D_1$ and $D_2$ are respectively dominated by $\frac{\lambda^{n+1}\gamma^{n+2} - \lambda \gamma^{2n+2} }{\lambda - \gamma}\sigma_w^2$ and $ \frac{\gamma^2\lambda^{2n+2} - \lambda^2 \gamma^{2n+2}}{\lambda^2 - \gamma^2}\sigma_w^2$. Consequently, $\xi(\lambda)$ can be well approximated by $\frac{\gamma^{2n+4}(\lambda^n - \gamma^n)(\lambda + \gamma)}{(\lambda^n + \gamma^n)(\lambda - \gamma)}\sigma_w^2$, which increases monotonically with $\lambda$ on $(0,\gamma)$ and then decreases monotonically on $(\gamma,1)$. Case II: $n$ is sufficiently large such that $D_1$ and $D_2$ are dominated by $ \frac{\lambda\gamma}{(1-\gamma^2)(1-\lambda\gamma)}\sigma_e^2$ and $ \frac{\lambda^2 - \lambda^{2n+2}}{(1-\gamma^2)(1-\lambda^2 )}\sigma_e^2$, respectively. In this case, $\xi(\lambda)\approx\frac{\gamma^2(1-\lambda^2)}{(1-\gamma^2)(1-\lambda\gamma)^2(1 - \lambda^{2n})}\sigma_e^2$, which likewise increases monotonically with $\lambda$ on $(0,\gamma)$ and then decreases monotonically on $(\gamma,1)$. Although these results are derived under simplifying approximations, they suggest that the training loss exhibits an inverse U-shaped dependence on $\lambda$, attaining its minimum at some $\lambda < 1$ rather than at $\lambda = 1$. The subsequent experiments provide direct validation of these theoretical predictions.


\vspace{-0.2cm}
\paragraph{Testing error}  In this part, we characterize the prediction performance of the trained transformer when evaluated on a test prompt drawn from a potentially different task distribution. Notably, the model parameters are fixed at their global optimum obtained from training, and the test prompt may differ in its length, data distribution, and underlying dynamics. We consider test prompts of the form
\begin{eqnarray}
    \label{definition of input prompt sample test}
    \ol\mZ = \begin{bmatrix}\ol\vz_{1} & \cdots & \ol\vz_{m} & \ol\vz_{m+1} \end{bmatrix}&\!\!\!\!=\!\!\!\!& \begin{bmatrix}\ol\vx_1 & \cdots & \ol\vx_m & \ol\vx_{m+1} \\ \ol y_1 & \cdots & \ol y_m & 0 \end{bmatrix} \nonumber\\
    &\!\!\!\!=\!\!\!\!&\begin{bmatrix}\ol\vx_{1} & \cdots & \ol\vx_{m} & \ol\vx_{m+1} \\ \<\ol\vw_{1}, \ol\vx_{1}  \> & \cdots & \<\ol\vw_{m}, \ol\vx_{m}  \> & 0 \end{bmatrix},
\end{eqnarray}
where the latent task weights$\{ \ol\vw_i \}_{i=1}^{m+1}$ evolve according to the first-order autoregressive model $\ol\vw_i = \ol\gamma \cdot \ol\vw_{i-1} + \ol\ve_i, i=1,\dots, m+1$. To distinguish between training and testing distributions, we assume that the initial weight vector satisfies $\ol\vw_0\overset{\text{i.i.d.}}{\sim}\calN({\bm 0}, \ol\sigma_w^2\mId)$, and the driving noise $\ol\ve_i\overset{\text{i.i.d.}}{\sim}\calN({\bm 0}, \ol\sigma_e^2\mId)$. The inputs are drawn independently as $\ol\vx_i\overset{\text{i.i.d.}}{\sim}\calN({\bm 0}, \ol{\bm \Lambda})$, and we assume mutual independence among  random variables $\ol \vw_{i-1}$, $\ol \ve_i$, and $\ol \vx_i$.

Given a forgetting factor $\ol\lambda$, the prediction $\wt y_{m+1}$ produced by the model at test time (evaluated at the training global optimum) is
\begin{eqnarray}
    \label{output of y global minimum test}
    \wt y_{m+1}   =   D_1 \big( \sum_{i=1}^m\ol\lambda^{m+1-i} \ol\vw_i^\top\ol\vx_i\ol\vx_i^\top \big)\wt\mLambda^{-1} \ol\vx_{m+1}.
\end{eqnarray}

\vspace{-0.25cm}

We now characterize the mean squared prediction error on the test prompt:
\begin{theorem} (Testing error)
\label{theorem recovery error using global minimum test}
Under the assumptions in \Cref{Theorem of global minimum}, the expected prediction error of the model on the test prompt is given by
\begin{eqnarray}
    \label{recovery error final global minimum test}
    \hspace{-0.5cm}\E[(\wt y_{m+1} - \ol y_{m+1})^2] &\!\!\!\! = \!\!\!\!& D_1^2\trace\big(\ol D_2(\ol\mLambda\trace(\wt\mLambda^{-1}\ol\mLambda\wt\mLambda^{-1}\ol\mLambda ) + 2\ol\mLambda \wt\mLambda^{-1}\ol\mLambda \wt\mLambda^{-1}\ol\mLambda)\nonumber\\
    &\!\!\!\! \!\!\!\!&\hspace{-0.5cm} + \ol D_3 \ol\mLambda \wt\mLambda^{-1}\ol\mLambda \wt\mLambda^{-1}\ol\mLambda\big) + \ol D_4\trace(\ol\mLambda) - 2D_1\cdot \ol D_1\trace(\ol\mLambda \wt\mLambda^{-1}\ol\mLambda ),
\end{eqnarray}
where $\ol D_i$ for $i = 1,\dots, 4$ are defined analogously to the $D_i$ constants from training, with the substitution $\lambda \to \ol\lambda$, $\gamma \to \ol\gamma$, $\sigma_w^2 \to \ol\sigma_w^2$, $\sigma_e^2 \to \ol\sigma_e^2$, and $n \to m$.
\end{theorem}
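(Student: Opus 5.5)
Expanding the square, the error splits into three expectations:
\[
\E[(\wt y_{m+1}-\ol y_{m+1})^2]=\E[\wt y_{m+1}^2]-2\E[\wt y_{m+1}\ol y_{m+1}]+\E[\ol y_{m+1}^2].
\]
We compute each term in closed form, reusing the moment computations behind \Cref{theorem recovery error using global minimum simplified}. The only changes are that the training statistics $(\lambda,\gamma,\sigma_w^2,\sigma_e^2,n,\mLambda)$ become the test statistics $(\ol\lambda,\ol\gamma,\ol\sigma_w^2,\ol\sigma_e^2,m,\ol\mLambda)$. The fixed matrix $\wt\mLambda^{-1}$ and the prefactor $D_1$ come from the training optimum and stay untouched.

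Write $\mA:=\wt\mLambda^{-1}$ and $\vu:=\sum_{i=1}^m\ol\lambda^{m+1-i}\ol\vx_i\ol\vx_i^\top\ol\vw_i$, so that $\wt y_{m+1}=D_1\vu^\top\mA\ol\vx_{m+1}$. The query $\ol\vx_{m+1}$ is independent of $\vu$ and of $\ol\vw_{m+1}$.

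\textbf{Last term.} Conditioning on $\ol\vw_{m+1}$ gives $\E[\ol y_{m+1}^2]=\trace(\ol\mLambda)\,\E\|\ol\vw_{m+1}\|^2/d$. Unrolling the AR(1) recursion gives $\E[\ol\vw_{m+1}\ol\vw_{m+1}^\top]=(\ol\gamma^{2m+2}\ol\sigma_w^2+\frac{1-\ol\gamma^{2m+2}}{1-\ol\gamma^2}\ol\sigma_e^2)\mId=\ol D_4\mId$. Hence this term equals $\ol D_4\trace(\ol\mLambda)$.

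\textbf{Cross term.} Averaging over $\ol\vx_{m+1}$ gives $\E[\wt y_{m+1}\ol y_{m+1}]=D_1\E[\vu^\top\mA\ol\mLambda\ol\vw_{m+1}]$. Since $\ol\vx_i$ is independent of the weights, $\E[\ol\vx_i\ol\vx_i^\top\ol\vw_i\ol\vw_{m+1}^\top]=\ol\mLambda\,\E[\ol\vw_i\ol\vw_{m+1}^\top]$. The AR covariance is $\E[\ol\vw_i\ol\vw_{m+1}^\top]=\ol\gamma^{m+1-i}c_i\mId$, where $c_i=\ol\gamma^{2i}\ol\sigma_w^2+\frac{1-\ol\gamma^{2i}}{1-\ol\gamma^2}\ol\sigma_e^2$. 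Summing the resulting geometric-type series $\sum_i(\ol\lambda\ol\gamma)^{m+1-i}c_i$ reproduces exactly the expression defining $D_1$ under the substitution, i.e.\ $\ol D_1$, in both the $\ol\lambda=\ol\gamma$ and $\ol\lambda\ne\ol\gamma$ cases. Therefore the cross term contributes $-2D_1\ol D_1\trace(\ol\mLambda\mA\ol\mLambda)$.

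\textbf{Quadratic term.} This is the main obstacle. Averaging over $\ol\vx_{m+1}$ gives $\E[\wt y_{m+1}^2]=D_1^2\,\E[\vu^\top\mA\ol\mLambda\mA\vu]$. Expanding $\vu\vu^\top$ as a double sum over $i,j$ splits it into two groups.
\begin{itemize}
\item \emph{Diagonal terms $i=j$.} These require the fourth Gaussian moment $\E[\vx\vx^\top\mB\vx\vx^\top]=2\ol\mLambda\mB\ol\mLambda+\trace(\mB\ol\mLambda)\ol\mLambda$, applied with $\mB=\ol\vw_i\ol\vw_i^\top$ after conditioning on the weights. Averaging over the isotropic weight covariance $c_i\mId$ yields $c_i\,(2\ol\mLambda^2+\trace(\ol\mLambda)\ol\mLambda)$. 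This produces the $(\ol\mLambda\trace(\mA\ol\mLambda\mA\ol\mLambda)+2\ol\mLambda\mA\ol\mLambda\mA\ol\mLambda)$ structure with coefficient $\sum_i\ol\lambda^{2(m+1-i)}c_i=\ol D_2$.
\item \emph{Off-diagonal terms $i\neq j$.} These give $\ol\mLambda\,\E[\ol\vw_i\ol\vw_j^\top]\,\ol\mLambda=\ol\gamma^{|i-j|}c_{\min(i,j)}\ol\mLambda^2$. They contribute the $\ol\mLambda\mA\ol\mLambda\mA\ol\mLambda$ term with coefficient $2\sum_{i<j}\ol\lambda^{2m+2-i-j}\ol\gamma^{j-i}c_i$.
\end{itemize}
The delicate step is verifying that this double sum equals the stated $D_3$ formula, both in the degenerate case $\ol\lambda=\ol\gamma$ and in the general case. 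I would do this by first summing over $j$ (a geometric series in $\ol\lambda\ol\gamma$), then over $i$, and separating the $\ol\sigma_w^2$ and $\ol\sigma_e^2$ parts. Since the identical sums were already evaluated in the proof of \Cref{theorem recovery error using global minimum simplified}, it suffices to note that they depend only on $(\lambda,\gamma,\sigma_w^2,\sigma_e^2,n)$, so the substitution yields $\ol D_2,\ol D_3$ directly.

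Collecting the three terms and using $\trace(\mA\ol\mLambda\mA\,\cdot)$ cyclicity gives \eqref{recovery error final global minimum test}.
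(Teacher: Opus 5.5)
Your proposal is correct and follows essentially the same route as the paper. You expand the square into the quadratic, label-energy and cross terms, evaluate each using the Gaussian fourth-moment identity and the AR(1) cross-covariance, and identify the resulting sums with $\ol D_1,\dots,\ol D_4$ by substituting $(\ol\lambda,\ol\gamma,\ol\sigma_w^2,\ol\sigma_e^2,m,\ol\mLambda)$ into the training-time sum computations, while keeping $D_1$ and $\wt\mLambda^{-1}$ fixed from the trained model.
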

The proof has been provided in {Appendix} \ref{proof of Theorem of recovery error global minimum test}. This result quantifies the generalization behavior of the trained model when applied to unseen prompts sampled from a potentially different distribution.   Notably, the prediction error depends jointly on the training and testing task statistics through the interaction between $\wt\mLambda$ and $\ol\mLambda$. Moreover, the expected error $\E[(\wt y_{m+1} - \ol y_{m+1})^2]$ is inherently nonzero due to the stochastic nature of the task evolution--specifically, the noise in the dynamics of $\ol\vw_i$ introduces irreducible uncertainty in the test labels $\ol y_i$. This highlights the importance of employing GLA, which adaptively modulates the influence of past observations and better accommodates temporal variations in the underlying regression weights. In the subsequent experimental section, we empirically demonstrate the effectiveness of the GLA mechanism in handling non-stationary tasks.

\vspace{-0.2cm}
\paragraph{Comparison with Adaptive Signal Processing}
The non-stationary regression setting considered in this paper is closely related to classical problems in adaptive signal processing, where the underlying model parameters evolve gradually over time~\citep{sayed2011adaptive,das2015steady,abdolee2016tracking,qin2020proportionate,claser2021tracking,yu2021tracking,wang2022performance}. To track such non-stationary dynamics, a wide range of online algorithms have been developed, including the least mean squares (LMS) algorithm, the affine projection algorithm (APA), and the recursive least squares (RLS) algorithm. These methods are designed to update model parameters iteratively in response to streaming data, with the goal of minimizing instantaneous or long-term prediction error.  Under non-stationary models such as the first-order autoregressive process described in \eqref{definition of w}, the corresponding theoretical error analyses for these methods also indicate that, for a fixed $\gamma$, there exists an optimal choice of step size (in LMS/APA) or forgetting factor (in RLS) that minimizes the tracking error.

While classical adaptive signal processing methods explicitly update model parameters over time based on streaming observations, the paradigm studied in this paper--in-context learning with the GLA model--adopts a fundamentally different approach. Instead of relying on explicit parameter updates, as in LMS, APA, or RLS, the GLA implicitly adapts to task dynamics via internal representations conditioned on the prompt. In particular, the gating mechanism in GLA enables the model to selectively integrate past information in a soft and differentiable manner, thereby tracking non-stationary structures without modifying its parameters. This architectural distinction offers a new perspective on learning in non-stationary environments, where adaptation arises not from external optimization procedures, but from the model's forward computation itself.

\vspace{-0.3cm}
\section{Experimental Results}
\label{sec:experiment result}

\vspace{-0.3cm}

In this section, we present experiments to validate the theoretical analysis and demonstrate the advantages of GLA in non-stationary models. The experiments are conducted under the following settings. The training and testing losses are defined as $\frac{1}{B}\sum_{\tau=1}^{B}(\wh y_{\tau, n+1} - y_{\tau, n+1})^2$ and $(\wt y_{m+1} - \ol y_{m+1})^2$, respectively. Unless otherwise specified, we set $d=10$, $n=100$, $\sigma_{w}^2=1$, $\sigma_{e}^2=0.01$, and $B=10^7$. The AdamW optimizer is adopted with learning rate $10^{-2}$, weight decay $0.05$, and momentum parameter $0.9$. Each model is trained for $2000$ epochs with a batch size of $5000$ samples. The loss associated with the optimal $\lambda$ is highlighted by a star. Although the theoretical analysis imposes a constraint on the initialization matrix, our experiments use a random Gaussian initialization and still observe the predicted behavior, indicating that the constraint is not necessary in practice.
\begin{figure}[!ht]
\centering
\begin{subfigure}[t]{0.21\textwidth}
    \centering
    \includegraphics[width=3.1cm]{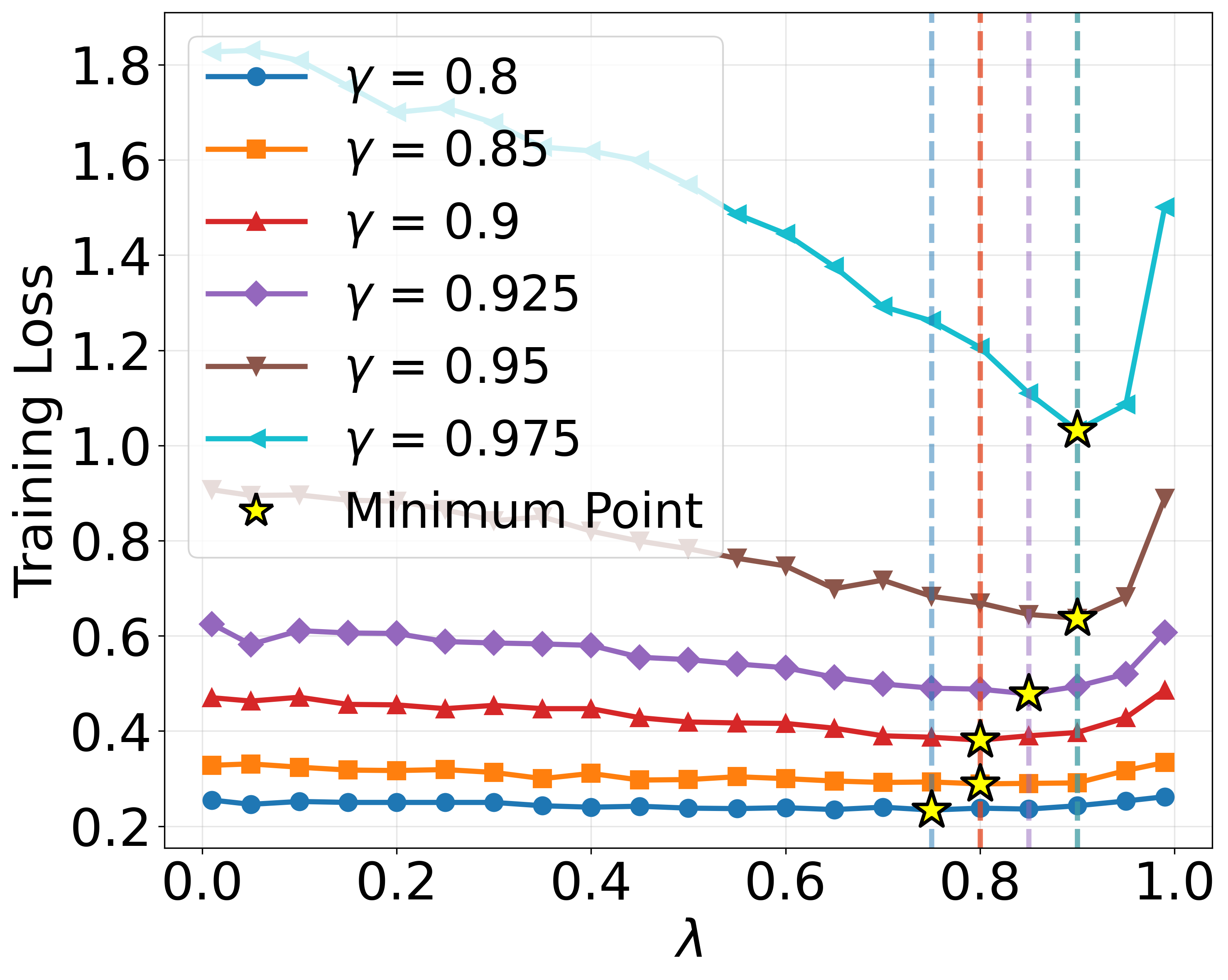}
    \caption{\footnotesize Training loss}
    \label{trainloss}
\end{subfigure}
\begin{subfigure}[t]{0.25\textwidth}
    \centering
    \includegraphics[width=3.1cm]{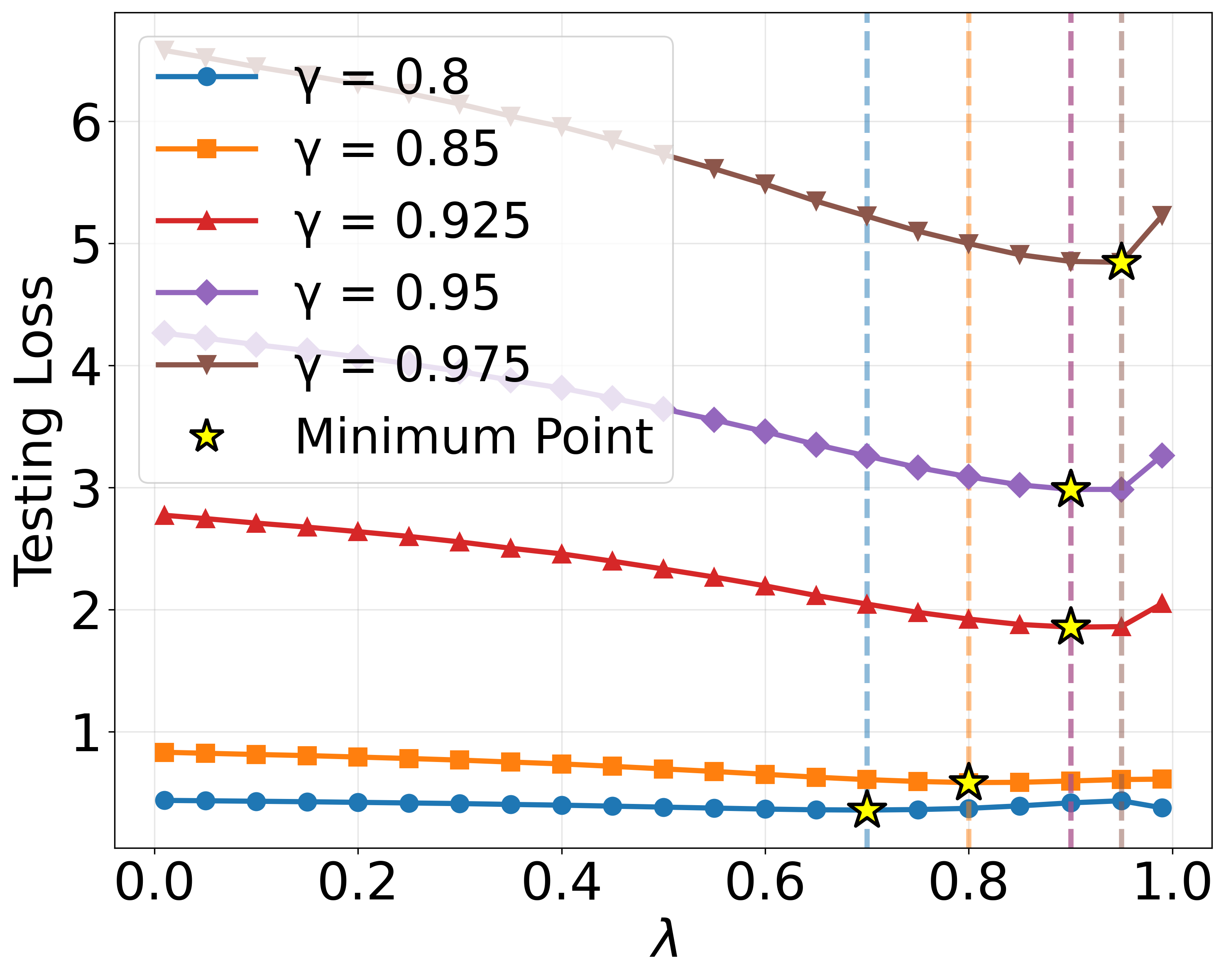}
    \caption{\footnotesize Testing loss ($m = 10$)}
    \label{testloss_m10}
\end{subfigure}
\begin{subfigure}[t]{0.25\textwidth}
    \centering
    \includegraphics[width=3.1cm]{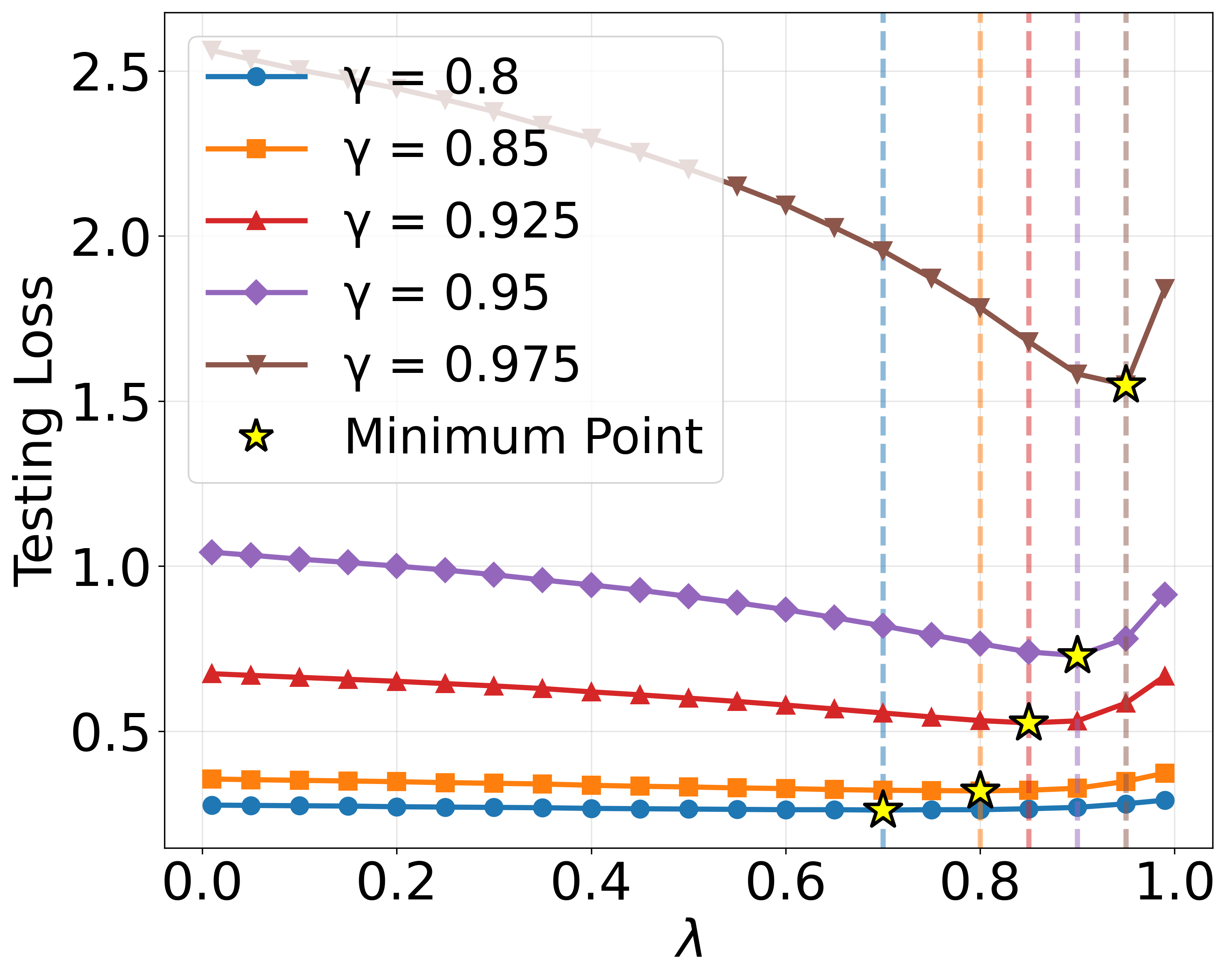}
    \caption{\footnotesize Testing loss ($m = 50$)}
    \label{testloss_m50}
\end{subfigure}
\begin{subfigure}[t]{0.27\textwidth}
    \centering
    \includegraphics[width=3.1cm]{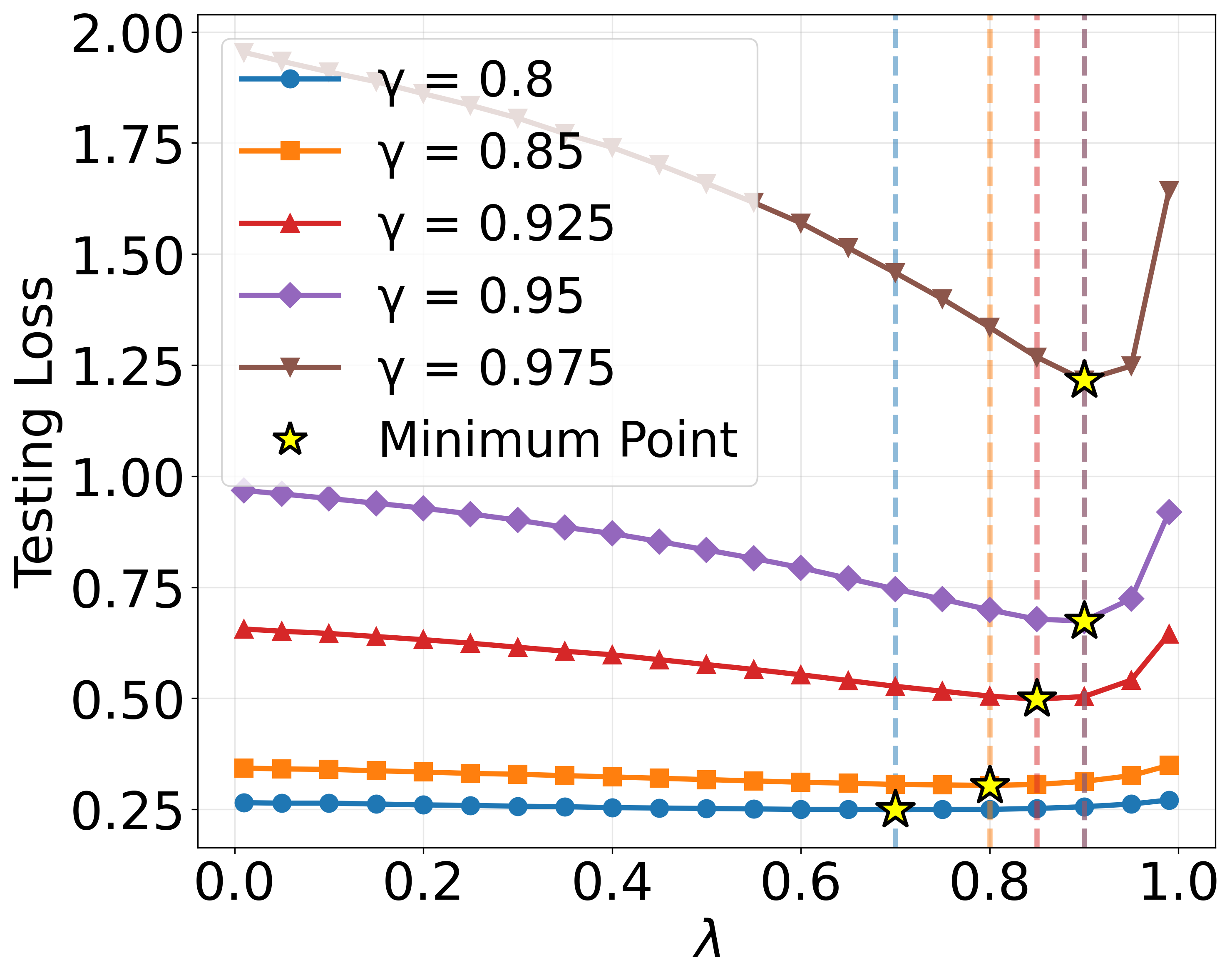}
    \caption{\footnotesize Testing loss ($m = 100$)}
    \label{testloss_m100}
\end{subfigure}
\vspace{-0.1in}
\caption{Training and testing performance of the one-layer GLA model with different $\lambda$ and $\gamma$.}
\vspace{-0.2in}
\label{fig:train_test_lambda_gamma}
\end{figure}


The first experiment compares the training and testing performance of the one-layer GLA model under varying choices of $\gamma$ and $\lambda$. As shown in \Cref{trainloss}, when the autoregressive coefficient $\gamma$ decreases and the impact of noise becomes more pronounced, an appropriate choice of $\lambda$ is required to attain the lowest training loss. During testing, we evaluate the GLA model trained with $\lambda=0.9$ under different sequence lengths $m \in \{10, 50, 100\}$. The results in \Cref{testloss_m10,testloss_m50,testloss_m100} show that, across different values of $\gamma$, selecting an appropriate $\lambda$ remains crucial for minimizing the test loss. These results highlight the role of GLA in stabilizing learning under non-stationary conditions. By introducing a gating mechanism into linear attention, GLA effectively regulates the influence of past inputs, thereby mitigating error accumulation and enhancing the model’s adaptability to distributional shifts. Consequently, GLA achieves longer effective memory and improved generalization, underscoring its advantage in handling time-varying data. As mentioned previously, a one-layer GLA model applied to a first-order autoregressive process functions analogously to an adaptive filter. To illustrate this, we compare its performance with LMS and RLS algorithms. We set the LMS step size to 0.01 and the RLS forgetting factor to 0.98, train on sequences of length 1000, and perform 10,000 Monte Carlo trials, averaging the results. The training errors for LMS and RLS are respectively $[0.2639 \ 0.3168  \ 0.6058 \ 1.0072 \ 1.4758 ]$ and $[0.2555 \ 0.3746  \ 0.6658 \ 0.8881 \ 1.2916]$ for $\gamma = [0.8 \ 0.85 \ 0.925 \ 0.95 \ 0.975]$. Compared to LMS and RLS, which require fixed or slowly adapting parameters, a one-layer GLA model achieves lower training errors (see \Cref{trainloss}) because it possesses higher representational flexibility. Furthermore, LMS and RLS adapt only to a single sequence at a time, requiring retraining for each new input, and therefore cannot leverage cross-sequence information. In contrast, GLA's learnable weights are shared across sequences, allowing the model to generalize and adapt efficiently to new inputs without retraining.

\begin{figure}[!ht]
\centering
\begin{subfigure}[t]{0.22\textwidth}
    \centering
    \includegraphics[width=3.1cm]{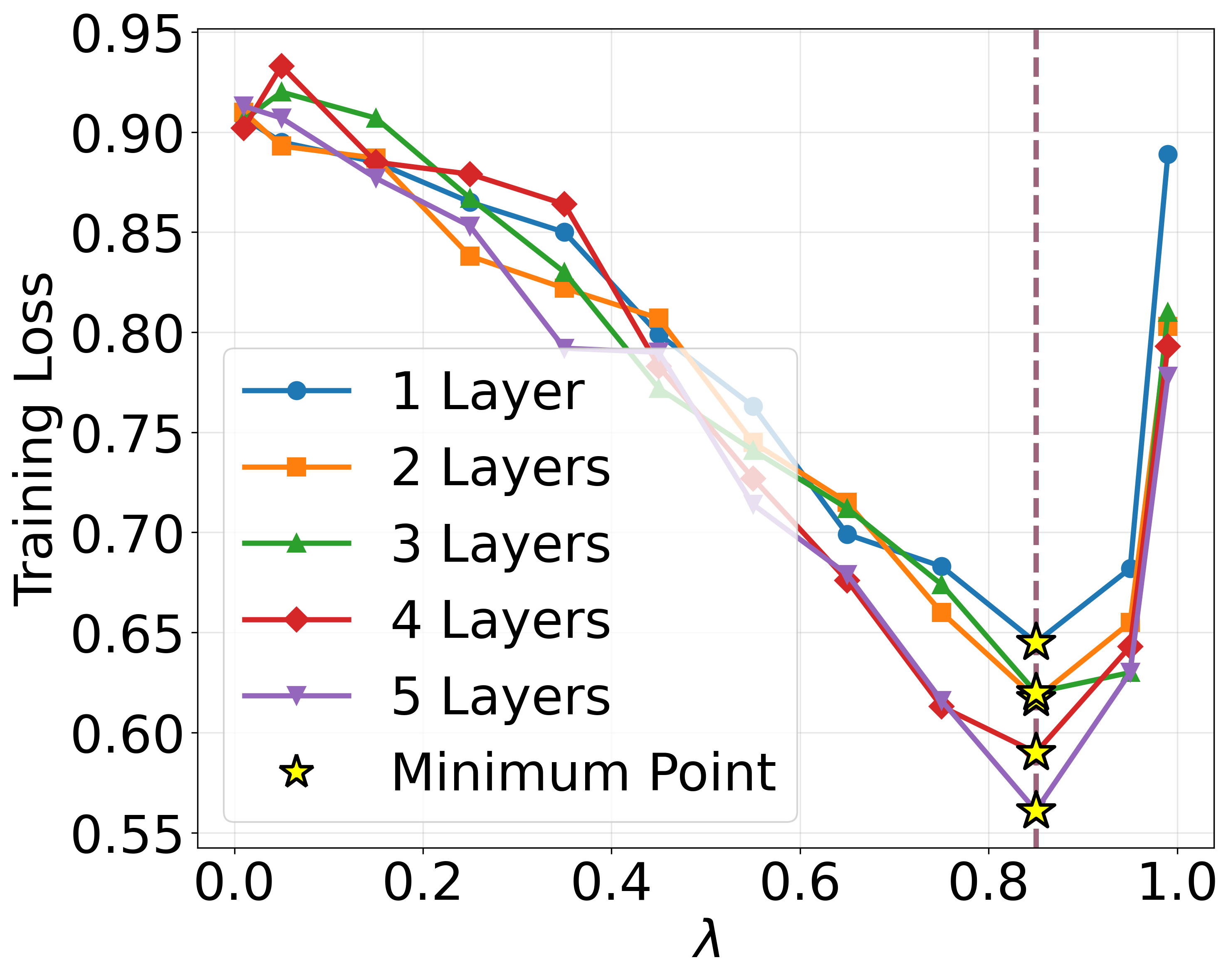}
    \caption{\footnotesize Training loss}
    \label{train_multi-layer1}
\end{subfigure}
\begin{subfigure}[t]{0.27\textwidth}
    \centering
    \includegraphics[width=3.1cm]{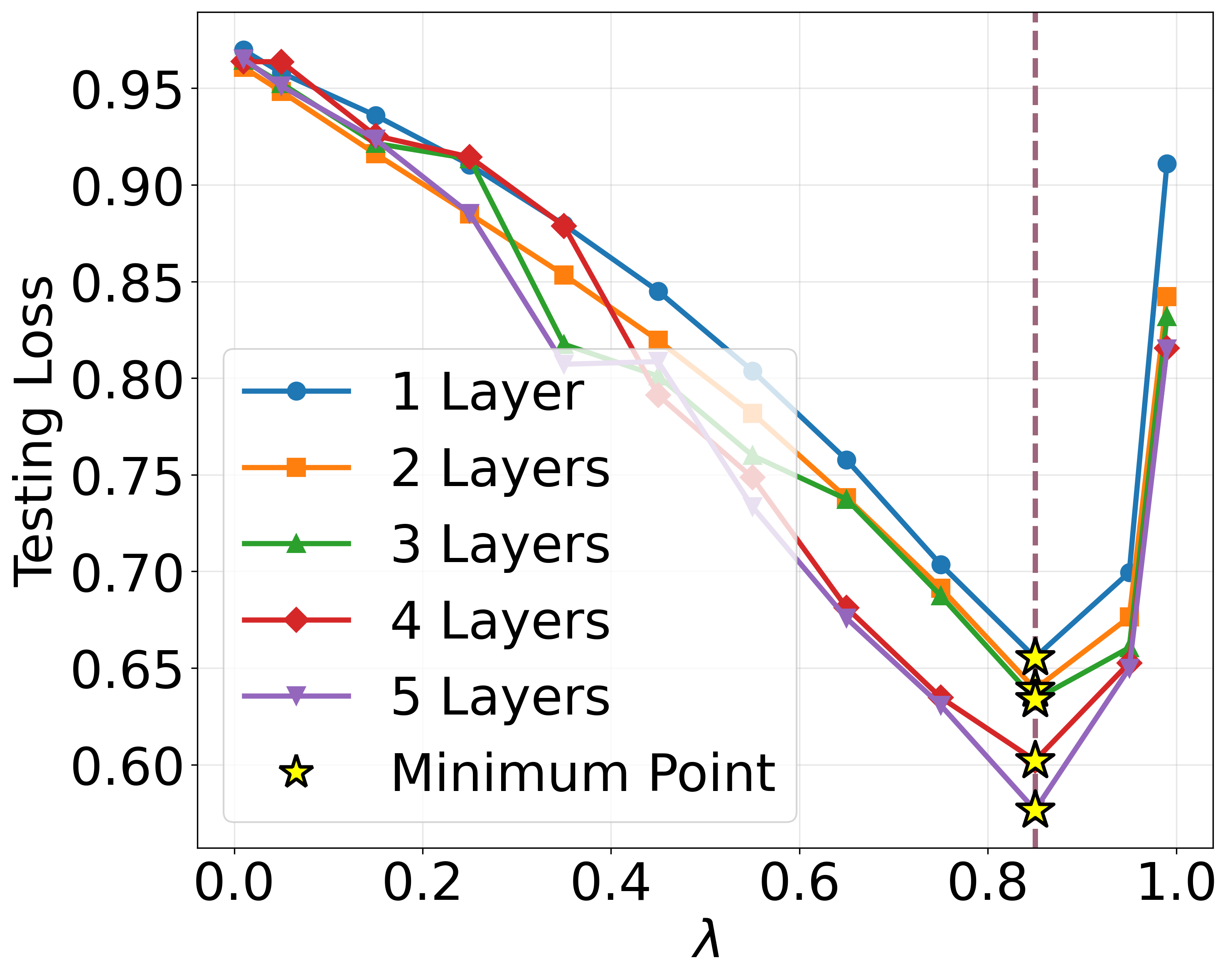}
    \caption{\footnotesize Testing loss ($m = 100$)}
    \label{test_multi-layer1}
\end{subfigure}
\begin{subfigure}[t]{0.22\textwidth}
    \centering
    \includegraphics[width=3.1cm]{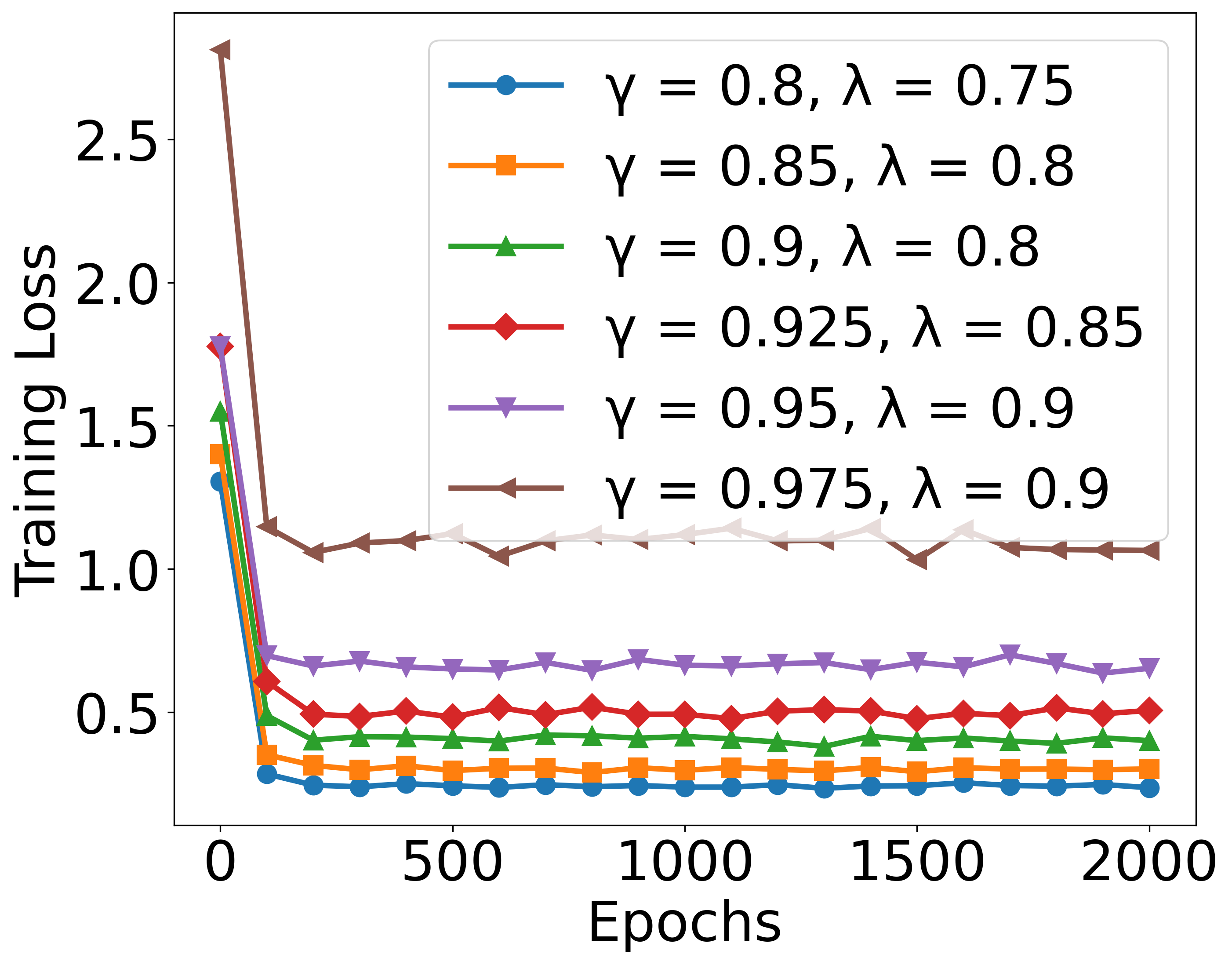}
    \caption{One Layer}
    \label{fig:training_curve_one1}
\end{subfigure}
\begin{subfigure}[t]{0.22\textwidth}
    \centering
    \includegraphics[width=3.1cm]{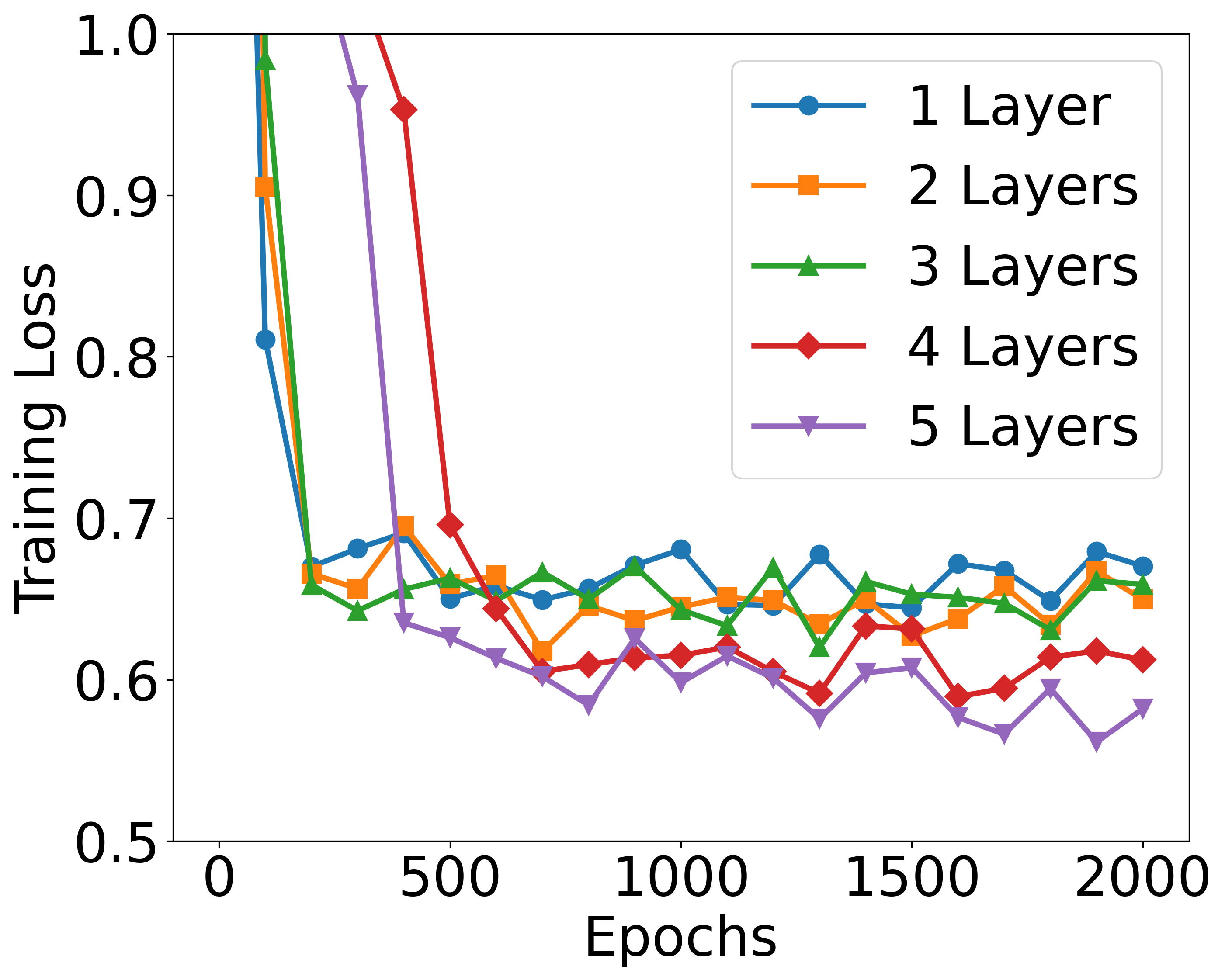}
    \caption{Multiple Layers}
    \label{fig:training_curve_Multiple Layers}
\end{subfigure}
\caption{(a-b) Training and testing performance of the multi-layer GLA model with $\gamma = 0.95$ and different $\lambda$; (c) convergence performance of the one-layer GLA model; (d) convergence performance for GLA models with different layers. }
\vspace{-.1in}
\label{fig:loss_multilayers11}
\end{figure}

In the second experiment, we investigate the impact of network depth on the performance of the GLA model. As illustrated in \Cref{train_multi-layer1,test_multi-layer1}, increasing the number of layers consistently enhances both training and testing performance, suggesting that deeper architectures can more effectively capture long-range dependencies in non-stationary sequences. Conceptually, each GLA layer implements a linear adaptive filter whose effective behavior is determined by its gating weights. When multiple layers are stacked, these adaptive filters operate at different timescales, enabling the network to simultaneously capture short-term fluctuations and longer-term trends in the evolving regression weights. This multi-timescale structure explains why deeper GLA models achieve better performance under non-stationary regression: a single layer
can track only one effective timescale of drift, while multiple layers collectively approximate a richer family of dynamic predictors.  While formal theoretical analysis for multi-layer GLA models is not yet established, the empirical results underscore the critical role of the adaptive gating mechanism in regulating information flow across layers, thereby mitigating error accumulation and improving generalization.

Under the same experimental settings as the first two experiments, we examine the training convergence of the one-layer GLA with the optimal $\lambda$ corresponding to the minimum loss, and of the multi-layer GLA with $\lambda = 0.85$. With random Gaussian initialization and a sufficiently large number of training samples, \Cref{fig:training_curve_one1} shows that the one-layer GLA achieves linear convergence, in agreement with our previous analysis. \Cref{fig:training_curve_Multiple Layers} further demonstrates that the multi-layer GLA maintains linear convergence, indicating that the adaptive gating mechanism effectively stabilizes gradient propagation across layers. A rigorous theoretical characterization of convergence for multi-layer GLA is left for future work.

In the third experiment, we assess the ICL capability of GLA and Linear Attention (LA) models on a real-world language task. We focus on sentiment classification using the SST-2 dataset \citep{socher2013recursive}, which contains 67,349 training samples and 872 validation samples with binary labels (positive/negative). To initialize the models, we employ GPT-2 (small) \citep{radford2019language}, which consists of 12 layers, a hidden size of 768, 12 attention heads, and approximately 117M parameters. We then replace the original softmax attention with (i) linear attention, resulting in LinearGPT2, and (ii) gated linear attention, resulting in GatedLinearGPT2. Both models are optimized using AdamW with a learning rate of $5\times10^{-5}$, weight decay of $0.05$, and momentum parameter of $0.9$ for 1,000 iterations. For ICL fine-tuning, we provide 20 in-context demonstrations per instance, computing the loss only on label tokens. During evaluation on the SST-2 validation set, we vary the number of demonstrations $K \in \{1, 5, 10, 15, 20\}$. Performance is assessed using two metrics: (1) Accuracy, defined as the standard prediction accuracy; and (2) Confidence, calculated for each correctly classified example by converting the model’s logits over {positive, negative} to probabilities $(p_{\text{pos}}, p_{\text{neg}})$ and taking $\max(p_{\text{pos}}, p_{\text{neg}})$, with the reported value being the average over all correctly classified examples. As shown in \Cref{fig:sst2_acc,fig:sst2_con}, when $\lambda = 0.9$, GLA achieves the highest accuracy and confidence, outperforming LA by a clear margin. This empirical advantage can be attributed to its gating mechanism: unlike LA, which implicitly assumes a stationary linear regression structure, GLA is able to adapt to the non-stationarity of real-world data by selectively integrating or discarding historical information--an ability that proves critical for reliable prediction.

\vspace{-0.3cm}
\begin{figure}[!ht]
\centering

\begin{subfigure}{0.23\textwidth}
    \centering
    \includegraphics[width=\linewidth]{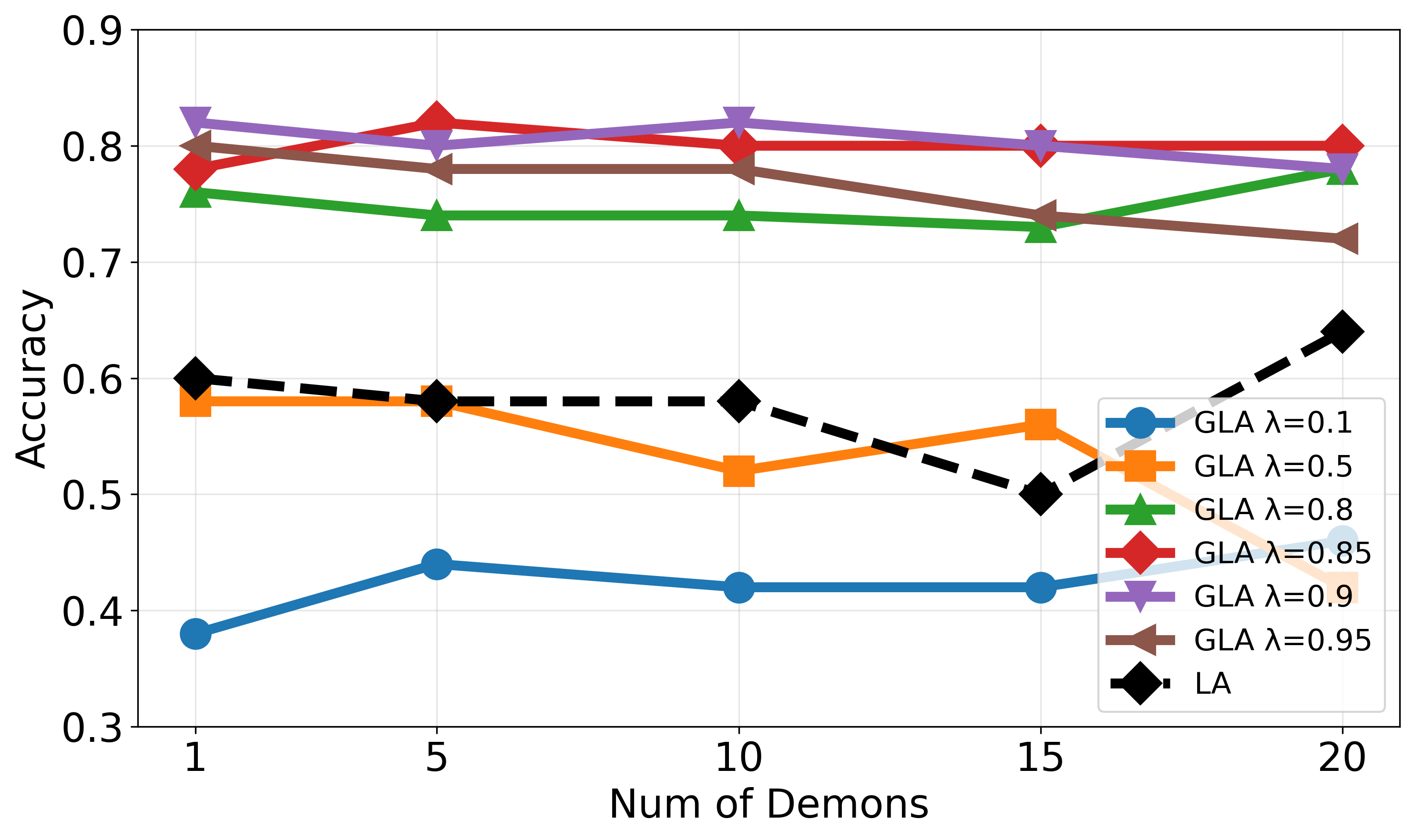}
    \caption{SST-2 Accuracy}
    \label{fig:sst2_acc}
\end{subfigure}\hfill
\begin{subfigure}{0.23\textwidth}
    \centering
    \includegraphics[width=\linewidth]{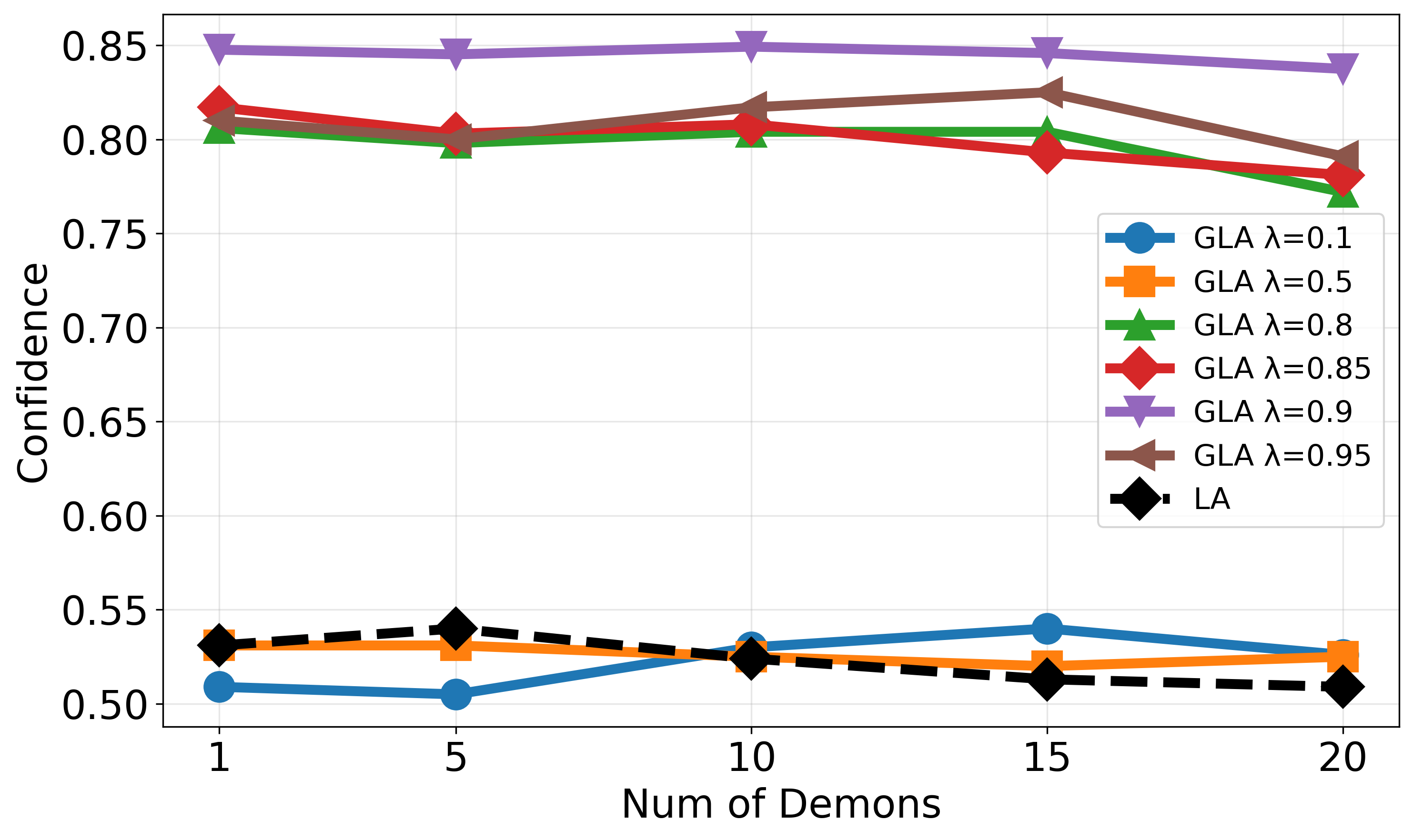}
    \caption{SST-2 Confidence}
    \label{fig:sst2_con}
\end{subfigure}\hfill
\begin{subfigure}{0.23\textwidth}
    \centering
    \includegraphics[width=\linewidth]{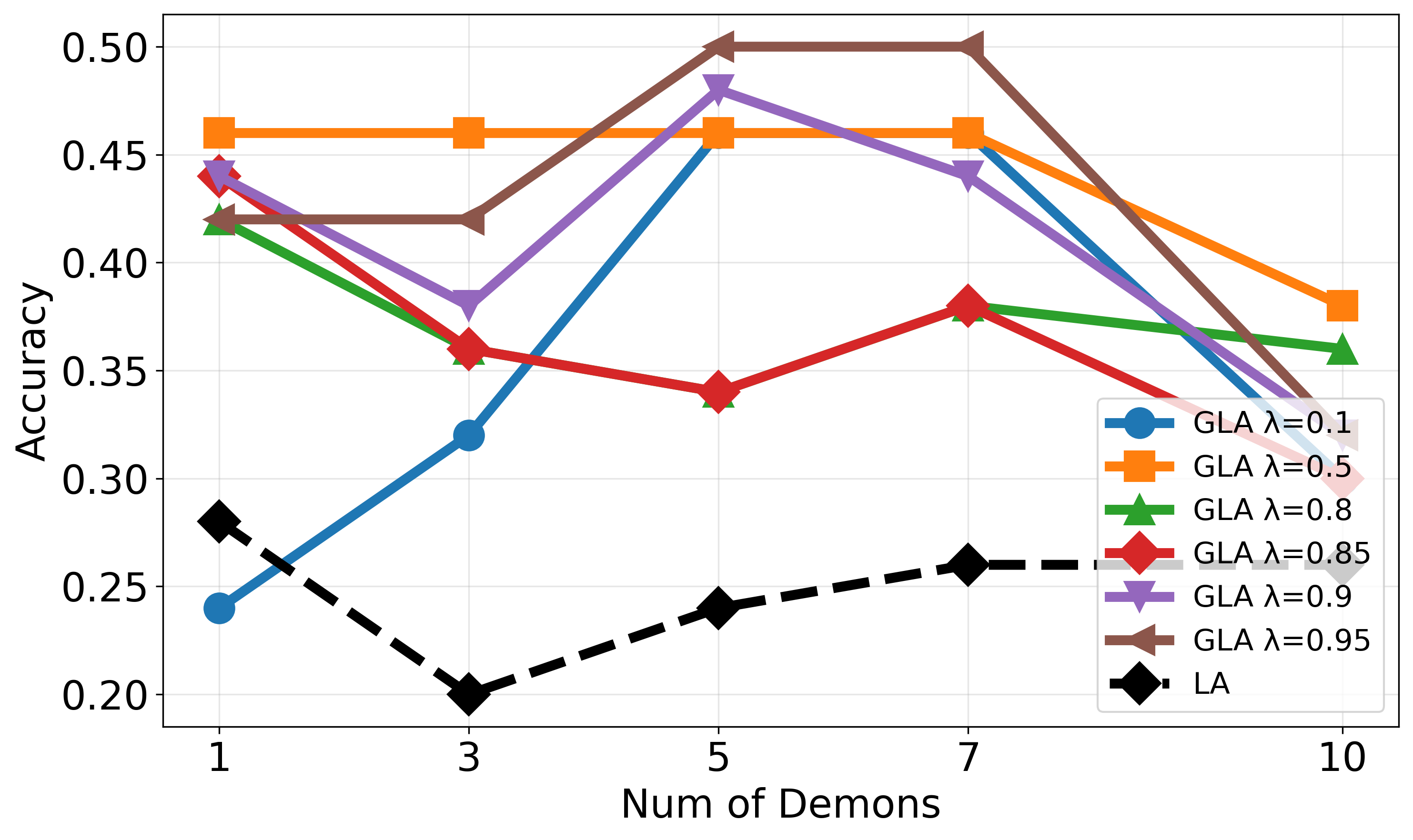}
    \caption{MNLI Accuracy}
    \label{fig:mnli_acc}
\end{subfigure}\hfill
\begin{subfigure}{0.23\textwidth}
    \centering
    \includegraphics[width=\linewidth]{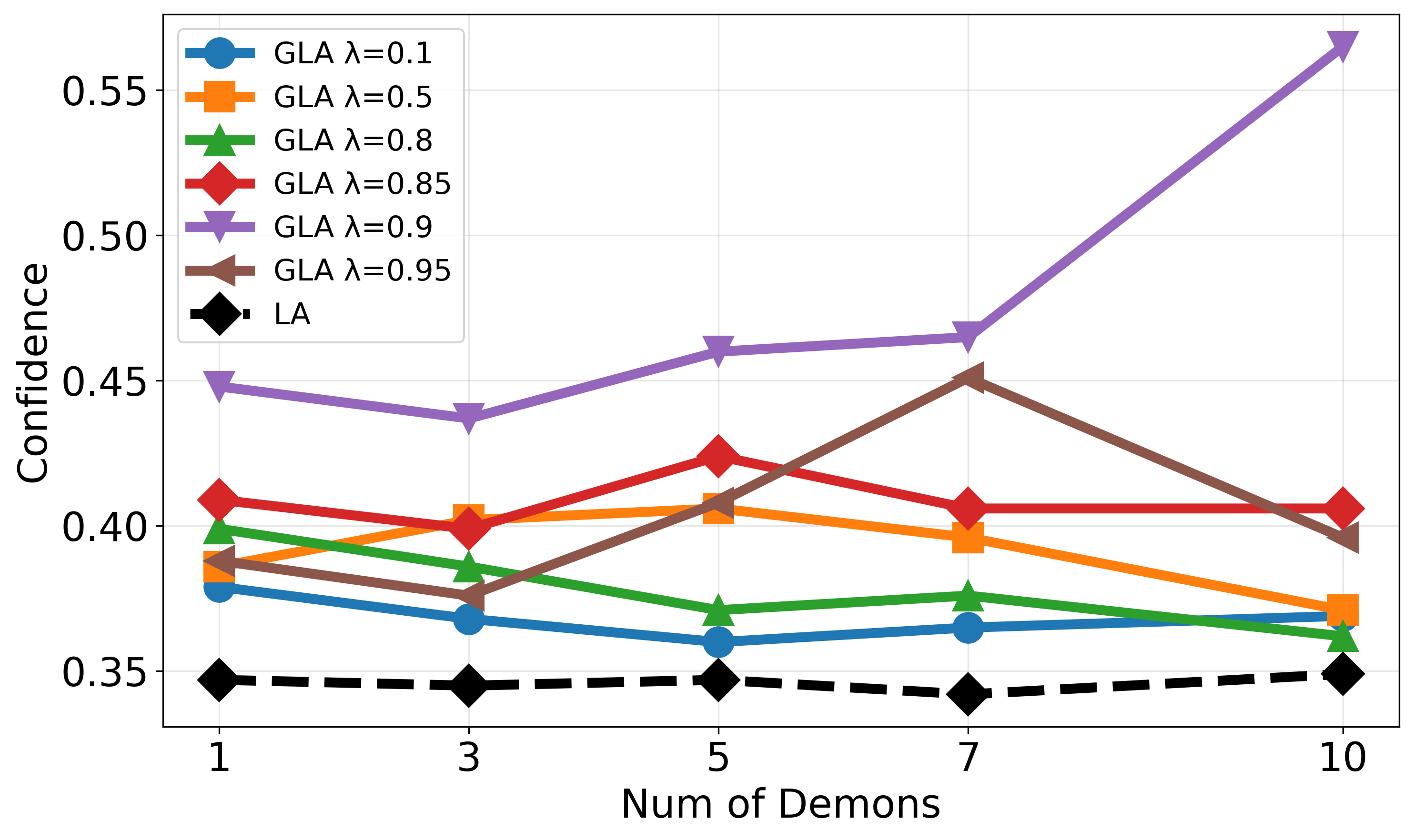}
    \caption{MNLI Confidence}
    \label{fig:mnli_con}
\end{subfigure}

\caption{
Accuracy and confidence of GatedLinearGPT2 vs.\ LinearGPT2 on SST-2 sentiment classification (left two) and MNLI natural language inference (right two) across different numbers of demonstrations.
}
\label{fig:combined_gla_la}
\end{figure}

In the final experiment, we evaluate the ICL capabilities of GLA and Linear Attention (LA) models on a more challenging natural language inference task, which requires determining the logical relationship between a premise–hypothesis pair (entailment, contradiction, or neutral) across a broad range of text genres. We use the Multi-Genre Natural Language Inference (MNLI) dataset~\citep{williams2018broad}, which spans multiple genres and contains approximately 393k training examples with three class labels. Following the same setup as in the third experiment, we provide 10 in-context demonstrations per instance for ICL fine-tuning—constrained by context length—and compute the loss only on the label tokens. For evaluation on the MNLI validation set, we vary the number of demonstrations  $K \in \{1, 3, 5, 7, 10\}$. As shown in \Cref{fig:mnli_acc,fig:mnli_con}, GLA consistently achieves higher accuracy and confidence than LA, highlighting the benefit of the gating mechanism.

\vspace{-0.2cm}

\section{Conclusion}
\label{sec:conclusion}

\vspace{-0.2cm}

This work presents a theoretical investigation of in-context learning in non-stationary regression problems, addressing an important gap in the current understanding of transformer models. Under a first-order autoregressive model of non-stationarity, we show that GLA outperforms standard linear attention by dynamically reweighting past inputs, enabling more accurate prediction in time-varying settings. Our analysis provides rigorous justification for the advantage of gating in capturing distributional shifts and highlights its role as an architectural inductive bias in adaptive learning. These findings not only deepen the theoretical foundations of ICL in dynamic environments but also suggest broader implications for the design of transformer variants in real-world applications characterized by non-stationarity.

A natural direction for future work is to generalize the first-order autoregressive assumption to a broader class of dynamic-weight models. In particular, allowing more flexible temporal evolutions--such as higher-order dynamics, stochastic drift, or slowly varying adversarial changes--would further illuminate how in-context learning behaves in general non-stationary settings. A second direction for future work is to develop a rigorous theoretical characterization of how gating mechanisms interact across multiple GLA layers. While our experiments show that stacking layers consistently improves performance, a principled analysis of how multi-layer structures capture multiple timescales of drift remains an important open problem. The third future direction is to analyze the global optimization landscape of the GLA model studied in this paper. Our numerical experiments suggest that random Gaussian initialization consistently converges to a global minimum under gradient flow, even when the theoretical initialization conditions are violated. This indicates the existence of a benign global optimum and motivates a deeper theoretical study of the model's optimization landscape.

\section*{Acknowledgments}

The work has been supported in part by NSF grants  IIS-2312840 and IIS-2402952. ZQ gratefully acknowledges support from the MICDE Research Scholars Program at the University of Michigan.



\newpage

\appendices

\section*{Ethics statement}
This study presents a theoretical analysis of in-context learning in non-stationary regression problems. No new human or animal data were collected, and all experiments rely exclusively on publicly available datasets that have been widely used in prior research. We recognize that pretrained LLMs may inherit biases from their training corpora. Since our method does not involve additional fine-tuning of LLMs, it does not directly address such biases. We encourage future investigations to place greater emphasis on fairness, accountability, and transparency in deploying these models in practical scenarios.

\section*{Reproducibility statement}
We have taken extensive measures to facilitate reproducibility. Detailed descriptions of model architectures, hyperparameters, and experimental setups are included in both the main text and the appendix. In addition, we will release the full codebase, configuration files, and comprehensive documentation upon publication, thereby enabling independent verification and extension of our results.

\section*{The Use of Large Language Models}
Large language models were employed solely for improving clarity, grammar, and overall readability of the manuscript. They were not used for conceptual development, experimental design, data analysis, or the generation of research content. All theoretical contributions, implementations, and experimental results reported in this paper are the authors’ own work.

\section{Technical tools used in the proofs}
\label{Technical tools used in proofs}

\begin{lemma}
\label{covariance matrix of gaussian vector}
Suppose $\vw_i = \gamma \vw_{i-1} + \ve_i$ where $\vw_0\overset{\text{i.i.d.}}{\sim}\calN({\bm 0}, \sigma_w^2\mId)$, $\ve_i\overset{\text{i.i.d.}}{\sim}\calN({\bm 0}, \sigma_e^2\mId)$ and they are mutually independent. We have
\begin{eqnarray}
    \label{covariance matrix of gaussian vector1}
    \E[\vw_a \vw_b^\top] = \begin{cases}
    (\sigma_w^2 + \min\{a,b\}\sigma_e^2  )\mId, & \gamma = 1, \\
    ( \gamma^{a+b}\sigma_w^2 + \frac{ \gamma^{a+b} -  \gamma^{|a-b|}}{\gamma^2 -1}\sigma_e^2  )\mId , &  \gamma \neq 1.
\end{cases}
\end{eqnarray}

\end{lemma}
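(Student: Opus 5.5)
Unrolling the autoregressive recursion gives a closed form for $\vw_a$ in terms of the mutually independent Gaussian variables $\vw_0,\ve_1,\ve_2,\dots$:
\begin{equation*}
\vw_a = \gamma^{a}\vw_0 + \sum_{k=1}^{a}\gamma^{a-k}\ve_k ,
\end{equation*}
valid for every integer $a\ge 0$. This follows by a one-line induction on $a$ from $\vw_a=\gamma\vw_{a-1}+\ve_a$. Then I would form $\vw_a\vw_b^\top$ and take expectations term by term.

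\textbf{Cross terms vanish.} Because $\vw_0$ and the $\ve_k$ are independent and zero-mean, every cross term $\E[\vw_0\ve_k^\top]$, $\E[\ve_k\vw_0^\top]$ and $\E[\ve_k\ve_l^\top]$ with $k\neq l$ is zero. The surviving terms are
\begin{equation*}
\gamma^{a+b}\E[\vw_0\vw_0^\top]+\sum_{k=1}^{\min\{a,b\}}\gamma^{a-k}\gamma^{b-k}\E[\ve_k\ve_k^\top]
=\Big(\gamma^{a+b}\sigma_w^2+\sigma_e^2\sum_{k=1}^{\min\{a,b\}}\gamma^{a+b-2k}\Big)\mId .
\end{equation*}

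\textbf{Evaluating the geometric sum.} If $\gamma=1$, every summand equals $1$, so the sum is $\min\{a,b\}$ and the first case follows immediately.

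If $\gamma\neq1$, write $c=\min\{a,b\}$, so that $a+b-2c=|a-b|$. Then
\begin{equation*}
\sum_{k=1}^{c}\gamma^{a+b-2k}=\gamma^{|a-b|}\sum_{j=0}^{c-1}\gamma^{2j}=\gamma^{|a-b|}\frac{1-\gamma^{2c}}{1-\gamma^2}=\frac{\gamma^{a+b}-\gamma^{|a-b|}}{\gamma^2-1},
\end{equation*}
using $\gamma^{|a-b|+2c}=\gamma^{a+b}$. This gives the second case.

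There is no real obstacle. The only points needing care are the sign convention in the last fraction (numerator and denominator are both negated relative to the natural form) and the edge case $\min\{a,b\}=0$. In that edge case the sum is empty, and the formula correctly returns $0$ for the $\sigma_e^2$ term.
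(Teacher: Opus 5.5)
Your proposal is correct and follows the same route as the paper: unroll the recursion to $\vw_a=\gamma^a\vw_0+\sum_{k=1}^{a}\gamma^{a-k}\ve_k$, use independence and zero means to drop the cross terms, and evaluate the geometric sum $\sum_{k=1}^{\min\{a,b\}}\gamma^{a+b-2k}$. One minor caveat: dividing by $1-\gamma^2$ also implicitly requires $\gamma\neq -1$, which is harmless since the paper assumes $\gamma>0$.
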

\begin{proof}
For any $a$ and $b$, we have
\begin{eqnarray}
    \label{expansion of wa and wb}
    &&\vw_a = \gamma^a \vw_0 + \sum_{i=1}^a\gamma^{a-i}\ve_i,\\
    &&\vw_b = \gamma^b \vw_0 + \sum_{i=1}^b\gamma^{b-i}\ve_i.
\end{eqnarray}
Then we have
\begin{eqnarray}
    \label{covariance matrix of gaussian vector derivation}
    \E[\vw_a \vw_b^\top] &\!\!\!\!=\!\!\!\!& \E[ \gamma^{a+b} \vw_0 \vw_0^\top] + \sum_{j=1}^{\min\{a,b\}}\gamma^{a+b-2j} \E[\ve_j\ve_j^\top ] \nonumber\\
    &\!\!\!\!=\!\!\!\!& \gamma^{a+b}\sigma_w^2\mId + \sum_{j=1}^{\min\{a,b\}}\gamma^{a+b-2j}\sigma_e^2\mId\nonumber\\
    &\!\!\!\!=\!\!\!\!& \begin{cases}
    (\sigma_w^2 + \min\{a,b\}\sigma_e^2  )\mId, & \gamma = 1, \\
    ( \gamma^{a+b}\sigma_w^2 + \frac{ \gamma^{a+b} -  \gamma^{|a-b|}}{\gamma^2 -1}\sigma_e^2  )\mId , &  \gamma \neq 1.
\end{cases}
\end{eqnarray}
\end{proof}

According to \citep[Lemma 5.1]{zhang2024trained}, we have
\begin{lemma}
\label{different form of y n+1}
Let $\vz_i, i=1,\dots, n+1$ be prompts defined in \eqref{definition of input prompt}. Then the prediction $\wh y_{n+1}$ can be written as a quadratic function as follows:
\begin{eqnarray}
    \label{quadratic function for query}
    \wh y_{n+1} = \vu^\top \mH \vu,
\end{eqnarray}
where $\mH = \frac{1}{2}\mX \otimes  \bigg(\sum_{i=1}^{n+1} \lambda^{n+1-i} \vz_i\vz_i^\top \bigg)$ with $\mX = \begin{bmatrix}{\bm 0}_{d\times d} &   \vx_{n+1} \\ \vx_{n+1}^\top   & 0 \end{bmatrix}\in\R^{(d+1)\times (d+1)} $ and $\vu = \vec(\mU)\in\R^{(d+1)^2 \times 1}$, $\mU = \begin{bmatrix}\mU_{11} &   \vu_{12} \\ \vu_{21}^\top   & u_{-1} \end{bmatrix} \in\R^{(d+1)\times (d+1)}$ with $\mU_{11} = \mW_{11}^{KQ}$, $ \vu_{21} = \vw_{21}^{KQ}$, $\vu_{12} = \vw_{21}^{V}$ and $u_{-1} = w_{-1}^V $ defined in \eqref{out of WV and WKQ}.
\end{lemma}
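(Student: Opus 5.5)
The plan is to verify Lemma~\ref{different form of y n+1} by expanding both sides entrywise and matching them, using the identity $\vec(\mA)^\top(\mB\otimes\mC)\vec(\mA)=\trace(\mA^\top\mC\mA\mB^\top)$. Only the prediction formula \eqref{output of y} is needed. The recursion for $\vw_i$ plays no role, since the argument is purely algebraic.

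\textbf{Reduce the prediction to a bilinear form.} Write $\mS:=\sum_{i=1}^{n+1}\lambda^{n+1-i}\vz_i\vz_i^\top$, which is symmetric. Set $\va:=\begin{bmatrix}\vw_{21}^{V}\\ w_{-1}^V\end{bmatrix}=\begin{bmatrix}\vu_{12}\\ u_{-1}\end{bmatrix}$ and $\mB:=\begin{bmatrix}\mW_{11}^{KQ}\\ {\vw_{21}^{KQ}}^\top\end{bmatrix}=\begin{bmatrix}\mU_{11}\\ \vu_{21}^\top\end{bmatrix}$. Then \eqref{output of y} reads $\wh y_{n+1}=\va^\top\mS\mB\vx_{n+1}$. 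By construction, $\mB$ is the first $d$ columns of $\mU$ and $\va$ is its last column. Hence $\mU=[\mB\ \ \va]$.

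\textbf{Evaluate the quadratic form.} With $\vu=\vec(\mU)$ and $\mH=\tfrac12\mX\otimes\mS$, the identity gives
\[
\vu^\top\mH\vu=\tfrac12\trace(\mU^\top\mS\mU\mX^\top)=\tfrac12\trace(\mU^\top\mS\mU\mX).
\]
The second equality holds because $\mX$ is symmetric. Since $\mX$ has nonzero entries only in the last column (top $d$ entries $\vx_{n+1}$) and the last row, we get
\[
\trace(\mM\mX)=\sum_{j\le d}\big(\mM(j,d+1)+\mM(d+1,j)\big)\vx_{n+1}(j)
\]
for any $\mM$. Take $\mM=\mU^\top\mS\mU$, which is symmetric because $\mS$ is. The two sums then coincide, and the factor $\tfrac12$ cancels. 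We are left with $\sum_{j\le d}\mM(d+1,j)\vx_{n+1}(j)=\va^\top\mS\mB\vx_{n+1}$, because row $d+1$ of $\mU^\top$ is $\va^\top$ and columns $1{:}d$ of $\mU$ form $\mB$. This matches $\wh y_{n+1}$.

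\textbf{Main obstacle.} The only delicate point is bookkeeping. First, $\vec$ must be column-stacking, so that the Kronecker ordering in $\mX\otimes\mS$ pairs with the identity as stated. Second, the block labels must be placed consistently: $\vu_{12}$ sits in the top-right block and $\vu_{21}^\top$ in the bottom-left, which fixes $\mU=[\mB\ \ \va]$. I would first check the $\vec$/Kronecker identity on the convention $\vec(\mA\mX\mC)=(\mC^\top\otimes\mA)\vec(\mX)$. If a transpose mismatch appears, I would use the symmetry of $\mX$ and $\mS$ to absorb it. That is also why the symmetrizing factor $\tfrac12$ is harmless. This matches \citep[Lemma 5.1]{zhang2024trained}, with $\mZ\mZ^\top$ replaced by the weighted sum $\mS$, whose symmetry is the only property used.
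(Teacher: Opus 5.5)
Your proof is correct and follows essentially the paper's route: the paper simply cites \citep[Lemma 5.1]{zhang2024trained}, whose argument is this same $\vec$--Kronecker trace computation, with $\mZ\mZ^\top$ replaced by the symmetric weighted sum $\sum_{i=1}^{n+1}\lambda^{n+1-i}\vz_i\vz_i^\top$. One small caveat on your hedge: the symmetry of $\mX$ and $\mS$ absorbs $\mX^\top$ versus $\mX$, but it would not rescue a row-stacking $\vec$, which would pair $\mS$ with the last row of $\mU$ instead of its last column, so the column-stacking convention (which the paper also uses) is genuinely required rather than cosmetic.
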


\begin{lemma}
\label{dynamics of population loss}
Let $\vu = \vec(\mU) = \vec\bigg( \begin{bmatrix}\mU_{11} &   \vu_{12} \\ \vu_{21}^\top   & u_{-1} \end{bmatrix} \bigg)$ as \Cref{different form of y n+1}. Consider gradient flow over
\begin{eqnarray}
    \label{new loss function}
    L(\vu) = \frac{1}{2}\E[( \vu^\top\mH \vu - \vw_{n+1}^\top \vx_{n+1}   )^2]
\end{eqnarray}
with respect to $\vu$ starting from an initial value satisfying {Assumption} \ref{Assumption of initialization}. Then the dynamics of $\mU$ satisfies
\begin{eqnarray}
    \label{dynamics of U11}
    &&\frac{\text{d}\mU_{11}(t)}{\text{d}t} = -\vu_{-1}^2\wt\mLambda\mLambda \mU_{11}\mLambda + D_1u_{-1}\mLambda^2,\\
    \label{dynamics of u-1}
    &&\frac{\text{d}u_{-1}(t)}{\text{d}t} = -\trace(u_{-1}\wt\mLambda \mLambda \mU_{11}\mLambda \mU_{11}^\top ) + D_1\trace(\mLambda^2\mU_{11}^\top),
\end{eqnarray}
and $\vu_{12}(t) = {\bm 0}_d$, $\vu_{21}(t) = {\bm 0}_d$ for all $t\geq 0$, where
\begin{eqnarray*}
D_1 =\begin{cases}
    n\sigma_w^2 + \frac{n(n+1)}{2}\sigma_e^2, & \lambda = \gamma = 1, \\
    \frac{\lambda - \lambda^{n+1}}{1-\lambda}\sigma_w^2 + \frac{\lambda^{n+2} - (n+1)\lambda^2 + n\lambda}{(1-\lambda)^2}\sigma_e^2, &  \lambda \neq 1, \gamma = 1,\\
    \frac{\gamma^{n+2} - \gamma^{2n+2}}{1 - \gamma}\sigma_w^2 + \frac{\gamma - \gamma^{n+1} - \gamma^{n+2} + \gamma^{2n+2}}{(1-\gamma)^2(1+\gamma)}\sigma_e^2, & \lambda = 1, \gamma \neq 1, \\
    \lambda^{2n+2}n\sigma_w^2 + \bigg(\frac{\lambda^2( 1 - \lambda^{2n}) }{(1-\lambda^2)^2} - \frac{\lambda^{2n+2}}{1-\lambda^2}n \bigg)\sigma_e^2, &  \lambda \neq 1, \gamma \neq 1, \lambda = \gamma,\\
    \frac{\gamma -  \gamma^{2n+1} }{\lambda - \gamma}\sigma_w^2 +\bigg( \frac{1}{1-\gamma^2}n - \frac{\gamma -  \gamma^{2n+1}}{(\lambda - \gamma)(1-\gamma^2)}  \bigg)\sigma_e^2, &  \lambda \neq 1, \gamma \neq 1, \lambda = 1/\gamma, \\
    \frac{\lambda^{n+1}\gamma^{n+2} - \lambda \gamma^{2n+2} }{\lambda - \gamma}\sigma_w^2 +\bigg( \frac{\lambda\gamma( 1 - \lambda^n\gamma^n) }{(1-\gamma^2)(1-\lambda\gamma)} - \frac{\lambda^{n+1}\gamma^{n+2} - \lambda \gamma^{2n+2} }{(\lambda - \gamma)(1-\gamma^2)}  \bigg)\sigma_e^2, &  \lambda \neq 1, \gamma \neq 1, \lambda \neq \gamma, \lambda \neq 1/\gamma.
\end{cases}
\end{eqnarray*}
and $\wt\mLambda = D_2(2\mLambda +\trace(\mLambda)\mId ) + D_3\mLambda $ with \begin{eqnarray*}
D_2 =\begin{cases}
    n\sigma_w^2 + \frac{(n+1)n}{2}\sigma_e^2 , & \lambda = 1, \gamma = 1, \\
    \frac{\lambda^2 - \lambda^{2n+2}}{1-\lambda^2}\sigma_w^2 + \frac{(n+1)\lambda^2 - \lambda^{2n+2} - n}{1-\lambda^2}\sigma_e^2  , & \lambda \neq 1, \gamma =1,\\
    \frac{\gamma^2 - \gamma^{2n+2}}{1-\gamma^2}\sigma_w^2 + \big( \frac{n\gamma^2}{1-\gamma^2} - \frac{\gamma^2 - \gamma^{2n+2}}{(1 - \gamma^2  )^2}   \big)\sigma_e^2, & \lambda = 1, \gamma \neq 1,\\
    \lambda^{2n+2}n\sigma_w^2 - (\frac{n\lambda^{2n+2}}{1-\lambda^2} - \frac{\lambda^6 - \lambda^{2n+4}}{(1- \lambda^2 )^2}  )\sigma_e^2, &  \lambda \neq 1, \gamma \neq 1, \lambda = \gamma,\\
    \frac{\lambda^{2n} -  \gamma^{2n}}{\lambda^2 - \gamma^2}\sigma_w^2 - (\frac{\lambda^{2n} -  \gamma^{2n}}{(\lambda^2 - \gamma^2)(1-\gamma^2)} - \frac{\lambda^2 - \lambda^{2n}}{(1-\gamma^2)(1-\lambda^2)}    )\sigma_e^2, &  \lambda \neq 1, \gamma \neq 1, \lambda = 1/\gamma, \\
    \frac{\gamma^2\lambda^{2n+2} - \lambda^2 \gamma^{2n+2}}{\lambda^2 - \gamma^2}\sigma_w^2 - (\frac{\gamma^2\lambda^{2n+2} - \lambda^2 \gamma^{2n+2}}{(\lambda^2 - \gamma^2)(1-\gamma^2)} - \frac{\gamma^2(\lambda^4 - \lambda^{2n+2})}{(1-\gamma^2)(1-\lambda^2)}    )\sigma_e^2, &  \lambda \neq 1, \gamma \neq 1, \lambda \neq \gamma, \lambda \neq 1/\gamma,
\end{cases}
\end{eqnarray*}
and
\begin{eqnarray*}
D_3 =  \begin{cases}
    n(n-1)\sigma_w^2+ \frac{(n-1)n(n+1)}{3}\sigma_e^2, & \lambda = 1, \gamma = 1, \\
    (\frac{2(\lambda^{n+1} - \lambda^2)}{(1 - \lambda)^2} - \frac{2(\lambda^{2n+1} - \lambda^3)}{(1-\lambda)^2(1+\lambda)})\sigma_w^2 +   (\frac{2(n\lambda^4 - \lambda^{2n+4} - (n-1)\lambda^2 )}{(1-\lambda)(1-\lambda^2)^2} + \frac{2(\lambda^n - n\lambda +n-1)}{(1-\lambda)^2(1+\lambda) \lambda^{n-2}})\sigma_e^2   , & \lambda \neq 1, \gamma =1,\\
    \big( 2\frac{\gamma^3 - \gamma^{2n+1}}{(1-\gamma)^2(1+\gamma)} - 2\frac{\gamma^{n+2} - \gamma^{2n+1}}{(1-\gamma)^2}   \big)\sigma_w^2 + \big(\frac{2}{\gamma^2 - 1}(\frac{\gamma^3 - \gamma^{2n+1}}{(1-\gamma)^2(1+\gamma)} - \frac{\gamma^{n+2} - \gamma^{2n+1}}{(1-\gamma)^2})\\
     - \frac{2\gamma^3}{(\gamma^2 - 1)(1-\gamma)}(n-1 - \frac{\gamma^n - \gamma}{\gamma-1})   \big)\sigma_e^2 , & \lambda = 1, \gamma \neq 1,\\
    \lambda^{2n+2}n(n-1)\sigma_w^2 + (\frac{2n(\lambda^6 - \lambda^{2n+4}) }{(1-\lambda^2)^2} - \frac{2(\lambda^{2n+6}  - n\lambda^8 + (n-1)\lambda^6)}{(1-\lambda^2)^3} - \frac{\lambda^{2n+2}n(n-1)}{1-\lambda^2} )\sigma_e^2 , &  \lambda \neq 1, \gamma \neq 1, \lambda = \gamma,\\
    (\frac{2\lambda^{2n} - 2\gamma^{2n-4}}{\lambda(\lambda - \gamma)^2(\lambda + \gamma)} - \frac{2-2\gamma^{2n-2}}{\lambda - \gamma} )\sigma_w^2   + \big( \frac{2\lambda^{2n-2} - 2}{(1 - \gamma^2)(\lambda - \gamma)^2} - \frac{2(n-1)\gamma}{(1 - \gamma^2)(\lambda -\gamma)}\\
    -  \frac{2\lambda^{2n} - 2\gamma^{2n-4}}{\lambda(\lambda - \gamma)^2(\lambda + \gamma)(1 - \gamma^2)} + \frac{2-2\gamma^{2n-2}}{(1 - \gamma^2)(\lambda - \gamma)}  \big)\sigma_e^2 , &  \lambda \neq 1, \gamma \neq 1, \lambda = 1/\gamma, \\
    (\frac{2\gamma^3\lambda^{2n+3} - 2\lambda^5\gamma^{2n+1}}{\lambda(\lambda - \gamma)^2(\lambda + \gamma)} - \frac{2\gamma^{n+2}\lambda^{n+2} - 2\gamma^{2n+1}\lambda^3}{\lambda - \gamma}     )\sigma_w^2  +  (\frac{2\gamma(\lambda^4 - \lambda^{2n+2})}{(1-\gamma^2)(\lambda - \gamma)(1-  \lambda^2)}\\
    - \frac{2(\lambda^{3}\gamma^2 - \lambda^{n+2}\gamma^{n+1})}{(1 - \gamma^2)(\lambda - \gamma)(1 - \lambda\gamma)} - \frac{2\gamma^3\lambda^{2n+3} - 2\lambda^5\gamma^{2n+1}}{\lambda(\lambda - \gamma)^2(\lambda + \gamma)(1 - \gamma^2)} + \frac{2\gamma^{n+2}\lambda^{n+2} - 2\gamma^{2n+1}\lambda^3}{(\lambda - \gamma)(1 - \gamma^2)}    )\sigma_e^2, &  \lambda \neq 1, \gamma \neq 1, \lambda \neq \gamma, \lambda \neq 1/\gamma.
\end{cases}
\end{eqnarray*}

\end{lemma}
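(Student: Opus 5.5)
The plan is to follow the template of \citep[Lemma 5.1 and Section 5]{zhang2024trained}: compute the gradient of the quadratic-in-$\mH$ loss \eqref{new loss function} in closed form, take expectations over the Gaussian data and the autoregressive weights, and then show that the off-diagonal blocks $\vu_{12},\vu_{21}$ remain zero along the flow.

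\textbf{Gradient and moments.} By \Cref{different form of y n+1}, $\nabla L(\vu)=\E[(\vu^\top\mH\vu-y_{n+1})\,2\mH\vu]$, where $y_{n+1}=\vw_{n+1}^\top\vx_{n+1}$. I would write this gradient block-wise in terms of $\mU$. Expanding $\mH$, the $\sum_i\lambda^{n+1-i}\vz_i\vz_i^\top$ factor has four blocks:
\begin{itemize}
\item the $(1,1)$ block $\sum_{i\le n+1}\lambda^{n+1-i}\vx_i\vx_i^\top$;
\item the cross blocks $\sum_{i\le n}\lambda^{n+1-i}y_i\vx_i$;
\item the scalar block $\sum_{i\le n}\lambda^{n+1-i}y_i^2$.
\end{itemize}
I would then restrict to the invariant set where $\vu_{12}=\vu_{21}=\bm 0$. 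On this set,
\[
\wh y_{n+1}=u_{-1}\Big(\sum_{i\le n}\lambda^{n+1-i}y_i\vx_i^\top\Big)\mU_{11}\vx_{n+1}.
\]
\begin{itemize}
\item The linear term $\E[y_{n+1}\,\partial\wh y_{n+1}]$ needs $\E[\vw_i\vw_{n+1}^\top]$ from \Cref{covariance matrix of gaussian vector}, together with $\E[\vx_i\vx_i^\top]=\mLambda$. This gives $\sum_{i\le n}\lambda^{n+1-i}\E[\vw_i^\top\vw_{n+1}]/d$ times $\mLambda^2$. Summing the geometric series $\sum_i\lambda^{n+1-i}(\gamma^{n+1+i}\sigma_w^2+\frac{\gamma^{n+1+i}-\gamma^{n+1-i}}{\gamma^2-1}\sigma_e^2)$ yields $D_1$. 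The degenerate cases $\lambda=\gamma$, $\lambda=1/\gamma$, $\lambda=1$ and $\gamma=1$ come from the same sum with the ratio equal to one.
\item The quadratic term needs fourth moments $\E[\vx_i\vx_i^\top\mA\vx_j\vx_j^\top]$ with $\mA=\mW\mU_{11}\mLambda\mU_{11}^\top$-type matrices. For $i=j$, the Isserlis/Wick formula gives $2\mLambda\mA\mLambda+\trace(\mA\mLambda)\mLambda$. For $i\neq j$ it gives $\mLambda\mA\mLambda$.
\item Weighting these by $\lambda^{2n+2-i-j}\E[\vw_i\vw_j^\top]$ separates the diagonal sum $\sum_i\lambda^{2(n+1-i)}(\gamma^{2i}\sigma_w^2+\frac{\gamma^{2i}-1}{\gamma^2-1}\sigma_e^2)$, which is $D_2$, from the off-diagonal sum $\sum_{i\ne j}$, which is $D_3$.
\end{itemize}
Using isotropy of $\E[\vw_i\vw_j^\top]$ collapses the result into $u_{-1}^2\wt\mLambda\mLambda\mU_{11}\mLambda$ for the $\mU_{11}$ equation \eqref{dynamics of U11}. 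The trace pairing of the same expression gives \eqref{dynamics of u-1}.

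\textbf{Invariance.} I would show that whenever $\vu_{12}=\vu_{21}=\bm 0$, the corresponding gradient components vanish. Each such component is an expectation of an odd-degree polynomial in the Gaussian vector $(\vx_1,\dots,\vx_{n+1})$, and the weights $\vw_i$ are independent of the inputs, so these expectations are zero by symmetry. Since the initialization in Assumption \ref{Assumption of initialization} lies in this set and the vector field is smooth, uniqueness of ODE solutions keeps the trajectory there for all $t\ge0$.

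\textbf{Main obstacle.} The hardest step is the careful bookkeeping of the double sums defining $D_3$ in every case. These are sums of $\lambda^{2n+2-i-j}\gamma^{|i-j|}$-type terms and $\gamma^{i+j}$ terms over $i\neq j$. I would split them as $2\sum_{i<j}$, evaluate the inner geometric sum first, and handle the resonances $\lambda\gamma=1$ and $\lambda=\gamma$ by taking limits of the generic formula, checking that they match direct summation.
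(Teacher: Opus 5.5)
Your route is the paper's: take the gradient of the quadratic form, reduce to the block $\vu_{12}=\vu_{21}=\bm 0$, get $D_1$ from the cross-covariances $\E[\vw_i\vw_{n+1}^\top]$, and split the weighted fourth moment. The diagonal part gives $D_2$, via $\E[\vx_a\vx_a^\top\vx_a\vx_a^\top]=2\mLambda^2+\trace(\mLambda)\mLambda$, and the off-diagonal part gives $D_3$. Differentiating the reduced predictor directly, instead of using the paper's $\mT$/Kronecker bookkeeping, is a cosmetic difference. The one step that fails as written is the invariance argument.

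At a point with $\vu_{12}=\vu_{21}=\bm 0$, let
$r=u_{-1}\sum_{i=1}^{n}\lambda^{n+1-i}\vw_i^\top\vx_i\vx_i^\top\mU_{11}\vx_{n+1}-\vw_{n+1}^\top\vx_{n+1}$
be the residual. The two gradient blocks that must vanish are
\begin{equation*}
\E\Big[r\Big(\sum_{i=1}^{n+1}\lambda^{n+1-i}\vx_i\vx_i^\top\Big)\mU_{11}\vx_{n+1}\Big]
\quad\text{and}\quad
u_{-1}\,\E\Big[r\Big(\sum_{i=1}^{n}\lambda^{n+1-i}(\vw_i^\top\vx_i)^2\Big)\vx_{n+1}\Big].
\end{equation*}
These integrands are \emph{not} odd in the inputs.
\begin{itemize}
\item Each $\vx_i$ with $i\le n$ enters every factor to degree $0$ or $2$.
\item $\vx_{n+1}$ enters both $r$ and the second factor to odd degree.
\end{itemize}
So both integrands are invariant under $\vx_i\mapsto-\vx_i$ for every $i$, and no input symmetry makes them vanish. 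Your claim that they are odd-degree polynomials in $(\vx_1,\dots,\vx_{n+1})$ is false.

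What actually kills them is parity in the task weights. The residual $r$ is linear in $(\vw_1,\dots,\vw_{n+1})$, while the second factors have degree $0$ and $2$ in the weights. The map $(\vw_0,\ve_1,\dots,\ve_{n+1})\mapsto-(\vw_0,\ve_1,\dots,\ve_{n+1})$ preserves the joint law, since the weight process is centered Gaussian and independent of the inputs. It also sends every $\vw_i$ to $-\vw_i$. Hence both integrands change sign and both expectations are zero. This is also what separates the blocks: in the $\mU_{11}$- and $u_{-1}$-components, $r$ is paired with factors that are linear in the weights, so those components survive. With this correction, your tangency-plus-uniqueness argument is sound. It is more explicit than the paper, which simply defers this step to Zhang et al.

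A caution for the bookkeeping: the sums you describe are the correct definitions of $D_1,D_2,D_3$, but not every printed closed form in the statement agrees with them. For example, with $n=1$ and $\lambda=1\neq\gamma$ the diagonal sum is $\gamma^2\sigma_w^2+\sigma_e^2$, whereas the printed $D_2$ evaluates to $\gamma^2\sigma_w^2$. The paper's own appendix computation has $\frac{n}{1-\gamma^2}$ where the statement has $\frac{n\gamma^2}{1-\gamma^2}$. So do not expect to reproduce the statement verbatim in every case.
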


\begin{proof}
We first calculate the derivatives of $\vu$ as following
\begin{eqnarray}
    \label{derivative of new L}
    \frac{\text{d}\vu(t)}{\text{d}t} = -\nabla_{\vu}L(\vu)  =  -2\E[\<\mH, \vu\vu^\top \>\mH ]\vu + 2\E[\vw_{n+1}^\top \vx_{n+1}\mH]\vu.
\end{eqnarray}
Following the same analysis of \citep[Lemma 5.2]{zhang2024trained}, we can first obtain
\begin{eqnarray}
    \label{first term in derivative of new L}
    &\!\!\!\!\!\!\!\!&2\E[\<\mH, \vu\vu^\top \>\mH\vu]\nonumber\\
    &\!\!\!\!=\!\!\!\!& \frac{1}{2}\E\bigg[\vec(\mU)^\top\vec\bigg(\bigg(\sum_{i=1}^{n+1} \lambda^{n+1-i} \vz_i\vz_i^\top \bigg) \mU \mX\bigg)  \vec\bigg(\bigg(\sum_{i=1}^{n+1} \lambda^{n+1-i} \vz_i\vz_i^\top \bigg) \mU \mX\bigg) \bigg]\nonumber\\
    &\!\!\!\!=\!\!\!\!& \frac{1}{2}\E\bigg[\sum_{i=1}^{d+1}\sum_{j=1}^{d+1}\mT(i,j)\mU(i,j)\vec(\mT)  \bigg]
\end{eqnarray}
where $\mT = \bigg(\sum_{i=1}^{n+1} \lambda^{n+1-i} \vz_i\vz_i^\top \bigg) \mU \mX$ and $\mH$, $\mX$ are defined in \Cref{different form of y n+1}.

In addition, we can derive
\begin{eqnarray}
    \label{second term in derivative of new L}
    &\!\!\!\!\!\!\!\!&2\E[\vw_{n+1}^\top \vx_{n+1}\mH]\vu\nonumber\\
    &\!\!\!\!=\!\!\!\!& \sum_{j=1}^d \E\bigg[(\vx_{n+1}(j)\mX )\otimes \bigg(\vw_{n+1}(j)\bigg(\sum_{i=1}^{n+1} \lambda^{n+1-i} \vz_i\vz_i^\top \bigg) \bigg) \bigg]\vu \nonumber\\
    &\!\!\!\!=\!\!\!\!&  \sum_{j=1}^d \bigg(\E\bigg[\vx_{n+1}(j)\mX \bigg]\otimes \E\bigg[\vw_{n+1}(j)\bigg(\sum_{i=1}^{n+1} \lambda^{n+1-i} \vz_i\vz_i^\top \bigg)  \bigg]\bigg)\vu\nonumber\\
    &\!\!\!\!=\!\!\!\!& \sum_{j=1}^d \bigg(\E\bigg[\begin{bmatrix} {\bm 0}_{d\times d} & \vx_{n+1}(i)\vx_{n+1} \\  \vx_{n+1}(i)\vx_{n+1}^\top  & 0   \end{bmatrix} \bigg]\nonumber\\
    &\!\!\!\!\!\!\!\!& \otimes \E\bigg[ \begin{bmatrix} \sum_{i=1}^{n+1}\lambda^{n+1-i}\vw_{n+1}(j)\vx_i \vx_i^\top  & \sum_{i=1}^{n}\lambda^{n+1-i}\vw_{n+1}(j)\vx_i \vx_i^\top\vw_i \\  \sum_{i=1}^{n}\lambda^{n+1-i}\vw_{n+1}(j)\vw_i^\top\vx_i \vx_i^\top  & \sum_{i=1}^{n}\lambda^{n+1-i}\vw_{n+1}(j)\vw_i^\top\vx_i \vx_i^\top\vw_i   \end{bmatrix} \bigg]\bigg)\vu\nonumber\\
    &\!\!\!\!=\!\!\!\!& D_1\sum_{j=1}^{d} \bigg(\begin{bmatrix} {\bm 0}_{d\times d} & \mLambda(:,j) \\  \mLambda^\top(:,j)  & 0   \end{bmatrix} \otimes \begin{bmatrix} {\bm 0}_{d\times d} & \mLambda(:,j) \\  \mLambda^\top(:,j)  & 0   \end{bmatrix}\bigg)\vu,
\end{eqnarray}
where the second equation follows the fact that $\{\vw_i \}$ are independent of $\{ \vx_i \}$, and the last line follows $\vw_0\overset{\text{i.i.d.}}{\sim}\calN({\bm 0}, \sigma_w^2\mId)$, $\ve_i\overset{\text{i.i.d.}}{\sim}\calN({\bm 0}, \sigma_e^2\mId)$, $\vx_i\overset{\text{i.i.d.}}{\sim}\calN({\bm 0}, {\bm \Lambda})$ and they are mutually independent. In addition, to compute $D_1$, we apply \Cref{covariance matrix of gaussian vector}  to evaluate the cross-covariance $\E[\vw_{n+1}\vw_i^\top] = \begin{cases}
    (\sigma_w^2 + i\sigma_e^2  )\mId, & \gamma = 1 \\
    ( \gamma^{n+1+i}\sigma_w^2 + \frac{ \gamma^{n+1+i} -  \gamma^{1+n-i}}{\gamma^2 -1}\sigma_e^2  )\mId , &  \gamma \neq 1
\end{cases}$ and then get
\begin{eqnarray}
    \label{Definition of D1}
    D_1 &\!\!\!\!=\!\!\!\!& \begin{cases}
    \sum_{i=1}^{n}\lambda^{n+1-i}(\sigma_w^2 + i\sigma_e^2  ), & \gamma = 1 \\
    \sum_{i=1}^{n}\lambda^{n+1-i}( \gamma^{n+1+i}\sigma_w^2 + \frac{ \gamma^{n+1+i} -  \gamma^{1+n-i}}{\gamma^2 -1}\sigma_e^2  ), &  \gamma \neq 1
\end{cases}\nonumber\\
&\!\!\!\!=\!\!\!\!&\begin{cases}
    n\sigma_w^2 + \frac{n(n+1)}{2}\sigma_e^2, & \lambda = \gamma = 1, \\
    \frac{\lambda - \lambda^{n+1}}{1-\lambda}\sigma_w^2 + \frac{\lambda^{n+2} - (n+1)\lambda^2 + n\lambda}{(1-\lambda)^2}\sigma_e^2, &  \lambda \neq 1, \gamma = 1,\\
    \frac{\gamma^{n+2} - \gamma^{2n+2}}{1 - \gamma}\sigma_w^2 + \frac{\gamma - \gamma^{n+1} - \gamma^{n+2} + \gamma^{2n+2}}{(1-\gamma)^2(1+\gamma)}\sigma_e^2, & \lambda = 1, \gamma \neq 1, \\
    \lambda^{2n+2}n\sigma_w^2 + \bigg(\frac{\lambda^2( 1 - \lambda^{2n}) }{(1-\lambda^2)^2} - \frac{\lambda^{2n+2}}{1-\lambda^2}n \bigg)\sigma_e^2, &  \lambda \neq 1, \gamma \neq 1, \lambda = \gamma,\\
    \frac{\gamma -  \gamma^{2n+1} }{\lambda - \gamma}\sigma_w^2 +\bigg( \frac{1}{1-\gamma^2}n - \frac{\gamma -  \gamma^{2n+1}}{(\lambda - \gamma)(1-\gamma^2)}  \bigg)\sigma_e^2, &  \lambda \neq 1, \gamma \neq 1, \lambda = 1/\gamma, \\
    \frac{\lambda^{n+1}\gamma^{n+2} - \lambda \gamma^{2n+2} }{\lambda - \gamma}\sigma_w^2 +\bigg( \frac{\lambda\gamma( 1 - \lambda^n\gamma^n) }{(1-\gamma^2)(1-\lambda\gamma)} - \frac{\lambda^{n+1}\gamma^{n+2} - \lambda \gamma^{2n+2} }{(\lambda - \gamma)(1-\gamma^2)}  \bigg)\sigma_e^2, &  \lambda \neq 1, \gamma \neq 1, \lambda \neq \gamma, \lambda \neq 1/\gamma.
\end{cases}
\end{eqnarray}

Combing \eqref{first term in derivative of new L} and \eqref{second term in derivative of new L}, \eqref{derivative of new L} can be rewritten as
\begin{eqnarray}
    \label{derivative of new L another form}
    \frac{\text{d}\vu(t)}{\text{d}t} = - \frac{1}{2}\E\bigg[\sum_{i=1}^{d+1}\sum_{j=1}^{d+1}\mT(i,j)\mU(i,j)\vec(\mT)  \bigg] + D_1\sum_{j=1}^{d} \bigg(\begin{bmatrix} {\bm 0}_{d\times d} & \mLambda(:,j) \\  \mLambda^\top(:,j)  & 0   \end{bmatrix} \otimes \begin{bmatrix} {\bm 0}_{d\times d} & \mLambda(:,j) \\  \mLambda^\top(:,j)  & 0   \end{bmatrix}\bigg)\vu.
\end{eqnarray}

\paragraph{Dynamics of $\mU$} Under the {Assumption} \ref{Assumption of initialization}, following the same analysis of \citep[Lemma 5.2]{zhang2024trained}, we can guarantee $\vu_{12}(t) = {\bm 0}_d\in\R^{d}$ and $\vu_{21}(t) = {\bm 0}_d\in\R^{d}$ for all time $t\geq 0$.

Now, we will respectively analyze dynamics of $\mU_{11}$ and $u_{-1}$ when  $\vu_{12}(t) = {\bm 0}_d\in\R^{d}$ and $\vu_{21}(t) = {\bm 0}_d\in\R^{d}$.

\paragraph{Dynamics of $\mU_{11}$} First, for $k,l\in[d]$, we use $\frac{\text{d}\vu(t)}{\text{d}t}((l-1)(d+1) + k) = \frac{\text{d}\mU(t)}{\text{d}t}(k,l)$ and then expand \eqref{first term in derivative of new L} as
\begin{eqnarray}
    \label{expansion of first term in the New L}
    &\!\!\!\!\!\!\!\!&\frac{1}{2}\E\bigg[\sum_{i=1}^{d+1}\sum_{j=1}^{d+1}\mT(i,j)\mU(i,j)\mT(k,l)  \bigg]\nonumber\\
    &\!\!\!\!=\!\!\!\!& \frac{1}{2}\E\bigg[\sum_{i=1}^{d}\sum_{j=1}^{d}\mT(i,j)\mU(i,j)\mT(k,l)  \bigg] + \frac{1}{2}\E[\mT(d+1,d+1)u_{-1}\mT(k,l)  ].
\end{eqnarray}

For the term $\mT(i,j)\mU(i,j)\mT(k,l)$, we have
{\small \begin{eqnarray}
    \label{expectation of first term in the expansion}
    &\!\!\!\!\!\!\!\!&\E[\mT(i,j)\mU(i,j)\mT(k,l) ]\nonumber\\
    &\!\!\!\!=\!\!\!\!& \mU(i,j) u_{-1}^2\E\bigg[ \bigg(\sum_{a=1}^n \lambda^{n+1-a}\vx_a(i) \vx_a^\top \vw_a \vx_{n+1}(j) \bigg)\bigg(\sum_{b=1}^n \lambda^{n+1-b} \vx_{n+1}(l) \vw_b^\top \vx_b \vx_b(k) \bigg) \bigg]\nonumber\\
    &\!\!\!\!=\!\!\!\!& \mU(i,j) u_{-1}^2 \mLambda(j,l) \E\bigg[ \bigg(\sum_{a=1}^n \lambda^{n+1-a}\vx_a(i) \vx_a^\top \vw_a  \bigg)\bigg(\sum_{b=1}^n \lambda^{n+1-b} \vw_b^\top \vx_b \vx_b(k) \bigg) \bigg]\nonumber\\
    &\!\!\!\!=\!\!\!\!& \begin{cases}
    \mU(i,j) u_{-1}^2 \mLambda(j,l)\E\big[\sum_{a=1}^n \sum_{b=1}^n \lambda^{2n+2-a - b}(\sigma_w^2 + \min\{a,b\}\sigma_e^2  ) \vx_a(i) \vx_a^\top \vx_b \vx_b(k)\big], & \gamma = 1, \\
    \mU(i,j) u_{-1}^2 \mLambda(j,l)\E\big[\sum_{a=1}^n \sum_{b=1}^n \lambda^{2n+2-a - b}( \gamma^{a+b}\sigma_w^2 + \frac{ \gamma^{a+b} -  \gamma^{|a-b|}}{\gamma^2 -1}\sigma_e^2  ) \vx_a(i) \vx_a^\top \vx_b \vx_b(k)\big], &  \gamma \neq 1,
\end{cases}\nonumber\\
\end{eqnarray}}
where the last line follows \Cref{covariance matrix of gaussian vector}. For the summation, we have
{\small \begin{eqnarray}
    \label{expectation of first term in the expansion 1}
    &\!\!\!\!\!\!\!\!&\frac{1}{2}\E\bigg[\sum_{i=1}^{d}\sum_{j=1}^{d}\mT(i,j)\mU(i,j)\mT(k,l)  \bigg]\nonumber\\
    &\!\!\!\!=\!\!\!\!& \begin{cases}
    \frac{1}{2}u_{-1}^2 \E\big[\sum_{a=1}^n \sum_{b=1}^n \lambda^{2n+2-a - b}(\sigma_w^2 + \min\{a,b\}\sigma_e^2  ) \vx_b(k)\vx_b^\top \vx_a \vx_a^\top \big] \mU_{11}\mLambda(:,l), & \gamma = 1, \\
    \frac{1}{2}u_{-1}^2 \E\big[\sum_{a=1}^n \sum_{b=1}^n \lambda^{2n+2-a - b}( \gamma^{a+b}\sigma_w^2 + \frac{ \gamma^{a+b} -  \gamma^{|a-b|}}{\gamma^2 -1}\sigma_e^2  ) \vx_b(k)\vx_b^\top \vx_a \vx_a^\top \big]\mU_{11}\mLambda(:,l), &  \gamma \neq 1.
\end{cases}\nonumber\\
\end{eqnarray}}

For the term $\mT(d+1,d+1)u_{-1}\mT(k,l)$, we get
{\small \begin{eqnarray}
    \label{expectation of first term in the expansion2}
    &\!\!\!\!\!\!\!\!&\frac{1}{2}\E[\mT(d+1,d+1)u_{-1}\mT(k,l) ]\nonumber\\
    &\!\!\!\!=\!\!\!\!& \frac{1}{2}u_{-1}^2\E\bigg[  \bigg(\sum_{a=1}^n\lambda^{n+1-a} \vw_a^\top \vx_a\vx_a^\top \mU_{11} \vx_{n+1}    \bigg) \bigg(\sum_{b=1}^n\lambda^{n+1-b} \vx_{n+1}(l) \vw_b^\top \vx_b\vx_b(k)  \bigg)   \bigg]\nonumber\\
    &\!\!\!\!=\!\!\!\!& \frac{1}{2}u_{-1}^2\E\bigg[  \bigg(\sum_{a=1}^n\sum_{b=1}^n\lambda^{2n+2-a-b} \vx_b(k) \vx_b^\top \vw_b \vw_a^\top \vx_a\vx_a^\top\bigg)\mU_{11} \vx_{n+1}\vx_{n+1}(l)   \bigg]\nonumber\\
    &\!\!\!\!=\!\!\!\!& \begin{cases}
    \frac{1}{2}u_{-1}^2 \E\big[\sum_{a=1}^n \sum_{b=1}^n \lambda^{2n+2-a - b}(\sigma_w^2 + \min\{a,b\}\sigma_e^2  ) \vx_b(k)\vx_b^\top \vx_a \vx_a^\top \big] \mU_{11}\mLambda(:,l), & \gamma = 1, \\
    \frac{1}{2}u_{-1}^2 \E\big[\sum_{a=1}^n \sum_{b=1}^n \lambda^{2n+2-a - b}( \gamma^{a+b}\sigma_w^2 + \frac{ \gamma^{a+b} -  \gamma^{|a-b|}}{\gamma^2 -1}\sigma_e^2  ) \vx_b(k)\vx_b^\top \vx_a \vx_a^\top \big]\mU_{11}\mLambda(:,l), &  \gamma \neq 1.
\end{cases}\nonumber\\
\end{eqnarray}}

Combing \eqref{expectation of first term in the expansion 1} and \eqref{expectation of first term in the expansion2}, we have
{\small \begin{eqnarray}
    \label{expansion of first term in the New L final}
    &\!\!\!\!\!\!\!\!&\frac{1}{2}\E\bigg[\sum_{i=1}^{d+1}\sum_{j=1}^{d+1}\mT(i,j)\mU(i,j)\mT(k,l)  \bigg]\nonumber\\
    &\!\!\!\!=\!\!\!\!& \begin{cases}
    u_{-1}^2 \E\big[\sum_{a=1}^n \sum_{b=1}^n \lambda^{2n+2-a - b}(\sigma_w^2 + \min\{a,b\}\sigma_e^2  ) \vx_b(k)\vx_b^\top \vx_a \vx_a^\top \big] \mU_{11}\mLambda(:,l), & \gamma = 1, \\
    u_{-1}^2 \E\big[\sum_{a=1}^n \sum_{b=1}^n \lambda^{2n+2-a - b}( \gamma^{a+b}\sigma_w^2 + \frac{ \gamma^{a+b} -  \gamma^{|a-b|}}{\gamma^2 -1}\sigma_e^2  ) \vx_b(k)\vx_b^\top \vx_a \vx_a^\top \big]\mU_{11}\mLambda(:,l), &  \gamma \neq 1.
\end{cases}\nonumber\\
\end{eqnarray}}

In addition, using \eqref{second term in derivative of new L} and following the analysis of \citep[Lemma 5.2]{zhang2024trained}, the $((l-1)(d+1) + k)$-th element of the second term in \eqref{derivative of new L another form} can be computed as $D_1 \mLambda^\top(:,k)\mLambda(:,l)u_{-1}$. Furthermore, we can obtain
{\small \begin{eqnarray*}
    \label{dunamics of U11}
    &\!\!\!\!\!\!\!\!&\frac{\text{d}\mU_{11}(t)}{\text{d}t}\nonumber\\
    &\!\!\!\!=\!\!\!\!& D_1u_{-1}\mLambda^2 -   \begin{cases}
    u_{-1}^2 \E\big[\sum_{a=1}^n \sum_{b=1}^n \lambda^{2n+2-a - b}(\sigma_w^2 + \min\{a,b\}\sigma_e^2  ) \vx_b\vx_b^\top \vx_a \vx_a^\top \big] \mU_{11}\mLambda, & \gamma = 1, \\
    u_{-1}^2 \E\big[\sum_{a=1}^n \sum_{b=1}^n \lambda^{2n+2-a - b}( \gamma^{a+b}\sigma_w^2 + \frac{ \gamma^{a+b} -  \gamma^{|a-b|}}{\gamma^2 -1}\sigma_e^2  ) \vx_b\vx_b^\top \vx_a \vx_a^\top \big]\mU_{11}\mLambda, &  \gamma \neq 1.
\end{cases}
\end{eqnarray*}}

Notice that based on \citep[Lemma A.2]{sayed2011adaptive}, $\E\big[ \vx_b\vx_b^\top \vx_a \vx_a^\top \big] = \begin{cases}
    \mLambda^2, & a \neq b \\
    2\mLambda^2 + \trace(\mLambda)\mLambda, &  a = b
\end{cases}$. We can further derive
{\small \begin{eqnarray}
    \label{statisctis of fourth-order Gaussian vector}
    &\!\!\!\!\!\!\!\!&\begin{cases}
    \E\big[\sum_{a=1}^n \sum_{b=1}^n \lambda^{2n+2-a - b}(\sigma_w^2 + \min\{a,b\}\sigma_e^2  ) \vx_b\vx_b^\top \vx_a \vx_a^\top \big], & \gamma = 1 \\
    \E\big[\sum_{a=1}^n \sum_{b=1}^n \lambda^{2n+2-a - b}( \gamma^{a+b}\sigma_w^2 + \frac{ \gamma^{a+b} -  \gamma^{|a-b|}}{\gamma^2 -1}\sigma_e^2  ) \vx_b\vx_b^\top \vx_a \vx_a^\top \big], &  \gamma \neq 1
\end{cases}\nonumber\\
    &\!\!\!\! = \!\!\!\!& \begin{cases}
    \sum_{a=1}^n\lambda^{2n+2-2a}(\sigma_w^2 + a\sigma_e^2  )(2\mLambda^2 + \trace(\mLambda)\mLambda) + 2\sum_{b=1}^{n-1}\sum_{a = b+1}^n \lambda^{2n+2-a - b}(\sigma_w^2 + b\sigma_e^2  )\mLambda^2, & \gamma = 1 \\
    \sum_{a=1}^n\lambda^{2n+2-2a}( \gamma^{2a}\sigma_w^2 + \frac{ \gamma^{2a} -  1}{\gamma^2 -1}\sigma_e^2  )(2\mLambda^2 + \trace(\mLambda)\mLambda)\\
    + 2\sum_{b=1}^{n-1}\sum_{a = b+1}^n \lambda^{2n+2-a - b}( \gamma^{a+b}\sigma_w^2 + \frac{ \gamma^{a+b} -  \gamma^{a-b}}{\gamma^2 -1}\sigma_e^2  )\mLambda^2, &  \gamma \neq 1
\end{cases}\nonumber\\
&\!\!\!\! = \!\!\!\!& \begin{cases}
    (n\sigma_w^2 + \frac{(n+1)n}{2}\sigma_e^2 )(2\mLambda^2 + \trace(\mLambda)\mLambda) + (n(n-1)\sigma_w^2+ \frac{(n-1)n(n+1)}{3}\sigma_e^2 )\mLambda^2, & \lambda = 1, \gamma = 1, \\
    \big(\frac{\lambda^2 - \lambda^{2n+2}}{1-\lambda^2}\sigma_w^2 + \frac{(n+1)\lambda^2 - \lambda^{2n+2} - n}{1-\lambda^2}\sigma_e^2  \big)(2\mLambda^2 + \trace(\mLambda)\mLambda)\\
    +\big((\frac{2(\lambda^{n+1} - \lambda^2)}{(1 - \lambda)^2} - \frac{2(\lambda^{2n+1} - \lambda^3)}{(1-\lambda)^2(1+\lambda)})\sigma_w^2 +   (\frac{2(n\lambda^4 - \lambda^{2n+4} - (n-1)\lambda^2 )}{(1-\lambda)(1-\lambda^2)^2} + \frac{2(\lambda^n - n\lambda +n-1)}{(1-\lambda)^2(1+\lambda) \lambda^{n-2}})\sigma_e^2    \big)\mLambda^2  , & \lambda \neq 1, \gamma =1,\\
    \big(\frac{\gamma^2 - \gamma^{2n+2}}{1-\gamma^2}\sigma_w^2 + \big( \frac{n}{1-\gamma^2} - \frac{\gamma^2 - \gamma^{2n+2}}{(1 - \gamma^2  )^2}   \big)\sigma_e^2     \big)(2\mLambda^2 + \trace(\mLambda)\mLambda)\\
    + \big( \big( 2\frac{\gamma^3 - \gamma^{2n+1}}{(1-\gamma)^2(1+\gamma)} - 2\frac{\gamma^{n+2} - \gamma^{2n+1}}{(1-\gamma)^2}   \big)\sigma_w^2 + \big(\frac{2}{\gamma^2 - 1}(\frac{\gamma^3 - \gamma^{2n+1}}{(1-\gamma)^2(1+\gamma)} - \frac{\gamma^{n+2} - \gamma^{2n+1}}{(1-\gamma)^2})\\
     - \frac{2\gamma}{(\gamma^2 - 1)(1-\gamma)}(n-1 - \frac{\gamma^n - \gamma}{\gamma-1})   \big)\sigma_e^2 \big)\mLambda^2, & \lambda = 1, \gamma \neq 1,\\
    \big(\lambda^{2n+2}n\sigma_w^2 - (\frac{n\lambda^{2n+2}}{1-\lambda^2} - \frac{\lambda^4 - \lambda^{2n+2}}{(1- \lambda^2 )^2}  )\sigma_e^2  \big)(2\mLambda^2 + \trace(\mLambda)\mLambda)\\
    +\big(\lambda^{2n+2}n(n-1)\sigma_w^2 + (\frac{2n(\lambda^4 - \lambda^{2n+2}) }{(1-\lambda^2)^2} - \frac{2(\lambda^{2n+4}  - n\lambda^6 + (n-1)\lambda^4)}{(1-\lambda^2)^3} - \frac{\lambda^{2n+2}n(n-1)}{1-\lambda^2} )\sigma_e^2    \big)\mLambda^2, &  \lambda \neq 1, \gamma \neq 1, \lambda = \gamma,\\
    \big( \frac{\lambda^{2n} -  \gamma^{2n}}{\lambda^2 - \gamma^2}\sigma_w^2 - (\frac{\lambda^{2n} -  \gamma^{2n}}{(\lambda^2 - \gamma^2)(1-\gamma^2)} - \frac{1 - \lambda^{2n-2}}{(1-\gamma^2)(1-\lambda^2)}    )\sigma_e^2   \big)(2\mLambda^2 + \trace(\mLambda)\mLambda)\\
    +\big((\frac{2\lambda^{2n} - 2\gamma^{2n-4}}{\lambda(\lambda - \gamma)^2(\lambda + \gamma)} - \frac{2-2\gamma^{2n-2}}{\lambda - \gamma} )\sigma_w^2   + \big( \frac{2\lambda^{2n} - 2\lambda^{2}}{(1 - \gamma^2)(\lambda - \gamma)^2} - \frac{2(n-1)\gamma\lambda^{2}}{(1 - \gamma^2)(\lambda -\gamma)}\\
    -  \frac{2\lambda^{2n} - 2\gamma^{2n-4}}{\lambda(\lambda - \gamma)^2(\lambda + \gamma)(1 - \gamma^2)} + \frac{2-2\gamma^{2n-2}}{(1 - \gamma^2)(\lambda - \gamma)}  \big)\sigma_e^2      \big)\mLambda^2 , &  \lambda \neq 1, \gamma \neq 1, \lambda = 1/\gamma, \\
    \big( \frac{\gamma^2\lambda^{2n+2} - \lambda^2 \gamma^{2n+2}}{\lambda^2 - \gamma^2}\sigma_w^2 - (\frac{\gamma^2\lambda^{2n+2} - \lambda^2 \gamma^{2n+2}}{(\lambda^2 - \gamma^2)(1-\gamma^2)} - \frac{\lambda^2 - \lambda^{2n+2}}{(1-\gamma^2)(1-\lambda^2)}    )\sigma_e^2   \big)(2\mLambda^2 + \trace(\mLambda)\mLambda)\\
    +\big((\frac{2\gamma^3\lambda^{2n+3} - 2\lambda^5\gamma^{2n+1}}{\lambda(\lambda - \gamma)^2(\lambda + \gamma)} - \frac{2\gamma^{n+2}\lambda^{n+2} - 2\gamma^{2n+1}\lambda^3}{\lambda - \gamma}     )\sigma_w^2  +  (\frac{2\gamma^{-1}(\lambda^4 - \lambda^{2n+2})}{(1-\gamma^2)(\lambda - \gamma)(1-  \lambda^2)}\\
    - \frac{2(\lambda^{3} - \lambda^{n+2}\gamma^{n-1})}{(1 - \gamma^2)(\lambda - \gamma)(1 - \lambda\gamma)} - \frac{2\gamma^3\lambda^{2n+3} - 2\lambda^5\gamma^{2n+1}}{\lambda(\lambda - \gamma)^2(\lambda + \gamma)(1 - \gamma^2)} + \frac{2\gamma^{n+2}\lambda^{n+2} - 2\gamma^{2n+1}\lambda^3}{(\lambda - \gamma)(1 - \gamma^2)}    )\sigma_e^2   \big)\mLambda^2, &  \lambda \neq 1, \gamma \neq 1, \lambda \neq \gamma, \lambda \neq 1/\gamma.
\end{cases}
\end{eqnarray}}
Now, we can obtain the dynamics of $\mU_{11}$ as following
\begin{eqnarray}
    \label{final dynamics of U11}
    \frac{\text{d}\mU_{11}(t)}{\text{d}t}=  -\vu_{-1}^2\wt\mLambda\mLambda \mU_{11}\mLambda + D_1u_{-1}\mLambda^2,
\end{eqnarray}
where $\wt\mLambda = D_2(2\mLambda +\trace(\mLambda)\mId ) + D_3\mLambda $ with
{\small\begin{eqnarray*}
D_2 =\begin{cases}
    n\sigma_w^2 + \frac{(n+1)n}{2}\sigma_e^2 , & \lambda = 1, \gamma = 1, \\
    \frac{\lambda^2 - \lambda^{2n+2}}{1-\lambda^2}\sigma_w^2 + \frac{(n+1)\lambda^2 - \lambda^{2n+2} - n}{1-\lambda^2}\sigma_e^2  , & \lambda \neq 1, \gamma =1,\\
    \frac{\gamma^2 - \gamma^{2n+2}}{1-\gamma^2}\sigma_w^2 + \big( \frac{n}{1-\gamma^2} - \frac{\gamma^2 - \gamma^{2n+2}}{(1 - \gamma^2  )^2}   \big)\sigma_e^2, & \lambda = 1, \gamma \neq 1,\\
    \lambda^{2n+2}n\sigma_w^2 - (\frac{n\lambda^{2n+2}}{1-\lambda^2} - \frac{\lambda^4 - \lambda^{2n+2}}{(1- \lambda^2 )^2}  )\sigma_e^2, &  \lambda \neq 1, \gamma \neq 1, \lambda = \gamma,\\
    \frac{\lambda^{2n} -  \gamma^{2n}}{\lambda^2 - \gamma^2}\sigma_w^2 - (\frac{\lambda^{2n} -  \gamma^{2n}}{(\lambda^2 - \gamma^2)(1-\gamma^2)} - \frac{1 - \lambda^{2n-2}}{(1-\gamma^2)(1-\lambda^2)}    )\sigma_e^2, &  \lambda \neq 1, \gamma \neq 1, \lambda = 1/\gamma, \\
    \frac{\gamma^2\lambda^{2n+2} - \lambda^2 \gamma^{2n+2}}{\lambda^2 - \gamma^2}\sigma_w^2 - (\frac{\gamma^2\lambda^{2n+2} - \lambda^2 \gamma^{2n+2}}{(\lambda^2 - \gamma^2)(1-\gamma^2)} - \frac{\lambda^2 - \lambda^{2n+2}}{(1-\gamma^2)(1-\lambda^2)}    )\sigma_e^2, &  \lambda \neq 1, \gamma \neq 1, \lambda \neq \gamma, \lambda \neq 1/\gamma,
\end{cases}
\end{eqnarray*}}
and
{\small \begin{eqnarray*}
D_3 =  \begin{cases}
    n(n-1)\sigma_w^2+ \frac{(n-1)n(n+1)}{3}\sigma_e^2, & \lambda = 1, \gamma = 1, \\
    (\frac{2(\lambda^{n+1} - \lambda^2)}{(1 - \lambda)^2} - \frac{2(\lambda^{2n+1} - \lambda^3)}{(1-\lambda)^2(1+\lambda)})\sigma_w^2 +   (\frac{2(n\lambda^4 - \lambda^{2n+4} - (n-1)\lambda^2 )}{(1-\lambda)(1-\lambda^2)^2} + \frac{2(\lambda^n - n\lambda +n-1)}{(1-\lambda)^2(1+\lambda) \lambda^{n-2}})\sigma_e^2   , & \lambda \neq 1, \gamma =1,\\
    \big( 2\frac{\gamma^3 - \gamma^{2n+1}}{(1-\gamma)^2(1+\gamma)} - 2\frac{\gamma^{n+2} - \gamma^{2n+1}}{(1-\gamma)^2}   \big)\sigma_w^2 + \big(\frac{2}{\gamma^2 - 1}(\frac{\gamma^3 - \gamma^{2n+1}}{(1-\gamma)^2(1+\gamma)} - \frac{\gamma^{n+2} - \gamma^{2n+1}}{(1-\gamma)^2})\\
     - \frac{2\gamma}{(\gamma^2 - 1)(1-\gamma)}(n-1 - \frac{\gamma^n - \gamma}{\gamma-1})   \big)\sigma_e^2, & \lambda = 1, \gamma \neq 1,\\
    \lambda^{2n+2}n(n-1)\sigma_w^2 + (\frac{2n(\lambda^4 - \lambda^{2n+2}) }{(1-\lambda^2)^2} - \frac{2(\lambda^{2n+4}  - n\lambda^6 + (n-1)\lambda^4)}{(1-\lambda^2)^3} - \frac{\lambda^{2n+2}n(n-1)}{1-\lambda^2} )\sigma_e^2, &  \lambda \neq 1, \gamma \neq 1, \lambda = \gamma,\\
    (\frac{2\lambda^{2n} - 2\gamma^{2n-4}}{\lambda(\lambda - \gamma)^2(\lambda + \gamma)} - \frac{2-2\gamma^{2n-2}}{\lambda - \gamma} )\sigma_w^2   + \big( \frac{2\lambda^{2n} - 2\lambda^{2}}{(1 - \gamma^2)(\lambda - \gamma)^2} - \frac{2(n-1)\gamma\lambda^{2}}{(1 - \gamma^2)(\lambda -\gamma)}\\
    -  \frac{2\lambda^{2n} - 2\gamma^{2n-4}}{\lambda(\lambda - \gamma)^2(\lambda + \gamma)(1 - \gamma^2)} + \frac{2-2\gamma^{2n-2}}{(1 - \gamma^2)(\lambda - \gamma)}  \big)\sigma_e^2, &  \lambda \neq 1, \gamma \neq 1, \lambda = 1/\gamma, \\
    (\frac{2\gamma^3\lambda^{2n+3} - 2\lambda^5\gamma^{2n+1}}{\lambda(\lambda - \gamma)^2(\lambda + \gamma)} - \frac{2\gamma^{n+2}\lambda^{n+2} - 2\gamma^{2n+1}\lambda^3}{\lambda - \gamma}     )\sigma_w^2  +  (\frac{2\gamma^{-1}(\lambda^4 - \lambda^{2n+2})}{(1-\gamma^2)(\lambda - \gamma)(1-  \lambda^2)}\\
    - \frac{2(\lambda^{3} - \lambda^{n+2}\gamma^{n-1})}{(1 - \gamma^2)(\lambda - \gamma)(1 - \lambda\gamma)} - \frac{2\gamma^3\lambda^{2n+3} - 2\lambda^5\gamma^{2n+1}}{\lambda(\lambda - \gamma)^2(\lambda + \gamma)(1 - \gamma^2)} + \frac{2\gamma^{n+2}\lambda^{n+2} - 2\gamma^{2n+1}\lambda^3}{(\lambda - \gamma)(1 - \gamma^2)}    )\sigma_e^2, &  \lambda \neq 1, \gamma \neq 1, \lambda \neq \gamma, \lambda \neq 1/\gamma.
\end{cases}
\end{eqnarray*}}

\paragraph{Dynamics of $u_{-1}$} Next, similar with \eqref{expansion of first term in the New L}, we expand \eqref{first term in derivative of new L} at $(d+1)^2$-th element:
\begin{eqnarray}
    \label{expansion of first term in the New L last element}
    &\!\!\!\!\!\!\!\!&\frac{1}{2}\E\bigg[\sum_{i=1}^{d+1}\sum_{j=1}^{d+1}\mT(i,j)\mU(i,j)\mT(d+1,d+1)  \bigg]\nonumber\\
    &\!\!\!\!=\!\!\!\!& \frac{1}{2}\E\bigg[\sum_{i=1}^{d}\sum_{j=1}^{d}\mT(i,j)\mU(i,j)\mT(d+1,d+1)  \bigg] + \frac{1}{2}\E[\mT(d+1,d+1)u_{-1}\mT(d+1,d+1)  ]\nonumber\\
    &\!\!\!\!=\!\!\!\!& \frac{u_{-1}}{2}\sum_{i=1}^{n}\sum_{j=1}^{n}\mU(i,j)\E\bigg[  \bigg(\sum_{a=1}^n \lambda^{n+1-a }\vx_a(i)\vx_a^\top\vw_a\vx_{n+1}(j)\bigg) \bigg(\sum_{b=1}^n \lambda^{n+1-b }  \vw_b^\top\vx_b\vx_b^\top \mU_{11}\vx_{n+1}  \bigg)      \bigg]\nonumber\\
    &\!\!\!\!\!\!\!\!& + \frac{u_{-1}}{2}\E \bigg[  \bigg(\sum_{a=1}^n \lambda^{n+1-a}  \vw_a^\top\vx_a\vx_a^\top \mU_{11}\vx_{n+1}   \bigg) \bigg(\sum_{b=1}^n \lambda^{n+1-b}
    \vx_{n+1}^\top\mU_{11}^\top\vx_b\vx_b^\top\vw_b   \bigg)  \bigg]\nonumber\\
    &\!\!\!\!=\!\!\!\!&  \frac{u_{-1}}{2}\trace\bigg(\sum_{i=1}^{n}\sum_{j=1}^{n} \mLambda(:,j)\mU(i,j)\E\bigg[\sum_{a=1}^n\sum_{b=1}^n\lambda^{2n+2-a-b} \vx_a(i) \vx_a^\top \vw_a \vw_b^\top \vx_b\vx_b^\top   \bigg]\mU_{11}    \bigg)\nonumber\\
    &\!\!\!\!\!\!\!\!& +  \frac{u_{-1}}{2} \E\bigg[ \trace\bigg(  \bigg(\sum_{a=1}^n \lambda^{n+1-a}  \vw_a^\top\vx_a\vx_a^\top    \bigg) \mU_{11}\mLambda\mU_{11}^\top \bigg(\sum_{b=1}^n \lambda^{n+1-b} \vx_b\vx_b^\top\vw_b   \bigg)  \bigg)  \bigg]\nonumber\\
    &\!\!\!\!=\!\!\!\!& \frac{u_{-1}}{2} \trace\bigg(  \mLambda \mU_{11}^\top  \E\bigg[\sum_{a=1}^n\sum_{b=1}^n\lambda^{2n+2-a-b} \vx_a \vx_a^\top \vw_a \vw_b^\top \vx_b\vx_b^\top   \bigg]\mU_{11}   \bigg) \nonumber\\
    &\!\!\!\!\!\!\!\!&+  \frac{u_{-1}}{2} \E\bigg[  \trace\bigg(  \sum_{a=1}^n\sum_{b=1}^n\lambda^{2n+2-a-b} \vx_b\vx_b^\top\vw_b\vw_a^\top \vx_a \vx_a^\top \mU_{11}\mLambda\mU_{11}^\top  \bigg)   \bigg]\nonumber\\
    &\!\!\!\!=\!\!\!\!& u_{-1} \trace(\wt\mLambda \mLambda\mU_{11}\mLambda\mU_{11}^\top ),
\end{eqnarray}
where the last line follows \eqref{expectation of first term in the expansion2} and \eqref{final dynamics of U11}.

In addition, according to \citep[Lemma 5.2]{zhang2024trained}, the last element of the second term in \eqref{derivative of new L another form} can be represented as $D_1\trace(\mLambda^2\mU_{11}^\top)$. Hence, we have
\begin{eqnarray}
    \label{dynamics of u-1 final}
    \frac{\text{d}u_{-1}(t)}{\text{d}t} = -\trace(u_{-1}\wt\mLambda \mLambda \mU_{11}\mLambda \mU_{11}^\top ) + D_1\trace(\mLambda^2\mU_{11}^\top).
\end{eqnarray}

\end{proof}

Based on the same analysis of \citep[Lemma A.1]{zhang2024trained}, we have
\begin{lemma}
\label{Lemma: Different loss function}
Consider the gradient flow of $L$ in \eqref{definition of loss function} with respect to $\vu$, initialized according to Assumption~\ref{Assumption of initialization}. This is equivalent to performing gradient flow with respect to $\mU_{11}$ and $u_{-1}$ on the same loss function
\begin{eqnarray}
    \label{loss function another form}
    \wt L(\mU_{11},u_{-1}) = \frac{u_{-1}^2}{2}\trace(\wt\mLambda\mLambda\mU_{11}\mLambda\mU_{11}^\top) - D_1u_{-1}\trace(\mLambda^2\mU_{11}^\top),
\end{eqnarray}
where $\wt\mLambda$ and $D_1$ are defined in \Cref{dynamics of population loss}.
\end{lemma}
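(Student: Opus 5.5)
The plan is to compute the gradients of $\wt L$ in \eqref{loss function another form} directly and check that they coincide with the right-hand sides of \eqref{dynamics of U11} and \eqref{dynamics of u-1} from \Cref{dynamics of population loss}. We also need to check that restricting to the coordinates $(\mU_{11},u_{-1})$ loses nothing. That second point is already settled: \Cref{dynamics of population loss} shows that $\vu_{12}(t)=\vu_{21}(t)={\bm 0}_d$ for all $t$ under Assumption \ref{Assumption of initialization}. Hence the full flow on $\vu$ stays in the invariant subspace parametrized by $(\mU_{11},u_{-1})$. Since $\mU_{11}=\mW_{11}^{KQ}$ and $u_{-1}=w_{-1}^V$ are original parameters, the gradient of $L$ with respect to these coordinates is just the corresponding block of $\nabla_{\vu}L$.

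Next I compute $\nabla_{\mU_{11}}\wt L$. For the quadratic term I use $\nabla_{\mX}\trace(\mA\mX\mB\mX^\top)=\mA\mX\mB^\top+\mA^\top\mX\mB$ with $\mA=\wt\mLambda\mLambda$ and $\mB=\mLambda$. Here $\wt\mLambda=D_2(2\mLambda+\trace(\mLambda)\mId)+D_3\mLambda$ is a polynomial in the symmetric matrix $\mLambda$, so it commutes with $\mLambda$ and $\mA$ is symmetric. The gradient of this term is therefore $u_{-1}^2\wt\mLambda\mLambda\mU_{11}\mLambda$. The linear term gives $-D_1u_{-1}\mLambda^2$, so $-\nabla_{\mU_{11}}\wt L$ is exactly \eqref{dynamics of U11}. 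Differentiating in the scalar $u_{-1}$ gives $u_{-1}\trace(\wt\mLambda\mLambda\mU_{11}\mLambda\mU_{11}^\top)-D_1\trace(\mLambda^2\mU_{11}^\top)$, whose negative is \eqref{dynamics of u-1}.

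To argue that $\wt L$ is the same loss up to an additive constant, not merely a potential with the same gradient, I would expand $L$ on the invariant subspace. There $\wh y_{n+1}=u_{-1}\sum_{i\le n}\lambda^{n+1-i}\vw_i^\top\vx_i\vx_i^\top\mU_{11}\vx_{n+1}$. Its second moment is $u_{-1}^2\trace(\wt\mLambda\mLambda\mU_{11}\mLambda\mU_{11}^\top)$ by the fourth-moment computation \eqref{statisctis of fourth-order Gaussian vector}. Its cross term with $y_{n+1}$ is $D_1u_{-1}\trace(\mLambda^2\mU_{11}^\top)$, obtained via \Cref{covariance matrix of gaussian vector}. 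Thus $L=\wt L+\tfrac12\E[y_{n+1}^2]$.

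The only real subtlety is the symmetry and commutation of $\wt\mLambda\mLambda$ used in the matrix gradient. Everything else is inherited from \Cref{dynamics of population loss} and \citep[Lemma A.1]{zhang2024trained}.
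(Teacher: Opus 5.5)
Your proposal is correct and follows essentially the same route as the paper, which defers to Lemma A.1 of Zhang et al.: the invariance $\vu_{12}=\vu_{21}={\bm 0}_d$ from \Cref{dynamics of population loss} reduces the flow to $(\mU_{11},u_{-1})$, and the negative gradients of $\wt L$ coincide with \eqref{dynamics of U11} and \eqref{dynamics of u-1} (your extra computation that $L=\wt L+\tfrac12\E[y_{n+1}^2]$ on the invariant subspace makes explicit the paper's remark that the two losses differ by a constant). One harmless slip: the general identity is $\nabla_{\mX}\trace(\mA\mX\mB\mX^\top)=\mA^\top\mX\mB^\top+\mA\mX\mB$, with the transposes placed differently from what you wrote, but since $\wt\mLambda\mLambda$ and $\mLambda$ are both symmetric your result $u_{-1}^2\wt\mLambda\mLambda\mU_{11}\mLambda$ is unaffected.
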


Note that $\wt L(\mU_{11}, u_{-1})$ differs from $L(\vu)$ by a constant and can therefore take negative values. We can further derive

\begin{cor}
\label{Corollary: minimum of Different loss function}
The loss function $\wt L(\mU_{11},u_{-1})$ in \Cref{Lemma: Different loss function} satisfies
\begin{eqnarray}
    \label{diff loss function minimum}
    \min_{\mU_{11}\in\R^{d\times d}, \atop u_{-1}\in\R}\wt L(\mU_{11},u_{-1}) = -\frac{D_1^2}{2}\trace(\mLambda^2 \wt\mLambda^{-1} ),
\end{eqnarray}
and
\begin{eqnarray}
    \label{diff loss function minimum11}
    \wt L(\mU_{11},u_{-1}) - \min_{\mU_{11}\in\R^{d\times d}, \atop u_{-1}\in\R}\wt L(\mU_{11},u_{-1}) = \frac{1}{2}\| \wt\mLambda^{\frac{1}{2}}( u_{-1}\mLambda^{\frac{1}{2}}\mU_{11}\mLambda^{\frac{1}{2}} - D_1\mLambda\wt\mLambda^{-1} )  \|_F^2.
\end{eqnarray}

Furthermore, $L$ in \eqref{definition of loss function} satisfies that
\begin{eqnarray}
    \label{diff original loss function minimum11}
    L(\mU_{11},u_{-1}) - \min_{\mU_{11}\in\R^{d\times d}, \atop u_{-1}\in\R} L(\mU_{11},u_{-1}) = \frac{1}{2}\| \wt\mLambda^{\frac{1}{2}}( u_{-1}\mLambda^{\frac{1}{2}}\mU_{11}\mLambda^{\frac{1}{2}} - D_1\mLambda\wt\mLambda^{-1} )  \|_F^2.
\end{eqnarray}
and the global minimum $(\mU_{11}, u_{-1})$ of $L$ in \eqref{definition of loss function}, when initialized according to Assumption~\ref{Assumption of initialization},  satisfies
\begin{eqnarray}
    \label{property of global minimum}
    u_{-1}\mU_{11} = D_1 \wt\mLambda^{-1}.
\end{eqnarray}

\end{cor}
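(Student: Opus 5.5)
Set $\mA := u_{-1}\mLambda^{\frac12}\mU_{11}\mLambda^{\frac12}$. I will complete the square in $\wt L$ of \Cref{Lemma: Different loss function} in the variable $\mA$, since $\wt L$ depends on $(\mU_{11},u_{-1})$ only through $\mA$. This needs $\mLambda\succ 0$ (so that $\mLambda^{\pm\frac12}$ exist) and $\wt\mLambda\succ 0$. The latter holds because $D_2,D_3\ge 0$: they are second moments of weighted sums. Since $\wt\mLambda = D_2(2\mLambda+\trace(\mLambda)\mId)+D_3\mLambda$ is a polynomial in $\mLambda$, it commutes with $\mLambda$ and with every power of $\mLambda$.

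\textbf{Rewriting $\wt L$.} Using cyclicity of the trace and this commutation,
\[
\trace(\wt\mLambda\mLambda\mU_{11}\mLambda\mU_{11}^\top)=u_{-1}^{-2}\trace(\wt\mLambda\mA\mA^\top)=u_{-1}^{-2}\|\wt\mLambda^{\frac12}\mA\|_F^2 .
\]
Similarly, $u_{-1}\trace(\mLambda^2\mU_{11}^\top)=\trace(\mLambda\mA^\top)=\langle \mA,\mLambda\rangle$. Hence
\[
\wt L=\tfrac12\|\wt\mLambda^{\frac12}\mA\|_F^2-D_1\langle \mA,\mLambda\rangle .
\]
Let $\mB:=D_1\mLambda\wt\mLambda^{-1}$. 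Expanding $\frac12\|\wt\mLambda^{\frac12}(\mA-\mB)\|_F^2$ gives three terms:
\begin{itemize}
\item $\frac12\|\wt\mLambda^{\frac12}\mA\|_F^2$;
\item the cross term $-\trace(\mA^\top\wt\mLambda\mB)=-D_1\trace(\mA^\top\mLambda)$, using $\wt\mLambda\mB=D_1\mLambda$ by commutation;
\item $\frac12\trace(\mB^\top\wt\mLambda\mB)=\frac{D_1^2}{2}\trace(\mLambda^2\wt\mLambda^{-1})$.
\end{itemize}
This yields
\[
\wt L=\tfrac12\|\wt\mLambda^{\frac12}(\mA-\mB)\|_F^2-\tfrac{D_1^2}{2}\trace(\mLambda^2\wt\mLambda^{-1}) .
\]
The minimum value $-\frac{D_1^2}{2}\trace(\mLambda^2\wt\mLambda^{-1})$ is attained, for example, at $u_{-1}=1$ and $\mU_{11}=D_1\wt\mLambda^{-1}$. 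This gives both \eqref{diff loss function minimum} and \eqref{diff loss function minimum11}.

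\textbf{Transfer to $L$.} \Cref{Lemma: Different loss function} gives $L=\wt L+\text{const}$ along the invariant set $\vu_{12}=\vu_{21}={\bm 0}$, which is where the flow lives under Assumption~\ref{Assumption of initialization} by \Cref{dynamics of population loss}. So the same identity holds for $L$ on that set, giving \eqref{diff original loss function minimum11}. This is the delicate point. The constant is $\frac12\E[y_{n+1}^2]$ because the cross terms vanish on that set, and the minimum is taken over this restricted parametrization.

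\textbf{Characterizing the minimizer.} At a global minimum the squared norm vanishes. Since $\wt\mLambda^{\frac12}$ is invertible, $\mA=\mB$, i.e.\ $u_{-1}\mLambda^{\frac12}\mU_{11}\mLambda^{\frac12}=D_1\mLambda\wt\mLambda^{-1}$. Multiplying by $\mLambda^{-\frac12}$ on both sides and commuting $\wt\mLambda^{-1}$ past $\mLambda^{\frac12}$ gives $u_{-1}\mU_{11}=D_1\wt\mLambda^{-1}$, which is \eqref{property of global minimum}.

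I expect the main obstacle to be justifying invertibility, namely $\mLambda\succ0$ and $D_2\ge0$ with $D_2+D_3>0$. Positivity of $D_2$ follows from its origin as $\sum_a\lambda^{2(n+1-a)}\E[(\vw_a(j))^2]$, which avoids sign-checking the closed forms.
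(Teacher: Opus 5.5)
Your proof is correct and follows the paper's route. The paper also completes the square in $u_{-1}\mLambda^{1/2}\mU_{11}\mLambda^{1/2}$ using that $\wt\mLambda$ commutes with $\mLambda$, reads off the minimum from nonnegativity of the weighted Frobenius term, passes to $L$ by a constant shift (which you correctly identify as $\frac12\E[y_{n+1}^2]$ on the invariant set $\vu_{12}=\vu_{21}={\bm 0}$), and inverts at a zero residual. Your explicit attainment point and your positivity argument for $\wt\mLambda$ via $D_2\ge 0$, $D_2+D_3>0$ are small rigor additions the paper leaves implicit. One cosmetic fix: write $u_{-1}^2\trace(\wt\mLambda\mLambda\mU_{11}\mLambda\mU_{11}^\top)=\trace(\wt\mLambda\mA\mA^\top)$ directly rather than dividing by $u_{-1}^2$, which may vanish.
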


\begin{proof}
First, we testify $\frac{1}{2}\trace(\wt\mLambda (u_{-1}\mLambda^{\frac{1}{2}}\mU_{11}\mLambda^{\frac{1}{2}} - D_1\mLambda\wt\mLambda^{-1}   )(u_{-1}\mLambda^{\frac{1}{2}}\mU_{11}\mLambda^{\frac{1}{2}} - D_1\mLambda\wt\mLambda^{-1}   )^\top) - \frac{D_1^2}{2}\trace(\mLambda^2 \wt\mLambda^{-1} ) $ is equivalent to $\wt L(\mU_{11},u_{-1})$. Specifically, we have
\begin{eqnarray}
    \label{another form of loss function}
    &\!\!\!\!\!\!\!\!&\frac{1}{2}\trace\bigg(\wt\mLambda (u_{-1}\mLambda^{\frac{1}{2}}\mU_{11}\mLambda^{\frac{1}{2}} - D_1\mLambda\wt\mLambda^{-1}   )(u_{-1}\mLambda^{\frac{1}{2}}\mU_{11}\mLambda^{\frac{1}{2}} - D_1\mLambda\wt\mLambda^{-1}   )^\top\bigg)\nonumber\\
    &\!\!\!\!\!\!\!\!&- \frac{D_1^2}{2}\trace(\mLambda^2 \wt\mLambda^{-1} )\nonumber\\
    &\!\!\!\! = \!\!\!\!& \frac{1}{2}\trace\bigg( \wt\mLambda  ( u_{-1}^2 \mLambda^{\frac{1}{2}}\mU_{11}\mLambda\mU_{11}^\top\mLambda^{\frac{1}{2}} - D_1u_{-1}\mLambda\wt\mLambda^{-1}\mLambda^{\frac{1}{2}}\mU_{11}\mLambda^{\frac{1}{2}}\nonumber\\
    &\!\!\!\!\!\!\!\!& - D_1u_1\mLambda^{\frac{1}{2}}\mU_{11}\mLambda^{\frac{3}{2}}\wt\mLambda^{-1} + D_1^2\wt\mLambda^{-2}\mLambda^2   )  \bigg) - \frac{D_1^2}{2}\trace(\mLambda^2 \wt\mLambda^{-1} )\nonumber\\
    &\!\!\!\! = \!\!\!\!& \frac{u_{-1}^2}{2}\trace(\wt\mLambda\mLambda\mU_{11}\mLambda\mU_{11}^\top) - D_1u_{-1}\trace(\mLambda^2\mU_{11}^\top)\nonumber\\
    &\!\!\!\! = \!\!\!\!& \wt L(\mU_{11},u_{-1}),
\end{eqnarray}
where the second equation uses the fact that $\wt\mLambda$ and $\mLambda$ commute.

Notice that $\wt\mLambda $ and $(u_{-1}\mLambda^{\frac{1}{2}}\mU_{11}\mLambda^{\frac{1}{2}} - D_1\mLambda\wt\mLambda^{-1}   )(u_{-1}\mLambda^{\frac{1}{2}}\mU_{11}\mLambda^{\frac{1}{2}} - D_1\mLambda\wt\mLambda^{-1}   )^\top$ are positive semidefinite matrices, we have $\frac{1}{2}\trace(\wt\mLambda (u_{-1}\mLambda^{\frac{1}{2}}\mU_{11}\mLambda^{\frac{1}{2}} - D_1\mLambda\wt\mLambda^{-1}   )(u_{-1}\mLambda^{\frac{1}{2}}\mU_{11}\mLambda^{\frac{1}{2}} - D_1\mLambda\wt\mLambda^{-1}   )^\top)\geq 0$. Hence, we have $\min_{\mU_{11}\in\R^{d\times d}, \atop u_{-1}\in\R}\wt L(\mU_{11},u_{-1}) = -\frac{D_1^2}{2}\trace(\mLambda^2 \wt\mLambda^{-1} )$. In addition, \eqref{diff loss function minimum11} follows $\trace(\mA^\top\mA) = \|\mA\|_F^2$ for any matrix $\mA$. Since $L(\mU_{11},u_{-1}) = \wt L(\mU_{11},u_{-1}) + C$ for some constant $C$, condition \eqref{diff original loss function minimum11} is satisfied.

To attain the global minimum characterized by \eqref{diff loss function minimum}, it is necessary that $u_{-1}\mLambda^{\frac{1}{2}}\mU_{11}\mLambda^{\frac{1}{2}} - D_1\mLambda\wt\mLambda^{-1}    = {\bm 0}$. Using the identity $\mLambda\wt\mLambda^{-1} = \mLambda^{\frac{1}{2}}\wt\mLambda^{-1}\mLambda^{\frac{1}{2}}$, this condition reduces to
$u_{-1}\mU_{11} = D_1 \wt\mLambda^{-1}$.

\end{proof}

Following the same analysis of \citep[Lemma A.3, Lemma A.4 and Lemma A.5]{zhang2024trained}, we can directly obtain
\begin{lemma}
\label{Lemma: relationshiop u1 U11}
Consider the gradient flow of $L$ in \eqref{definition of loss function} with respect to $\vu$, initialized according to Assumption~\ref{Assumption of initialization}.  For any $t\geq 0$, it holds that
\begin{eqnarray}
    \label{relationshiop u1 U11}
    u_{-1}^2(t) = \trace(\mU_{11}(t)\mU_{11}^\top(t)).
\end{eqnarray}

\end{lemma}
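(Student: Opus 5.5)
The plan is to show that the quantity $u_{-1}^2 - \trace(\mU_{11}\mU_{11}^\top)$ is a conserved quantity of the gradient flow, and that it vanishes at time $0$. This is the usual balancedness argument for two-layer factorized losses.

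\textbf{Step 1: use the reduced dynamics.} By \Cref{dynamics of population loss} and \Cref{Lemma: Different loss function}, the flow on $\vu$ reduces to gradient flow on $\wt L(\mU_{11},u_{-1})$ in \eqref{loss function another form}. Here $\vu_{12}(t)=\vu_{21}(t)={\bm 0}_d$ for all $t$. The loss $\wt L$ depends on the parameters only through the product $u_{-1}\mU_{11}$, since
\[
\wt L = \tfrac12\trace(\wt\mLambda\mLambda (u_{-1}\mU_{11})\mLambda (u_{-1}\mU_{11})^\top) - D_1\trace(\mLambda^2 (u_{-1}\mU_{11})^\top).
\]
Set $\mA = u_{-1}\mU_{11}$ and $\mG=\nabla_{\mA} f(\mA)$, where $f$ is the expression above. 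The chain rule then gives
\[
\frac{\text{d}\mU_{11}}{\text{d}t} = -u_{-1}\mG, \qquad \frac{\text{d}u_{-1}}{\text{d}t} = -\langle \mG, \mU_{11}\rangle.
\]
These agree with \eqref{dynamics of U11} and \eqref{dynamics of u-1}, taking $\mG = \wt\mLambda\mLambda\mA\mLambda - D_1\mLambda^2$ and using that $\wt\mLambda$ and $\mLambda$ are symmetric and commute.

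\textbf{Step 2: show the quantity is conserved.} Differentiating,
\[
\frac{\text{d}}{\text{d}t}\trace(\mU_{11}\mU_{11}^\top) = 2\Big\langle \mU_{11}, \frac{\text{d}\mU_{11}}{\text{d}t}\Big\rangle = -2u_{-1}\langle \mU_{11},\mG\rangle,
\qquad
\frac{\text{d}}{\text{d}t}u_{-1}^2 = 2u_{-1}\frac{\text{d}u_{-1}}{\text{d}t} = -2u_{-1}\langle \mG,\mU_{11}\rangle.
\]
The two derivatives coincide, so $u_{-1}^2 - \trace(\mU_{11}\mU_{11}^\top)$ is constant in $t$. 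The one point needing care is the second form of $\text{d}u_{-1}/\text{d}t$. One should check directly from \eqref{dynamics of u-1} that $\trace(u_{-1}\wt\mLambda\mLambda\mU_{11}\mLambda\mU_{11}^\top) - D_1\trace(\mLambda^2\mU_{11}^\top) = \langle \mG,\mU_{11}\rangle$. This holds by cyclicity of the trace and symmetry of $\wt\mLambda\mLambda$. I expect this bookkeeping to be the only real obstacle, and it is mild.

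\textbf{Step 3: check the initialization.} Under Assumption~\ref{Assumption of initialization}, $u_{-1}(0)=\sigma$ and $\mU_{11}(0)=\sigma\mTheta\mTheta^\top$. Hence
\[
\trace(\mU_{11}(0)\mU_{11}(0)^\top) = \sigma^2\|\mTheta\mTheta^\top\|_F^2 = \sigma^2 = u_{-1}(0)^2.
\]
The conserved quantity is therefore $0$, which gives \eqref{relationshiop u1 U11} for all $t\ge0$. This requires the flow to exist for all $t\ge 0$, which is assumed implicitly, as in \citep{zhang2024trained}.
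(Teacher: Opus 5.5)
Your proposal is correct and follows essentially the paper's route. The paper defers to the balancedness lemma of \citep{zhang2024trained}, which likewise shows $\frac{\text{d}}{\text{d}t}u_{-1}^2=\frac{\text{d}}{\text{d}t}\trace(\mU_{11}\mU_{11}^\top)$ from \eqref{dynamics of U11}--\eqref{dynamics of u-1}, then checks that both sides equal $\sigma^2$ at $t=0$ using $\|\mTheta\mTheta^\top\|_F=1$. Your repackaging via $\mA=u_{-1}\mU_{11}$ only makes the underlying rescaling symmetry explicit; the two derivatives can also be matched term by term directly from the stated ODEs, with no commutation or symmetry check needed.
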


\begin{lemma}
\label{Lemma: nonnegativitity of u1}
Consider the gradient flow of $L$ in \eqref{definition of loss function} with respect to $\vu$, initialized according to Assumption~\ref{Assumption of initialization}.  If the initial scale satisfies
\begin{eqnarray}
    \label{requirement of sigma}
    0< \sigma < \sqrt{\frac{2D_1}{\sqrt{d}\|\wt\mLambda\|}},
\end{eqnarray}
then, for any $t\geq 0$, it holds that
\begin{eqnarray}
    \label{u-1 >0}
    u_{-1}(t) \geq \sqrt{\frac{\sigma^2}{2D_1\sqrt{d}\|\mLambda\|^2}\|\mLambda\mTheta \|_F^2(2D_1 - \sqrt{d}\sigma^2\|\wt\mLambda\|) } >0.
\end{eqnarray}
\end{lemma}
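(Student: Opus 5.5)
The plan is to follow the argument of \citep[Lemma A.4]{zhang2024trained}. The three ingredients are: monotonicity of the loss along gradient flow, the balance identity of \Cref{Lemma: relationshiop u1 U11}, and a continuity argument to keep the sign of $u_{-1}$. By \Cref{Lemma: Different loss function}, the flow is gradient flow on $\wt L(\mU_{11},u_{-1})$. Hence $\wt L(\mU_{11}(t),u_{-1}(t))\le \wt L(\mU_{11}(0),u_{-1}(0))$ for all $t\ge 0$.

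\textbf{Step 1 (upper bound at initialization).} By Assumption~\ref{Assumption of initialization}, $u_{-1}(0)=\sigma$ and $\mU_{11}(0)=\sigma\mA$ with $\mA=\mTheta\mTheta^\top$ PSD and $\|\mA\|_F=1$. Substituting gives
\begin{eqnarray*}
\wt L(0)=\frac{\sigma^4}{2}\trace(\wt\mLambda\mLambda\mA\mLambda\mA)-D_1\sigma^2\trace(\mLambda^2\mA).
\end{eqnarray*}
The last trace equals $\|\mLambda\mTheta\|_F^2$. For the first trace, I will use that $\wt\mLambda$ commutes with $\mLambda$ and (implicitly, as in the theorem) is positive definite. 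Then $\trace(\wt\mLambda\mLambda\mA\mLambda\mA)\le\|\wt\mLambda\|\,\trace(\mLambda\mA\mLambda\mA)\le \|\wt\mLambda\|\,\|\mA\|\,\|\mLambda\mTheta\|_F^2\le\sqrt d\,\|\wt\mLambda\|\,\|\mLambda\mTheta\|_F^2$, using $\|\mA\|\le\|\mA\|_F=1$; the factor $\sqrt d$ is harmless slack matching the stated constant. This yields
\begin{eqnarray*}
\wt L(t)\le\wt L(0)\le-\frac{\sigma^2}{2}\|\mLambda\mTheta\|_F^2\big(2D_1-\sqrt d\,\sigma^2\|\wt\mLambda\|\big),
\end{eqnarray*}
which is strictly negative by the condition on $\sigma$ and $\mLambda\mTheta\neq{\bm 0}$.

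\textbf{Step 2 (lower bound in terms of $u_{-1}^2$).} Since the quadratic term of $\wt L$ is nonnegative (by the PSD property again), I get $\wt L(t)\ge -D_1 u_{-1}\trace(\mLambda^2\mU_{11}^\top)\ge -D_1|u_{-1}|\,\|\mLambda^2\|_F\|\mU_{11}\|_F$. By \Cref{Lemma: relationshiop u1 U11}, $\|\mU_{11}\|_F=|u_{-1}|$, and $\|\mLambda^2\|_F\le\sqrt d\|\mLambda\|^2$. Hence $\wt L(t)\ge -D_1\sqrt d\|\mLambda\|^2u_{-1}^2(t)$. Combining with Step 1 gives
\begin{eqnarray*}
u_{-1}^2(t)\ge\frac{\sigma^2}{2D_1\sqrt d\|\mLambda\|^2}\|\mLambda\mTheta\|_F^2\big(2D_1-\sqrt d\sigma^2\|\wt\mLambda\|\big)>0 .
\end{eqnarray*}

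\textbf{Step 3 (sign).} Here $u_{-1}(0)=\sigma>0$, $u_{-1}(t)$ is continuous, and $u_{-1}^2$ is bounded away from zero for all $t$. Therefore $u_{-1}$ never crosses zero, and taking the positive square root gives \eqref{u-1 >0}. The main point needing care is Step 1: it requires bounding the quartic term at initialization and confirming that $\wt\mLambda$ is positive definite (i.e.\ $D_2,D_3$ give a PSD combination), so that the quadratic term can be dropped in Step 2. I expect that positivity to follow from $\wt\mLambda$ being the expectation of a PSD fourth-moment matrix.
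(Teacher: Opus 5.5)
Your proof is correct and follows the same route the paper takes: the paper gives no separate argument but defers to \citep[Lemma A.4]{zhang2024trained}, whose structure is exactly yours (monotone decrease of $\wt L$, the upper bound on $\wt L(0)$, the Cauchy--Schwarz lower bound via the balance identity $u_{-1}^2=\|\mU_{11}\|_F^2$, and continuity from $u_{-1}(0)=\sigma>0$). The positivity you flag in Step 2 does hold, because $\wt\mLambda\mLambda=\E[\vv\vv^\top]$ with $\vv=\sum_{b=1}^{n}\lambda^{n+1-b}\vx_b\vx_b^\top\vw_b$; your Step 1 bound $\trace(\wt\mLambda\mLambda\mA\mLambda\mA)\le\|\wt\mLambda\|\trace(\mLambda\mA\mLambda\mA)$ needs only that $\wt\mLambda$ is symmetric and commutes with $\mLambda$.
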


\section{Proof of \Cref{Theorem of global minimum simplified}}
\label{proof of Theorem of global minimum}

The following theorem is established under a more general setting with $\lambda, \gamma > 0$.
\begin{theorem}
\label{Theorem of global minimum}
Consider gradient flow over the population loss in \eqref{definition of loss function}. Assume that the initial task weight $\vw_0\overset{\text{i.i.d.}}{\sim}\calN({\bm 0}, \sigma_w^2\mId)$, noises $\ve_i\overset{\text{i.i.d.}}{\sim}\calN({\bm 0}, \sigma_e^2\mId)$ and inputs $\vx_i\overset{\text{i.i.d.}}{\sim}\calN({\bm 0}, {\bm \Lambda})$.  Suppose the initialization satisfies {Assumption} \ref{Assumption of initialization}  with
initialization scale $\sigma>0$  satisfying $\sigma < \sqrt{\frac{2D_1}{\sqrt{d}\|\wt\mLambda\|}}$ where
{\small \begin{eqnarray*}
D_1 =\begin{cases}
    n\sigma_w^2 + \frac{n(n+1)}{2}\sigma_e^2, & \lambda = \gamma = 1, \\
    \frac{\lambda - \lambda^{n+1}}{1-\lambda}\sigma_w^2 + \frac{\lambda^{n+2} - (n+1)\lambda^2 + n\lambda}{(1-\lambda)^2}\sigma_e^2, &  \lambda \neq 1, \gamma = 1,\\
    \frac{\gamma^{n+2} - \gamma^{2n+2}}{1 - \gamma}\sigma_w^2 + \frac{\gamma - \gamma^{n+1} - \gamma^{n+2} + \gamma^{2n+2}}{(1-\gamma)^2(1+\gamma)}\sigma_e^2, & \lambda = 1, \gamma \neq 1, \\
    \lambda^{2n+2}n\sigma_w^2 + \bigg(\frac{\lambda^2( 1 - \lambda^{2n}) }{(1-\lambda^2)^2} - \frac{\lambda^{2n+2}}{1-\lambda^2}n \bigg)\sigma_e^2, &  \lambda \neq 1, \gamma \neq 1, \lambda = \gamma,\\
    \frac{\gamma -  \gamma^{2n+1} }{\lambda - \gamma}\sigma_w^2 +\bigg( \frac{1}{1-\gamma^2}n - \frac{\gamma -  \gamma^{2n+1}}{(\lambda - \gamma)(1-\gamma^2)}  \bigg)\sigma_e^2, &  \lambda \neq 1, \gamma \neq 1, \lambda = 1/\gamma, \\
    \frac{\lambda^{n+1}\gamma^{n+2} - \lambda \gamma^{2n+2} }{\lambda - \gamma}\sigma_w^2 +\bigg( \frac{\lambda\gamma( 1 - \lambda^n\gamma^n) }{(1-\gamma^2)(1-\lambda\gamma)} - \frac{\lambda^{n+1}\gamma^{n+2} - \lambda \gamma^{2n+2} }{(\lambda - \gamma)(1-\gamma^2)}  \bigg)\sigma_e^2, &  \lambda \neq 1, \gamma \neq 1, \lambda \neq \gamma, \lambda \neq 1/\gamma,
\end{cases}
\end{eqnarray*}}
and $\wt\mLambda = D_2(2\mLambda +\trace(\mLambda)\mId ) + D_3\mLambda $ with
{\small  \begin{eqnarray*}
D_2 =\begin{cases}
    n\sigma_w^2 + \frac{(n+1)n}{2}\sigma_e^2 , & \lambda = 1, \gamma = 1, \\
    \frac{\lambda^2 - \lambda^{2n+2}}{1-\lambda^2}\sigma_w^2 + \frac{(n+1)\lambda^2 - \lambda^{2n+2} - n}{1-\lambda^2}\sigma_e^2  , & \lambda \neq 1, \gamma =1,\\
    \frac{\gamma^2 - \gamma^{2n+2}}{1-\gamma^2}\sigma_w^2 + \big( \frac{n}{1-\gamma^2} - \frac{\gamma^2 - \gamma^{2n+2}}{(1 - \gamma^2  )^2}   \big)\sigma_e^2, & \lambda = 1, \gamma \neq 1,\\
    \lambda^{2n+2}n\sigma_w^2 - (\frac{n\lambda^{2n+2}}{1-\lambda^2} - \frac{\lambda^4 - \lambda^{2n+2}}{(1- \lambda^2 )^2}  )\sigma_e^2, &  \lambda \neq 1, \gamma \neq 1, \lambda = \gamma,\\
    \frac{\lambda^{2n} -  \gamma^{2n}}{\lambda^2 - \gamma^2}\sigma_w^2 - (\frac{\lambda^{2n} -  \gamma^{2n}}{(\lambda^2 - \gamma^2)(1-\gamma^2)} - \frac{1 - \lambda^{2n-2}}{(1-\gamma^2)(1-\lambda^2)}    )\sigma_e^2, &  \lambda \neq 1, \gamma \neq 1, \lambda = 1/\gamma, \\
    \frac{\gamma^2\lambda^{2n+2} - \lambda^2 \gamma^{2n+2}}{\lambda^2 - \gamma^2}\sigma_w^2 - (\frac{\gamma^2\lambda^{2n+2} - \lambda^2 \gamma^{2n+2}}{(\lambda^2 - \gamma^2)(1-\gamma^2)} - \frac{\lambda^2 - \lambda^{2n+2}}{(1-\gamma^2)(1-\lambda^2)}    )\sigma_e^2, &  \lambda \neq 1, \gamma \neq 1, \lambda \neq \gamma, \lambda \neq 1/\gamma,
\end{cases}
\end{eqnarray*}}
and
{\small \begin{eqnarray*}
D_3 =  \begin{cases}
    n(n-1)\sigma_w^2+ \frac{(n-1)n(n+1)}{3}\sigma_e^2, & \lambda = 1, \gamma = 1, \\
    (\frac{2(\lambda^{n+1} - \lambda^2)}{(1 - \lambda)^2} - \frac{2(\lambda^{2n+1} - \lambda^3)}{(1-\lambda)^2(1+\lambda)})\sigma_w^2 +   (\frac{2(n\lambda^4 - \lambda^{2n+4} - (n-1)\lambda^2 )}{(1-\lambda)(1-\lambda^2)^2} + \frac{2(\lambda^n - n\lambda +n-1)}{(1-\lambda)^2(1+\lambda) \lambda^{n-2}})\sigma_e^2   , & \lambda \neq 1, \gamma =1,\\
    \big( 2\frac{\gamma^3 - \gamma^{2n+1}}{(1-\gamma)^2(1+\gamma)} - 2\frac{\gamma^{n+2} - \gamma^{2n+1}}{(1-\gamma)^2}   \big)\sigma_w^2 + \big(\frac{2}{\gamma^2 - 1}(\frac{\gamma^3 - \gamma^{2n+1}}{(1-\gamma)^2(1+\gamma)} - \frac{\gamma^{n+2} - \gamma^{2n+1}}{(1-\gamma)^2})\\
     - \frac{2\gamma}{(\gamma^2 - 1)(1-\gamma)}(n-1 - \frac{\gamma^n - \gamma}{\gamma-1})   \big)\sigma_e^2, & \lambda = 1, \gamma \neq 1,\\
    \lambda^{2n+2}n(n-1)\sigma_w^2 + (\frac{2n(\lambda^4 - \lambda^{2n+2}) }{(1-\lambda^2)^2} - \frac{2(\lambda^{2n+4}  - n\lambda^6 + (n-1)\lambda^4)}{(1-\lambda^2)^3} - \frac{\lambda^{2n+2}n(n-1)}{1-\lambda^2} )\sigma_e^2, &  \lambda \neq 1, \gamma \neq 1, \lambda = \gamma,\\
    (\frac{2\lambda^{2n} - 2\gamma^{2n-4}}{\lambda(\lambda - \gamma)^2(\lambda + \gamma)} - \frac{2-2\gamma^{2n-2}}{\lambda - \gamma} )\sigma_w^2   + \big( \frac{2\lambda^{2n} - 2\lambda^{2}}{(1 - \gamma^2)(\lambda - \gamma)^2} - \frac{2(n-1)\gamma\lambda^{2}}{(1 - \gamma^2)(\lambda -\gamma)}\\
    -  \frac{2\lambda^{2n} - 2\gamma^{2n-4}}{\lambda(\lambda - \gamma)^2(\lambda + \gamma)(1 - \gamma^2)} + \frac{2-2\gamma^{2n-2}}{(1 - \gamma^2)(\lambda - \gamma)}  \big)\sigma_e^2, &  \lambda \neq 1, \gamma \neq 1, \lambda = 1/\gamma, \\
    (\frac{2\gamma^3\lambda^{2n+3} - 2\lambda^5\gamma^{2n+1}}{\lambda(\lambda - \gamma)^2(\lambda + \gamma)} - \frac{2\gamma^{n+2}\lambda^{n+2} - 2\gamma^{2n+1}\lambda^3}{\lambda - \gamma}     )\sigma_w^2  +  (\frac{2\gamma^{-1}(\lambda^4 - \lambda^{2n+2})}{(1-\gamma^2)(\lambda - \gamma)(1-  \lambda^2)}\\
    - \frac{2(\lambda^{3} - \lambda^{n+2}\gamma^{n-1})}{(1 - \gamma^2)(\lambda - \gamma)(1 - \lambda\gamma)} - \frac{2\gamma^3\lambda^{2n+3} - 2\lambda^5\gamma^{2n+1}}{\lambda(\lambda - \gamma)^2(\lambda + \gamma)(1 - \gamma^2)} + \frac{2\gamma^{n+2}\lambda^{n+2} - 2\gamma^{2n+1}\lambda^3}{(\lambda - \gamma)(1 - \gamma^2)}    )\sigma_e^2, &  \lambda \neq 1, \gamma \neq 1, \lambda \neq \gamma, \lambda \neq 1/\gamma.
\end{cases}
\end{eqnarray*}}
Then gradient flow converges to a global minimum of the population loss \eqref{definition of loss function}. Moreover, $\mW_{KQ}(0)$ and $\mW_V(0)$ respectively converge to
\begin{eqnarray}
    \label{out of WV global minimum}
    \lim_{t\to\infty}\mW_V(t) = \sqrt{D_1\|\wt\mLambda^{-1} \|_F}\begin{bmatrix}{\bm 0}_{d\times d} &   {\bm 0}_{d} \\ {\bm 0}_{d}^\top   & 1 \end{bmatrix} \ \ \text{and} \ \ \lim_{t\to\infty}\mW_{KQ}(t) = \sqrt{D_1\|\wt\mLambda^{-1} \|_F^{-1}}\begin{bmatrix}\wt\mLambda^{-1} &   {\bm 0}_{d} \\ {\bm 0}_{d}^\top   & 0 \end{bmatrix}.
\end{eqnarray}
\end{theorem}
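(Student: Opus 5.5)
The plan is to follow the architecture of \citep[Theorem 4]{zhang2024trained}, with the stationary constants replaced by $D_1$ and $\wt\mLambda$ from \Cref{dynamics of population loss}. By \Cref{different form of y n+1} and \Cref{dynamics of population loss}, under Assumption~\ref{Assumption of initialization} the off-diagonal blocks $\vu_{12},\vu_{21}$ stay zero for all time. By \Cref{Lemma: Different loss function}, the gradient flow is then exactly gradient flow on
\[
\wt L(\mU_{11},u_{-1})=\tfrac{u_{-1}^2}{2}\trace(\wt\mLambda\mLambda\mU_{11}\mLambda\mU_{11}^\top)-D_1u_{-1}\trace(\mLambda^2\mU_{11}^\top).
\]
\Cref{Corollary: minimum of Different loss function} identifies its global minimum value and gives the excess-loss formula $\frac12\|\wt\mLambda^{1/2}(u_{-1}\mLambda^{1/2}\mU_{11}\mLambda^{1/2}-D_1\mLambda\wt\mLambda^{-1})\|_F^2$. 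So it suffices to show that this quantity tends to zero along the flow, and then to identify the limit point.

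\textbf{Step 1 (a Polyak--\L{}ojasiewicz inequality along the trajectory).} Write $\mathbf{R}=u_{-1}\mLambda^{1/2}\mU_{11}\mLambda^{1/2}-D_1\mLambda\wt\mLambda^{-1}$. From the dynamics \eqref{dynamics of U11}, the gradient with respect to $\mU_{11}$ equals $u_{-1}\mLambda^{1/2}\wt\mLambda\mathbf{R}\mLambda^{1/2}$, using that $\wt\mLambda$ and $\mLambda$ commute. Hence
\[
\|\nabla_{\mU_{11}}\wt L\|_F^2\ \ge\ u_{-1}^2\,\lambda_{\min}(\mLambda)^2\,\lambda_{\min}(\wt\mLambda)\,\|\wt\mLambda^{1/2}\mathbf{R}\|_F^2 .
\]
\Cref{Lemma: nonnegativitity of u1} supplies the bound $u_{-1}(t)\ge c>0$, and this is exactly where the hypothesis $\sigma<\sqrt{2D_1/(\sqrt d\|\wt\mLambda\|)}$ enters. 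Combining the two gives $\frac{d}{dt}(\wt L-\wt L^\star)\le -2\mu(\wt L-\wt L^\star)$ with $\mu=c^2\lambda_{\min}(\mLambda)^2\lambda_{\min}(\wt\mLambda)>0$. The excess loss therefore decays exponentially, and $u_{-1}\mU_{11}\to D_1\wt\mLambda^{-1}$.

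\textbf{Step 2 (the limit).} Exponential decay of the gradient norm makes $\int_0^\infty\|\dot\vu\|\,dt$ finite, so $(\mU_{11},u_{-1})$ converges. The conserved balance relation of \Cref{Lemma: relationshiop u1 U11}, $u_{-1}^2=\|\mU_{11}\|_F^2$, passes to the limit. Combined with $u_{-1}\mU_{11}=D_1\wt\mLambda^{-1}$ and $u_{-1}>0$, it gives $u_{-1}^2\|\mU_{11}\|_F=D_1\|\wt\mLambda^{-1}\|_F$, hence $u_{-1}^2=D_1\|\wt\mLambda^{-1}\|_F$. Therefore
\[
u_{-1}=\sqrt{D_1\|\wt\mLambda^{-1}\|_F}, \qquad \mU_{11}=\sqrt{D_1\|\wt\mLambda^{-1}\|_F^{-1}}\,\wt\mLambda^{-1}.
\]
This is exactly \eqref{out of WV global minimum} after mapping $\mU$ back to $\mW_V,\mW_{KQ}$. \Cref{Corollary: minimum of Different loss function} then confirms this point is a global minimum of $L$.

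\textbf{Main obstacle.} The delicate point is that Steps 1--2 need $D_1>0$ and $\wt\mLambda\succ0$ across every case of $(\lambda,\gamma)$, including $\lambda=\gamma$, $\lambda=1/\gamma$, and $\gamma=1$. Otherwise the minimizer formula, the PL constant, and the $\sigma$ condition are all meaningless. I would avoid the closed forms and use the defining sums instead. First, $D_1=\sum_{i=1}^n\lambda^{n+1-i}\E[\vw_{n+1}(1)\vw_i(1)]$, and each summand is a positive covariance by \Cref{covariance matrix of gaussian vector}. Second, $D_2\mLambda+D_3\mLambda$-type terms come from $\E[\|\sum_a\lambda^{n+1-a}\vx_a\vx_a^\top\vw_a\|^2]$-style expressions. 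More concretely, $D_2=\sum_a\lambda^{2(n+1-a)}\E[\vw_a(1)^2]>0$, and $D_2+D_3=\E[(\sum_a\lambda^{n+1-a}\vw_a(1))^2]\ge0$, so $\wt\mLambda\succeq D_2(\mLambda+\trace(\mLambda)\mId)\succ0$. The remaining effort is to check that the case-split closed forms agree with these sums; I would treat that as routine geometric-series bookkeeping that the dynamics lemma has already done.
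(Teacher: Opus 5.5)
Your proposal is correct and follows essentially the same route as the paper: a Polyak--\L{}ojasiewicz inequality built from the $\mU_{11}$-gradient alone plus the uniform lower bound on $u_{-1}$ from \Cref{Lemma: nonnegativitity of u1}, exponential decay of the excess loss, and then the balance relation of \Cref{Lemma: relationshiop u1 U11} with $u_{-1}>0$ to pin down the limit. Your additions refine steps the paper leaves implicit: the eigenvalue-based PL constant, the explicit convergence-of-trajectory step, and the check $D_1>0$, $\wt\mLambda\succeq D_2(\mLambda+\trace(\mLambda)\mId)\succ 0$ from the defining sums; for the convergence step, $u_{-1}^2=\|u_{-1}\mU_{11}\|_F\to D_1\|\wt\mLambda^{-1}\|_F$ is cleaner than the asserted (not proved) exponential decay of the gradient norm, and the slip $u_{-1}^2\|\mU_{11}\|_F$ should read $u_{-1}\|\mU_{11}\|_F$.
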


\begin{proof}
We begin by deriving a lower bound on the squared gradient norm of the loss function
\begin{eqnarray}
    \label{lower bound of gradient}
    &\!\!\!\!\!\!\!\!&\|\nabla L(\mU_{11}(t),u_{-1}(t))  \|_2^2\nonumber\\
    &\!\!\!\!=\!\!\!\!& \bigg\|\frac{\partial L(\mU_{11}(t),u_{-1}(t))}{\partial \mU_{11}(t)}\bigg\|_F^2 + \bigg|\frac{\partial L(\mU_{11}(t),u_{-1}(t))}{\partial u_{-1}(t)}\bigg|^2 \geq \bigg\|\frac{\partial L(\mU_{11}(t),u_{-1}(t))}{\partial \mU_{11}(t)}\bigg\|_F^2\nonumber\\
    &\!\!\!\!=\!\!\!\!& \|  u_{-1}^2\wt\mLambda\mLambda \mU_{11}\mLambda - D_1u_{-1}\mLambda^2 \|_F^2\nonumber\\
    &\!\!\!\!=\!\!\!\!& u_{-1}^2\|\wt\mLambda\mLambda^{\frac{1}{2}}(u_{-1}\mLambda^{\frac{1}{2}}\mU_{11}\mLambda^{\frac{1}{2}} - \mLambda\wt\mLambda^{-1}   ) \mLambda^{\frac{1}{2}}  \|_F^2\nonumber\\
    &\!\!\!\!\geq\!\!\!\!&  \frac{\sigma^2}{2D_1\sqrt{d}\|\mLambda\|^2}\|\mLambda\mTheta \|_F^2(2D_1 - \sqrt{d}\sigma^2\|\wt\mLambda\|)\|\wt\mLambda\mLambda^{\frac{1}{2}}(u_{-1}\mLambda^{\frac{1}{2}}\mU_{11}\mLambda^{\frac{1}{2}} - \mLambda\wt\mLambda^{-1}   ) \mLambda^{\frac{1}{2}}  \|_F^2,
\end{eqnarray}
where the third equation uses that $\wt\mLambda$ and $\mLambda$ commute and the last line follows \Cref{Lemma: nonnegativitity of u1}.

In addition, based on {Corollary} \ref{Corollary: minimum of Different loss function}, we have
\begin{eqnarray}
    \label{upper bound of difference between L and minimum}
    &\!\!\!\!\!\!\!\!& L(\mU_{11}(t),u_{-1}(t)) - \min_{\mU_{11}\in\R^{d\times d}, \atop u_{-1}\in\R} L(\mU_{11},u_{-1})\nonumber\\
    &\!\!\!\!=\!\!\!\!& \frac{1}{2}\| \wt\mLambda^{\frac{1}{2}}( u_{-1}\mLambda^{\frac{1}{2}}\mU_{11}\mLambda^{\frac{1}{2}} - D_1\mLambda\wt\mLambda^{-1} )  \|_F^2\nonumber\\
    &\!\!\!\!\leq\!\!\!\!& \frac{1}{2}\| \wt\mLambda\mLambda^{\frac{1}{2}}( u_{-1}\mLambda^{\frac{1}{2}}\mU_{11}\mLambda^{\frac{1}{2}} - D_1\mLambda\wt\mLambda^{-1} ) \mLambda^{\frac{1}{2}} \|_F^2\|\wt\mLambda^{-\frac{1}{2}}\mLambda^{-\frac{1}{2}}\|_F^2\|\mLambda^{-\frac{1}{2}}\|_F^2\nonumber\\
    &\!\!\!\!=\!\!\!\!& \frac{1}{2}\| \wt\mLambda\mLambda^{\frac{1}{2}}( u_{-1}\mLambda^{\frac{1}{2}}\mU_{11}\mLambda^{\frac{1}{2}} - D_1\mLambda\wt\mLambda^{-1} ) \mLambda^{\frac{1}{2}} \|_F^2\trace(\wt\mLambda^{-1}\mLambda^{-1} )\trace(\mLambda^{-1}),
\end{eqnarray}
where the first inequality uses that $\wt\mLambda$ and $\mLambda$ commute.

Combing \eqref{lower bound of gradient} and \eqref{upper bound of difference between L and minimum}, we can get Polyak--\L{}ojasiewicz (PL) inequality as follows:
\begin{eqnarray}
    \label{PL inequality appendix}
    &\!\!\!\!\!\!\!\!&\|\nabla L(\mU_{11}(t),u_{-1}(t))  \|_2^2\nonumber\\
    &\!\!\!\!\geq\!\!\!\!& \frac{\sigma^2\|\mLambda\mTheta \|_F^2(2D_1 - \sqrt{d}\sigma^2\|\wt\mLambda\|)}{D_1\sqrt{d}\|\mLambda\|^2\trace(\wt\mLambda^{-1}\mLambda^{-1} )\trace(\mLambda^{-1})} \big(L(\mU_{11}(t),u_{-1}(t)) - \min_{\mU_{11}\in\R^{d\times d}, \atop u_{-1}\in\R} L(\mU_{11},u_{-1})\big)\nonumber\\
    &\!\!\!\!:=\!\!\!\!& \alpha \big(L(\mU_{11}(t),u_{-1}(t)) - \min_{\mU_{11}\in\R^{d\times d}, \atop u_{-1}\in\R} L(\mU_{11},u_{-1})\big).
\end{eqnarray}

From the dynamics of gradient flow and the PL condition, we have
\begin{eqnarray}
    \label{dynamics of difference}
    &\!\!\!\!\!\!\!\!&\frac{\text{d}}{\text{d}t}\big(L(\mU_{11}(t),u_{-1}(t)) - \min_{\mU_{11}\in\R^{d\times d}, \atop u_{-1}\in\R} L(\mU_{11},u_{-1})\big)\nonumber\\
    &\!\!\!\! = \!\!\!\!& \bigg\<\frac{\text{d}\mU_{11}(t)}{\text{d}t}, \frac{\partial L(\mU_{11}(t),u_{-1}(t))}{\partial \mU_{11}(t)}  \bigg\> + \bigg\<\frac{\text{d}u_{-1}(t)}{\text{d}t}, \frac{\partial L(\mU_{11}(t),u_{-1}(t))}{\partial u_{-1}(t)}  \bigg\>\nonumber\\
    &\!\!\!\! = \!\!\!\!& - \bigg\|\frac{\partial L(\mU_{11}(t),u_{-1}(t))}{\partial \mU_{11}(t)}\bigg\|_F^2 - \bigg|\frac{\partial L(\mU_{11}(t),u_{-1}(t))}{\partial u_{-1}(t)}\bigg|^2\nonumber\\
    &\!\!\!\! \leq \!\!\!\!& -\alpha\big(L(\mU_{11}(t),u_{-1}(t)) - \min_{\mU_{11}\in\R^{d\times d}, \atop u_{-1}\in\R} L(\mU_{11},u_{-1})\big).
\end{eqnarray}

When $t\to \infty$, we have
\begin{eqnarray}
    \label{solution of ODE function}
    0&\!\!\!\! \leq \!\!\!\!& L(\mU_{11}(t),u_{-1}(t)) - \min_{\mU_{11}\in\R^{d\times d}, \atop u_{-1}\in\R} L(\mU_{11},u_{-1})\nonumber\\
    &\!\!\!\! \leq \!\!\!\!& e^{-\alpha t}\big(L(\mU_{11}(0),u_{-1}(0)) - \min_{\mU_{11}\in\R^{d\times d}, \atop u_{-1}\in\R} L(\mU_{11},u_{-1}) \big)\to 0,
\end{eqnarray}
which implies $\lim_{t\to \infty} L(\mU_{11}(t),u_{-1}(t)) - \min_{\mU_{11}\in\R^{d\times d}, \atop u_{-1}\in\R} L(\mU_{11},u_{-1}) = 0$.

According to {Corollary} \ref{Corollary: minimum of Different loss function}, \Cref{Lemma: relationshiop u1 U11} and \Cref{Lemma: nonnegativitity of u1}, we have $u_{-1}(t)\mU_{11}(t) = D_1 \wt\mLambda^{-1}$, $u_{-1}(t) = \|\mU_{11}(t)\|_F$ and further obtain
\begin{eqnarray}
    \label{global minimum of U11 and u1}
    \lim_{t\to \infty}\mU_{11}(t) = \sqrt{D_1\|\wt\mLambda^{-1} \|_F^{-1}}\wt\mLambda^{-1}, \ \ \lim_{t\to \infty}u_{-1}(t) = \sqrt{D_1\|\wt\mLambda^{-1} \|_F}.
\end{eqnarray}
This completes the proof.

\end{proof}

\section{Proof of \Cref{theorem recovery error using global minimum simplified}}
\label{proof of Theorem of recovery error global minimum}

Similar with \Cref{Theorem of global minimum}, the following theorem is also established under a more general setting with $\lambda, \gamma > 0$.
\begin{theorem} (Training error)
\label{theorem recovery error using global minimum}
Assuming the conditions in \Cref{Theorem of global minimum} hold, the recovery error between \eqref{output of y global minimum} and \eqref{definition of y} is
\begin{eqnarray}
    \label{recovery error final global minimum}
    \E[(\wh y_{n+1} - y_{n+1})^2] &\!\!\!\! = \!\!\!\!& D_1^2\trace\big(D_2(\mLambda\trace(\wt\mLambda^{-1}\mLambda\wt\mLambda^{-1}\mLambda ) + 2\mLambda \wt\mLambda^{-1}\mLambda \wt\mLambda^{-1}\mLambda)\nonumber\\
    &\!\!\!\! \!\!\!\!& + D_3 \mLambda \wt\mLambda^{-1}\mLambda \wt\mLambda^{-1}\mLambda  \big)+ D_4\trace(\mLambda) - 2D_1^2\trace(\mLambda \wt\mLambda^{-1}\mLambda ),
\end{eqnarray}
where $D_4 = \begin{cases} \sigma_w^2 + (n+1)\sigma_e^2  , & \gamma = 1 \\
     \gamma^{2n+2}\sigma_w^2 + \frac{1- \gamma^{2n+2}}{1-\gamma^2 }\sigma_e^2  , &  \gamma \neq 1   \end{cases}$.
\end{theorem}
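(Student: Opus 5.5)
We expand the square, $\E[(\wh y_{n+1}-y_{n+1})^2]=\E[\wh y_{n+1}^2]-2\E[\wh y_{n+1}y_{n+1}]+\E[y_{n+1}^2]$, and evaluate the three terms separately. We work with the global-minimum predictor \eqref{output of y global minimum}, $\wh y_{n+1}=D_1\vv^\top\wt\mLambda^{-1}\vx_{n+1}$, where $\vv:=\sum_{i=1}^n\lambda^{n+1-i}\vx_i\vx_i^\top\vw_i$. Throughout we use three facts: $\vx_{n+1}$ is independent of $(\vv,\vw_{n+1})$; the $\{\vw_i\}$ are independent of the $\{\vx_i\}$; and $\mLambda$ commutes with $\wt\mLambda$.

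\textbf{Last term.} $\E[y_{n+1}^2]=\trace(\mLambda\,\E[\vw_{n+1}\vw_{n+1}^\top])$. By \Cref{covariance matrix of gaussian vector} with $a=b=n+1$, this equals $D_4\trace(\mLambda)$ with $D_4$ exactly as stated, in both the case $\gamma=1$ and the case $\gamma\neq1$.

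\textbf{Cross term.} First condition on everything except $\vx_{n+1}$. This gives
\[
\E[\wh y_{n+1}y_{n+1}]=D_1\,\E[\vv^\top\wt\mLambda^{-1}\mLambda\vw_{n+1}]=D_1\trace\big(\wt\mLambda^{-1}\mLambda\,\E[\vw_{n+1}\vv^\top]\big).
\]
Next, $\E[\vw_{n+1}\vv^\top]=\sum_i\lambda^{n+1-i}\E[\vw_{n+1}\vw_i^\top]\mLambda$. The scalar coefficient $\sum_i\lambda^{n+1-i}c_{n+1,i}$, where $\E[\vw_{n+1}\vw_i^\top]=c_{n+1,i}\mId$, is exactly $D_1$ as computed in \eqref{Definition of D1}. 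Hence $\E[\vw_{n+1}\vv^\top]=D_1\mLambda$, and the cross term contributes $-2D_1^2\trace(\mLambda\wt\mLambda^{-1}\mLambda)$.

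\textbf{Quadratic term.} Again integrate out $\vx_{n+1}$:
\[
\E[\wh y_{n+1}^2]=D_1^2\trace\big(\wt\mLambda^{-1}\mLambda\wt\mLambda^{-1}\,\E[\vv\vv^\top]\big).
\]
The key computation is
\[
\E[\vv\vv^\top]=\sum_{a,b}\lambda^{2n+2-a-b}\,\E[\vx_a\vx_a^\top\vw_a\vw_b^\top\vx_b\vx_b^\top].
\]
First condition on the $\vw$'s, using $\E[\vw_a\vw_b^\top]=c_{ab}\mId$. The diagonal term $a=b$ requires the Gaussian fourth moment $\E[\vx\vx^\top\mA\vx\vx^\top]=2\mLambda\mA\mLambda+\trace(\mA\mLambda)\mLambda$ for symmetric $\mA$. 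With $\mA=\mId$ this gives $c_{aa}(2\mLambda^2+\trace(\mLambda)\mLambda)$; with a general $\mA$ it gives $c_{aa}(2\mLambda\mA\mLambda+\trace(\mA\mLambda)\mLambda)$. The off-diagonal terms $a\ne b$ factor as $c_{ab}\mLambda^2$.

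Summing with the weights $\lambda^{2n+2-a-b}$ reproduces the same scalar sums already evaluated in \eqref{statisctis of fourth-order Gaussian vector}: the diagonal sum gives $D_2$ and the off-diagonal sum gives $D_3$. One detail matters here. The trace must be taken with the test matrix $\mA=\wt\mLambda^{-1}\mLambda\wt\mLambda^{-1}$ inserted, not collapsed to $\mLambda\wt\mLambda$. So we keep the structure
\[
D_2\big(2\,\trace(\mLambda\mA\mLambda)+\trace(\mA\mLambda)\trace(\mLambda)\big)+D_3\,\trace(\mLambda\mA\mLambda).
\]
Rewriting this via cyclicity and commutativity yields exactly
\[
D_1^2\trace\big(D_2(\mLambda\trace(\wt\mLambda^{-1}\mLambda\wt\mLambda^{-1}\mLambda)+2\mLambda\wt\mLambda^{-1}\mLambda\wt\mLambda^{-1}\mLambda)+D_3\mLambda\wt\mLambda^{-1}\mLambda\wt\mLambda^{-1}\mLambda\big).
\]

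\textbf{Main obstacle.} The only delicate step is this fourth-moment bookkeeping. We must check that the scalar coefficients multiplying $\mLambda\mA\mLambda$ and $\trace(\mA\mLambda)\mLambda$ are exactly the $D_2$ and $D_3$ of \Cref{dynamics of population loss} in every $(\lambda,\gamma)$ case. The plan is to observe that \Cref{dynamics of population loss} already performed the identical weighted sums with $\mA=\mId$. Since the $\vw$-covariance is isotropic, the dependence on $\mA$ enters only through the Gaussian moment identity, so no new case-by-case summation is needed.

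Adding the three pieces gives \eqref{recovery error final global minimum}.
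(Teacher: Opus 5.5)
Your proposal is correct and follows the paper's proof essentially step for step: the same three-term expansion, Lemma~\ref{covariance matrix of gaussian vector} with $a=b=n+1$ for the $D_4$ term, the cross-covariance sum defining $D_1$ for the cross term, and the Gaussian fourth-moment identity with the weighted sums of \eqref{statisctis of fourth-order Gaussian vector} for the $D_2,D_3$ term. One minor point: your warning against collapsing is unnecessary, because $\E[\vv\vv^\top]=\wt\mLambda\mLambda$ is valid, and tracing it against $\wt\mLambda^{-1}\mLambda\wt\mLambda^{-1}$ gives the quadratic term $D_1^2\trace(\mLambda\wt\mLambda^{-1}\mLambda)$, which equals the structured expression (so the error simplifies to $D_4\trace(\mLambda)-D_1^2\trace(\mLambda\wt\mLambda^{-1}\mLambda)$).
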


\begin{proof} To establish the result, we expand $\E[(\wh y_{n+1} - y_{n+1})^2]$ as
\begin{eqnarray}
    \label{recovery error final recovery error global minimum}
    &\!\!\!\!  \!\!\!\!& \E[(\wh y_{n+1} - y_{n+1})^2]\nonumber\\
     &\!\!\!\! = \!\!\!\!& \E\bigg[\bigg(D_1 \bigg( \sum_{i=1}^n\lambda^{n+1-i} \vw_i^\top\vx_i\vx_i^\top \bigg)\wt\mLambda^{-1} \vx_{n+1} - \vw_{n+1}^\top \vx_{n+1}\bigg)^2\bigg]\nonumber\\
    &\!\!\!\! = \!\!\!\!& D_1^2\E\bigg[ \bigg( \sum_{i=1}^n\lambda^{n+1-i} \vw_i^\top\vx_i\vx_i^\top \bigg)\wt\mLambda^{-1} \vx_{n+1}\vx_{n+1}^\top \wt\mLambda^{-1} \bigg( \sum_{j=1}^n\lambda^{n+1-j} \vx_j\vx_j^\top\vw_j \bigg) \bigg]\nonumber\\
    &\!\!\!\!  \!\!\!\!& \!+\! \E[ \vw_{n+1}^\top \vx_{n+1}\vx_{n+1}^\top\vw_{n+1} ] \!-\! 2D_1\E\bigg[ \bigg( \sum_{i=1}^n\lambda^{n+1-i} \vw_i^\top\vx_i\vx_i^\top \bigg)\wt\mLambda^{-1} \vx_{n+1} \vx_{n+1}^\top\vw_{n+1} \bigg].
\end{eqnarray}

For the first term of the last equation in \eqref{recovery error final recovery error global minimum}, we have
\begin{eqnarray}
    \label{first term recovery error final recovery error global minimum}
    &\!\!\!\!  \!\!\!\!& \E\bigg[ \bigg( \sum_{i=1}^n\lambda^{n+1-i} \vw_i^\top\vx_i\vx_i^\top \bigg)\wt\mLambda^{-1} \vx_{n+1}\vx_{n+1}^\top \wt\mLambda^{-1} \bigg( \sum_{j=1}^n\lambda^{n+1-j} \vx_j\vx_j^\top\vw_j \bigg) \bigg]\nonumber\\
     &\!\!\!\! = \!\!\!\!& \E\bigg[\trace\bigg( \sum_{i=1}^n \sum_{j=1}^n \lambda^{2n+2-i-j} \vx_i\vx_i^\top \wt\mLambda^{-1}\mLambda  \wt\mLambda^{-1} \vx_j\vx_j^\top\vw_j \vw_i^\top \bigg)\bigg]\nonumber\\
     &\!\!\!\! = \!\!\!\!&D_2\trace(\mLambda\trace(\wt\mLambda^{-1}\mLambda\wt\mLambda^{-1}\mLambda ) + 2\mLambda \wt\mLambda^{-1}\mLambda \wt\mLambda^{-1}\mLambda) + D_3 \trace(\mLambda \wt\mLambda^{-1}\mLambda \wt\mLambda^{-1}\mLambda),
\end{eqnarray}
where the last line uses \eqref{statisctis of fourth-order Gaussian vector} and $\E\big[ \vx_b\vx_b^\top \mA\vx_a \vx_a^\top \big] = \begin{cases}
    \mLambda\mA\mLambda, & a \neq b \\
    2\mLambda\mA\mLambda + \trace(\mA\mLambda)\mLambda, &  a = b
\end{cases}$ in \citep[Lemma A.2]{sayed2011adaptive}.

For the second term of the last equation in \eqref{recovery error final recovery error global minimum}, we can derive
\begin{eqnarray}
    \label{second term recovery error final recovery error global minimum}
    \E[ \vw_{n+1}^\top \vx_{n+1}\vx_{n+1}^\top\vw_{n+1} ] &\!\!\!\! = \!\!\!\!&\E[ \trace(\mLambda \vw_{n+1}\vw_{n+1}^\top) ]\nonumber\\
     &\!\!\!\! = \!\!\!\!&D_4\trace(\mLambda),
\end{eqnarray}
where we define $D_4 = \begin{cases} \sigma_w^2 + (n+1)\sigma_e^2  , & \gamma = 1 \\
     \gamma^{2n+2}\sigma_w^2 + \frac{1- \gamma^{2n+2}}{1-\gamma^2 }\sigma_e^2  , &  \gamma \neq 1   \end{cases}$ which follows the result in \Cref{covariance matrix of gaussian vector}.

For the third term  of the last equation in \eqref{recovery error final recovery error global minimum}, we get
\begin{eqnarray}
    \label{third term recovery error final recovery error global minimum}
    \E\bigg[ \bigg( \sum_{i=1}^n\lambda^{n+1-i} \vw_i^\top\vx_i\vx_i^\top \bigg)\wt\mLambda^{-1} \vx_{n+1} \vx_{n+1}^\top\vw_{n+1} \bigg] &\!\!\!\! = \!\!\!\!&\E\bigg[ \trace\bigg( \sum_{i=1}^n\lambda^{n+1-i}\mLambda\wt\mLambda^{-1} \mLambda \vw_{n+1}\vw_i^\top \bigg)\bigg]\nonumber\\
     &\!\!\!\! = \!\!\!\!&D_1\trace(\mLambda \wt\mLambda^{-1}\mLambda ),
\end{eqnarray}
where the last line uses \eqref{Definition of D1}. This completes the proof.

\end{proof}

\section{Proof of \Cref{theorem recovery error using global minimum test}}
\label{proof of Theorem of recovery error global minimum test}

\begin{proof} We first expand $\E[(\wt y_{m+1} - \ol y_{m+1})^2]$ as
\begin{eqnarray}
    \label{recovery error final recovery error global minimum test}
    &\!\!\!\!  \!\!\!\!& \E[(\wt y_{m+1} - \ol y_{m+1})^2]\nonumber\\
     &\!\!\!\! = \!\!\!\!& \E\bigg[\bigg(D_1 \bigg( \sum_{i=1}^m\ol \lambda^{m+1-i} \ol \vw_i^\top\ol \vx_i\ol \vx_i^\top \bigg)\wt\mLambda^{-1} \ol \vx_{m+1} - \ol \vw_{m+1}^\top \ol \vx_{m+1}\bigg)^2\bigg]\nonumber\\
    &\!\!\!\! = \!\!\!\!& D_1^2\E\bigg[ \bigg( \sum_{i=1}^m\ol \lambda^{m+1-i} \ol \vw_i^\top\ol \vx_i\ol \vx_i^\top \bigg)\wt\mLambda^{-1} \ol \vx_{m+1}\ol \vx_{m+1}^\top \wt\mLambda^{-1} \bigg( \sum_{j=1}^m\ol \lambda^{m+1-j} \ol \vx_j\ol \vx_j^\top\ol \vw_j \bigg) \bigg]\nonumber\\
    &\!\!\!\!  \!\!\!\!& + \E[ \ol \vw_{m+1}^\top \ol \vx_{m+1}\ol \vx_{m+1}^\top\ol \vw_{m+1} ] - 2D_1\E\bigg[ \bigg( \sum_{i=1}^m\ol \lambda^{m+1-i} \ol \vw_i^\top\ol \vx_i\ol \vx_i^\top \bigg)\wt\mLambda^{-1} \ol \vx_{m+1} \ol \vx_{m+1}^\top\ol \vw_{m+1} \bigg].\nonumber\\
\end{eqnarray}

For the first term of the last equation in \eqref{recovery error final recovery error global minimum test}, we have
\begin{eqnarray}
    \label{first term recovery error final recovery error global minimum test}
    &\!\!\!\!  \!\!\!\!& \E\bigg[ \bigg( \sum_{i=1}^m\ol \lambda^{m+1-i} \ol \vw_i^\top\ol \vx_i\ol \vx_i^\top \bigg)\wt\mLambda^{-1} \ol \vx_{m+1}\ol \vx_{m+1}^\top \wt\mLambda^{-1} \bigg( \sum_{j=1}^m\ol \lambda^{m+1-j} \ol \vx_j\ol \vx_j^\top\ol \vw_j \bigg) \bigg]\nonumber\\
     &\!\!\!\! = \!\!\!\!& \E\bigg[\trace\bigg( \sum_{i=1}^m \sum_{j=1}^m \ol\lambda^{2m+2-i-j} \ol\vx_i\ol\vx_i^\top \wt\mLambda^{-1}\ol\mLambda  \wt\mLambda^{-1} \ol\vx_j\ol\vx_j^\top\ol\vw_j \ol\vw_i^\top \bigg)\bigg]\nonumber\\
     &\!\!\!\! = \!\!\!\!&\ol D_2\trace(\ol\mLambda\trace(\wt\mLambda^{-1}\ol\mLambda\wt\mLambda^{-1}\ol\mLambda ) + 2\ol\mLambda \wt\mLambda^{-1}\ol\mLambda \wt\mLambda^{-1}\ol\mLambda) + \ol D_3 \trace(\ol\mLambda \wt\mLambda^{-1}\ol\mLambda \wt\mLambda^{-1}\ol\mLambda),
\end{eqnarray}
where the last line uses \eqref{statisctis of fourth-order Gaussian vector} and $\E\big[ \ol\vx_b\ol\vx_b^\top \mA\ol\vx_a \ol\vx_a^\top \big] = \begin{cases}
    \ol\mLambda\mA\ol\mLambda, & a \neq b \\
    2\ol\mLambda\mA\ol\mLambda + \trace(\mA\ol\mLambda)\ol\mLambda, &  a = b
\end{cases}$ in \citep[Lemma A.2]{sayed2011adaptive}. By replacing $\lambda$, $\gamma$, $\sigma_w^2$, $\sigma_e^2$ and $n$ in $D_2$ and $D_3$ with $\ol\lambda$, $\ol\gamma$, $\ol\sigma_w^2$, $\ol\sigma_e^2$ and $m$, respectively, we obtain the modified quantities $\ol D_2$ and $\ol D_3$.

For the second term of the last equation in \eqref{recovery error final recovery error global minimum test}, we can derive
\begin{eqnarray}
    \label{second term recovery error final recovery error global minimum test}
    \E[ \ol\vw_{n+1}^\top \ol\vx_{n+1}\ol\vx_{n+1}^\top\ol\vw_{n+1} ] &\!\!\!\! = \!\!\!\!&\E[ \trace(\ol\mLambda \ol\vw_{n+1}\ol\vw_{n+1}^\top) ]\nonumber\\
     &\!\!\!\! = \!\!\!\!&\ol D_4\trace(\ol\mLambda)
\end{eqnarray}
where we define $\ol D_4 = \begin{cases} \ol\sigma_w^2 + (m+1)\ol\sigma_e^2  , & \ol\gamma = 1 \\
     \ol\gamma^{2m+2}\ol\sigma_w^2 + \frac{1- \ol\gamma^{2m+2}}{1-\ol\gamma^2 }\ol\sigma_e^2  , &  \ol\gamma \neq 1   \end{cases}$ which follows the result in \Cref{covariance matrix of gaussian vector}.

For the third term  of the last equation in \eqref{recovery error final recovery error global minimum test}, we get
\begin{eqnarray}
    \label{third term recovery error final recovery error global minimum test}
    \E\bigg[ \bigg( \sum_{i=1}^m\ol \lambda^{m+1-i} \ol \vw_i^\top\ol \vx_i\ol \vx_i^\top \bigg)\wt\mLambda^{-1} \ol \vx_{m+1} \ol \vx_{m+1}^\top\ol \vw_{m+1} \bigg] &\!\!\!\! = \!\!\!\!&\E\bigg[ \trace\bigg( \sum_{i=1}^m\ol \lambda^{m+1-i}\ol\mLambda\wt\mLambda^{-1} \ol\mLambda \ol\vw_{m+1}\ol\vw_i^\top \bigg)\bigg]\nonumber\\
     &\!\!\!\! = \!\!\!\!&\ol D_1\trace(\ol\mLambda \wt\mLambda^{-1}\ol\mLambda ),
\end{eqnarray}
where the last line uses \eqref{Definition of D1}. By replacing $\lambda$, $\gamma$, $\sigma_w^2$, $\sigma_e^2$ and $n$ in $D_1$ with $\ol\lambda$, $\ol\gamma$, $\ol\sigma_w^2$, $\ol\sigma_e^2$ and $m$, we obtain the modified quantity $\ol D_1$. This completes the proof.

\end{proof}


\begin{thebibliography}{57}
\providecommand{\natexlab}[1]{#1}
\providecommand{\url}[1]{\texttt{#1}}
\expandafter\ifx\csname urlstyle\endcsname\relax
  \providecommand{\doi}[1]{doi: #1}\else
  \providecommand{\doi}{doi: \begingroup \urlstyle{rm}\Url}\fi

\bibitem[Abdolee et~al.(2016)Abdolee, Vakilian, and
  Champagne]{abdolee2016tracking}
Reza Abdolee, Vida Vakilian, and Benoit Champagne.
\newblock Tracking performance and optimal adaptation step-sizes of
  diffusion-lms networks.
\newblock \emph{IEEE Transactions on Control of Network Systems}, 5\penalty0
  (1):\penalty0 67--78, 2016.

\bibitem[Achiam et~al.(2023)Achiam, Adler, Agarwal, Ahmad, Akkaya, Aleman,
  Almeida, Altenschmidt, Altman, Anadkat, et~al.]{achiam2023gpt}
Josh Achiam, Steven Adler, Sandhini Agarwal, Lama Ahmad, Ilge Akkaya,
  Florencia~Leoni Aleman, Diogo Almeida, Janko Altenschmidt, Sam Altman,
  Shyamal Anadkat, et~al.
\newblock Gpt-4 technical report.
\newblock \emph{arXiv preprint arXiv:2303.08774}, 2023.

\bibitem[Ahn et~al.(2023)Ahn, Cheng, Daneshmand, and Sra]{ahn2023transformers}
Kwangjun Ahn, Xiang Cheng, Hadi Daneshmand, and Suvrit Sra.
\newblock Transformers learn to implement preconditioned gradient descent for
  in-context learning.
\newblock \emph{Advances in Neural Information Processing Systems},
  36:\penalty0 45614--45650, 2023.

\bibitem[Aky{\"u}rek et~al.(2022)Aky{\"u}rek, Schuurmans, Andreas, Ma, and
  Zhou]{akyurek2022learning}
Ekin Aky{\"u}rek, Dale Schuurmans, Jacob Andreas, Tengyu Ma, and Denny Zhou.
\newblock What learning algorithm is in-context learning? investigations with
  linear models.
\newblock \emph{arXiv preprint arXiv:2211.15661}, 2022.

\bibitem[Aky{\"u}rek et~al.(2024)Aky{\"u}rek, Wang, Kim, and
  Andreas]{akyurek2024context}
Ekin Aky{\"u}rek, Bailin Wang, Yoon Kim, and Jacob Andreas.
\newblock In-context language learning: Architectures and algorithms.
\newblock \emph{arXiv preprint arXiv:2401.12973}, 2024.

\bibitem[Bai et~al.(2023)Bai, Chen, Wang, Xiong, and Mei]{bai2023transformers}
Yu~Bai, Fan Chen, Huan Wang, Caiming Xiong, and Song Mei.
\newblock Transformers as statisticians: Provable in-context learning with
  in-context algorithm selection.
\newblock \emph{Advances in neural information processing systems},
  36:\penalty0 57125--57211, 2023.

\bibitem[Bertsch et~al.(2024)Bertsch, Ivgi, Xiao, Alon, Berant, Gormley, and
  Neubig]{bertsch2024context}
Amanda Bertsch, Maor Ivgi, Emily Xiao, Uri Alon, Jonathan Berant, Matthew~R
  Gormley, and Graham Neubig.
\newblock In-context learning with long-context models: An in-depth
  exploration.
\newblock \emph{arXiv preprint arXiv:2405.00200}, 2024.

\bibitem[Brown et~al.(2020)Brown, Mann, Ryder, Subbiah, Kaplan, Dhariwal,
  Neelakantan, Shyam, Sastry, Askell, et~al.]{brown2020language}
Tom Brown, Benjamin Mann, Nick Ryder, Melanie Subbiah, Jared~D Kaplan, Prafulla
  Dhariwal, Arvind Neelakantan, Pranav Shyam, Girish Sastry, Amanda Askell,
  et~al.
\newblock Language models are few-shot learners.
\newblock \emph{Advances in neural information processing systems},
  33:\penalty0 1877--1901, 2020.

\bibitem[Chen et~al.(2021)Chen, Lu, Rajeswaran, Lee, Grover, Laskin, Abbeel,
  Srinivas, and Mordatch]{chen2021decision}
Lili Chen, Kevin Lu, Aravind Rajeswaran, Kimin Lee, Aditya Grover, Misha
  Laskin, Pieter Abbeel, Aravind Srinivas, and Igor Mordatch.
\newblock Decision transformer: Reinforcement learning via sequence modeling.
\newblock \emph{Advances in neural information processing systems},
  34:\penalty0 15084--15097, 2021.

\bibitem[Chen et~al.(2019)Chen, Zhao, Li, Huang, and Ou]{chen2019behavior}
Qiwei Chen, Huan Zhao, Wei Li, Pipei Huang, and Wenwu Ou.
\newblock Behavior sequence transformer for e-commerce recommendation in
  alibaba.
\newblock In \emph{Proceedings of the 1st international workshop on deep
  learning practice for high-dimensional sparse data}, pp.\  1--4, 2019.

\bibitem[Chen et~al.(2024)Chen, Sheen, Wang, and Yang]{chen2024training}
Siyu Chen, Heejune Sheen, Tianhao Wang, and Zhuoran Yang.
\newblock Training dynamics of multi-head softmax attention for in-context
  learning: Emergence, convergence, and optimality.
\newblock \emph{arXiv preprint arXiv:2402.19442}, 2024.

\bibitem[Claser \& Nascimento(2021)Claser and Nascimento]{claser2021tracking}
Raffaello Claser and Vitor~H Nascimento.
\newblock On the tracking performance of adaptive filters and their
  combinations.
\newblock \emph{IEEE Transactions on Signal Processing}, 69:\penalty0
  3104--3116, 2021.

\bibitem[Das et~al.(2015)Das, Ruiz, Chakraborty, and
  Arenas-Garc{\'\i}a]{das2015steady}
Bijit~Kumar Das, Luis A~Azpicueta Ruiz, Mrityunjoy Chakraborty, and
  Jer{\'o}nimo Arenas-Garc{\'\i}a.
\newblock On steady state tracking performance of adaptive networks.
\newblock In \emph{2015 IEEE International Conference on Digital Signal
  Processing (DSP)}, pp.\  843--847. IEEE, 2015.

\bibitem[Ding et~al.()Ding, Levinboim, Wu, Goodman, and Soricut]{dingcausallm}
Nan Ding, Tomer Levinboim, Jialin Wu, Sebastian Goodman, and Radu Soricut.
\newblock Causallm is not optimal for in-context learning.
\newblock In \emph{The Twelfth International Conference on Learning
  Representations}.

\bibitem[Dong et~al.(2022)Dong, Li, Dai, Zheng, Ma, Li, Xia, Xu, Wu, Liu,
  et~al.]{dong2022survey}
Qingxiu Dong, Lei Li, Damai Dai, Ce~Zheng, Jingyuan Ma, Rui Li, Heming Xia,
  Jingjing Xu, Zhiyong Wu, Tianyu Liu, et~al.
\newblock A survey on in-context learning.
\newblock \emph{arXiv preprint arXiv:2301.00234}, 2022.

\bibitem[Dosovitskiy et~al.(2020)Dosovitskiy, Beyer, Kolesnikov, Weissenborn,
  Zhai, Unterthiner, Dehghani, Minderer, Heigold, Gelly,
  et~al.]{dosovitskiy2020image}
Alexey Dosovitskiy, Lucas Beyer, Alexander Kolesnikov, Dirk Weissenborn,
  Xiaohua Zhai, Thomas Unterthiner, Mostafa Dehghani, Matthias Minderer,
  G~Heigold, S~Gelly, et~al.
\newblock An image is worth 16x16 words: Transformers for image recognition at
  scale.
\newblock In \emph{International Conference on Learning Representations}, 2020.

\bibitem[Fu et~al.(2024)Fu, Chen, Jia, and Sharan]{fu2024transformers}
Deqing Fu, Tian-qi Chen, Robin Jia, and Vatsal Sharan.
\newblock Transformers learn to achieve second-order convergence rates for
  in-context linear regression.
\newblock \emph{Advances in Neural Information Processing Systems},
  37:\penalty0 98675--98716, 2024.

\bibitem[Garg et~al.(2022)Garg, Tsipras, Liang, and Valiant]{garg2022can}
Shivam Garg, Dimitris Tsipras, Percy~S Liang, and Gregory Valiant.
\newblock What can transformers learn in-context? a case study of simple
  function classes.
\newblock \emph{Advances in Neural Information Processing Systems},
  35:\penalty0 30583--30598, 2022.

\bibitem[Gu \& Dao(2023)Gu and Dao]{gu2023mamba}
Albert Gu and Tri Dao.
\newblock Mamba: Linear-time sequence modeling with selective state spaces.
\newblock \emph{arXiv preprint arXiv:2312.00752}, 2023.

\bibitem[Huang et~al.(2023)Huang, Cheng, and Liang]{huang2023context}
Yu~Huang, Yuan Cheng, and Yingbin Liang.
\newblock In-context convergence of transformers.
\newblock \emph{arXiv preprint arXiv:2310.05249}, 2023.

\bibitem[Huang et~al.(2024)Huang, Cheng, and Liang]{huang2024context}
Yu~Huang, Yuan Cheng, and Yingbin Liang.
\newblock In-context convergence of transformers.
\newblock In \emph{Proceedings of the 41st International Conference on Machine
  Learning}, pp.\  19660--19722, 2024.

\bibitem[Janner et~al.(2021)Janner, Li, and Levine]{janner2021offline}
Michael Janner, Qiyang Li, and Sergey Levine.
\newblock Offline reinforcement learning as one big sequence modeling problem.
\newblock \emph{Advances in neural information processing systems},
  34:\penalty0 1273--1286, 2021.

\bibitem[Jiang et~al.(2025{\natexlab{a}})Jiang, Dong, Zhou, and
  Zhu]{jiang2025compression}
Jiachen Jiang, Yuxin Dong, Jinxin Zhou, and Zhihui Zhu.
\newblock From compression to expansion: A layerwise analysis of in-context
  learning.
\newblock \emph{arXiv preprint arXiv:2505.17322}, 2025{\natexlab{a}}.

\bibitem[Jiang et~al.(2025{\natexlab{b}})Jiang, Qin, and Zhu]{jiang2025context}
Jiachen Jiang, Zhen Qin, and Zhihui Zhu.
\newblock In-context learning for non-stationary mimo equalization.
\newblock \emph{arXiv preprint arXiv:2510.08711}, 2025{\natexlab{b}}.

\bibitem[Katsch(2023)]{katsch2023gateloop}
Tobias Katsch.
\newblock Gateloop: Fully data-controlled linear recurrence for sequence
  modeling.
\newblock \emph{arXiv preprint arXiv:2311.01927}, 2023.

\bibitem[Kim et~al.(2023)Kim, Lee, Ju, Ngo, Moon, and Shim]{kim2023transformer}
Seungnyun Kim, Anho Lee, Hyungyu Ju, Khoa~Anh Ngo, Jihoon Moon, and Byonghyo
  Shim.
\newblock Transformer-based channel parameter acquisition for terahertz
  ultra-massive mimo systems.
\newblock \emph{IEEE Transactions on Vehicular Technology}, 72\penalty0
  (11):\penalty0 15127--15132, 2023.

\bibitem[Li et~al.(2024{\natexlab{a}})Li, Zhang, Do, Yue, and Chen]{li2024long}
Tianle Li, Ge~Zhang, Quy~Duc Do, Xiang Yue, and Wenhu Chen.
\newblock Long-context llms struggle with long in-context learning.
\newblock \emph{arXiv preprint arXiv:2404.02060}, 2024{\natexlab{a}}.

\bibitem[Li et~al.(2024{\natexlab{b}})Li, Rawat, and Oymak]{li2024fine}
Yingcong Li, Ankit~S Rawat, and Samet Oymak.
\newblock Fine-grained analysis of in-context linear estimation: Data,
  architecture, and beyond.
\newblock \emph{Advances in Neural Information Processing Systems},
  37:\penalty0 138324--138364, 2024{\natexlab{b}}.

\bibitem[Li et~al.(2025)Li, Tarzanagh, Rawat, Fazel, and Oymak]{li2025gating}
Yingcong Li, Davoud~Ataee Tarzanagh, Ankit~Singh Rawat, Maryam Fazel, and Samet
  Oymak.
\newblock Gating is weighting: Understanding gated linear attention through
  in-context learning.
\newblock \emph{arXiv preprint arXiv:2504.04308}, 2025.

\bibitem[Ma et~al.(2025)Ma, Sun, Dong, Chen, and Rabitz]{ma2025tomography}
Hailan Ma, Zhenhong Sun, Daoyi Dong, Chunlin Chen, and Herschel Rabitz.
\newblock Tomography of quantum states from structured measurements via
  quantum-aware transformer.
\newblock \emph{IEEE Transactions on Cybernetics}, 2025.

\bibitem[Mahankali et~al.(2023)Mahankali, Hashimoto, and Ma]{mahankali2023one}
Arvind Mahankali, Tatsunori~B Hashimoto, and Tengyu Ma.
\newblock One step of gradient descent is provably the optimal in-context
  learner with one layer of linear self-attention.
\newblock \emph{arXiv preprint arXiv:2307.03576}, 2023.

\bibitem[Mahankali et~al.()Mahankali, Hashimoto, and Ma]{mahankalione}
Arvind~V Mahankali, Tatsunori Hashimoto, and Tengyu Ma.
\newblock One step of gradient descent is provably the optimal in-context
  learner with one layer of linear self-attention.
\newblock In \emph{The Twelfth International Conference on Learning
  Representations}.

\bibitem[Min et~al.(2021)Min, Lewis, Zettlemoyer, and
  Hajishirzi]{min2021metaicl}
Sewon Min, Mike Lewis, Luke Zettlemoyer, and Hannaneh Hajishirzi.
\newblock Metaicl: Learning to learn in context.
\newblock \emph{arXiv preprint arXiv:2110.15943}, 2021.

\bibitem[Peng et~al.(2024)Peng, Goldstein, Anthony, Albalak, Alcaide, Biderman,
  Cheah, Du, Ferdinan, Hou, et~al.]{peng2024eagle}
Bo~Peng, Daniel Goldstein, Quentin Anthony, Alon Albalak, Eric Alcaide, Stella
  Biderman, Eugene Cheah, Xingjian Du, Teddy Ferdinan, Haowen Hou, et~al.
\newblock Eagle and finch: Rwkv with matrix-valued states and dynamic
  recurrence.
\newblock \emph{arXiv preprint arXiv:2404.05892}, 2024.

\bibitem[Qin et~al.(2020)Qin, Tao, and Xia]{qin2020proportionate}
Zhen Qin, Jun Tao, and Yili Xia.
\newblock A proportionate recursive least squares algorithm and its performance
  analysis.
\newblock \emph{IEEE Transactions on Circuits and Systems II: Express Briefs},
  68\penalty0 (1):\penalty0 506--510, 2020.

\bibitem[Qin et~al.(2025)Qin, Zhou, and Zhu]{qin2025convergence}
Zhen Qin, Jinxin Zhou, and Zhihui Zhu.
\newblock On the convergence of gradient descent on learning transformers with
  residual connections.
\newblock \emph{arXiv preprint arXiv:2506.05249}, 2025.

\bibitem[Radford et~al.(2019)Radford, Wu, Child, Luan, Amodei, Sutskever,
  et~al.]{radford2019language}
Alec Radford, Jeffrey Wu, Rewon Child, David Luan, Dario Amodei, Ilya
  Sutskever, et~al.
\newblock Language models are unsupervised multitask learners.
\newblock \emph{OpenAI blog}, 1\penalty0 (8):\penalty0 9, 2019.

\bibitem[Sayed(2011)]{sayed2011adaptive}
Ali~H Sayed.
\newblock \emph{Adaptive filters}.
\newblock John Wiley \& Sons, 2011.

\bibitem[Socher et~al.(2013)Socher, Perelygin, Wu, Chuang, Manning, Ng, and
  Potts]{socher2013recursive}
Richard Socher, Alex Perelygin, Jean Wu, Jason Chuang, Christopher~D Manning,
  Andrew~Y Ng, and Christopher Potts.
\newblock Recursive deep models for semantic compositionality over a sentiment
  treebank.
\newblock In \emph{Proceedings of the 2013 conference on empirical methods in
  natural language processing}, pp.\  1631--1642, 2013.

\bibitem[Song et~al.(2024)Song, Han, Zhang, Ding, and Hong]{song2024unraveling}
Bingqing Song, Boran Han, Shuai Zhang, Jie Ding, and Mingyi Hong.
\newblock Unraveling the gradient descent dynamics of transformers.
\newblock \emph{Advances in Neural Information Processing Systems},
  37:\penalty0 92317--92351, 2024.

\bibitem[Sun et~al.(2023)Sun, Dong, Huang, Ma, Xia, Xue, Wang, and
  Wei]{sun2023retentive}
Yutao Sun, Li~Dong, Shaohan Huang, Shuming Ma, Yuqing Xia, Jilong Xue, Jianyong
  Wang, and Furu Wei.
\newblock Retentive network: A successor to transformer for large language
  models.
\newblock \emph{arXiv preprint arXiv:2307.08621}, 2023.

\bibitem[Tsai et~al.(2019)Tsai, Bai, Liang, Kolter, Morency, and
  Salakhutdinov]{tsai2019multimodal}
Yao-Hung~Hubert Tsai, Shaojie Bai, Paul~Pu Liang, J~Zico Kolter, Louis-Philippe
  Morency, and Ruslan Salakhutdinov.
\newblock Multimodal transformer for unaligned multimodal language sequences.
\newblock In \emph{Proceedings of the conference. Association for computational
  linguistics. Meeting}, volume 2019, pp.\  6558, 2019.

\bibitem[Vaswani et~al.(2017)Vaswani, Shazeer, Parmar, Uszkoreit, Jones, Gomez,
  Kaiser, and Polosukhin]{vaswani2017attention}
Ashish Vaswani, Noam Shazeer, Niki Parmar, Jakob Uszkoreit, Llion Jones,
  Aidan~N Gomez, {\L}ukasz Kaiser, and Illia Polosukhin.
\newblock Attention is all you need.
\newblock \emph{Advances in neural information processing systems}, 30, 2017.

\bibitem[Von~Oswald et~al.(2023)Von~Oswald, Niklasson, Randazzo, Sacramento,
  Mordvintsev, Zhmoginov, and Vladymyrov]{von2023transformers}
Johannes Von~Oswald, Eyvind Niklasson, Ettore Randazzo, Jo{\~a}o Sacramento,
  Alexander Mordvintsev, Andrey Zhmoginov, and Max Vladymyrov.
\newblock Transformers learn in-context by gradient descent.
\newblock In \emph{International Conference on Machine Learning}, pp.\
  35151--35174. PMLR, 2023.

\bibitem[Wang et~al.(2022)Wang, Qin, Tao, and Yang]{wang2022performance}
Yu~Wang, Zhen Qin, Jun Tao, and Le~Yang.
\newblock Performance analysis of prls-based time-varying sparse system
  identification.
\newblock In \emph{2022 IEEE 12th Sensor Array and Multichannel Signal
  Processing Workshop (SAM)}, pp.\  251--255. IEEE, 2022.

\bibitem[Wies et~al.(2023)Wies, Levine, and Shashua]{wies2023learnability}
Noam Wies, Yoav Levine, and Amnon Shashua.
\newblock The learnability of in-context learning.
\newblock \emph{Advances in Neural Information Processing Systems},
  36:\penalty0 36637--36651, 2023.

\bibitem[Williams et~al.(2018)Williams, Nangia, and Bowman]{williams2018broad}
Adina Williams, Nikita Nangia, and Samuel Bowman.
\newblock A broad-coverage challenge corpus for sentence understanding through
  inference.
\newblock In \emph{Proceedings of the 2018 conference of the North American
  chapter of the association for computational linguistics: human language
  technologies, volume 1 (long papers)}, pp.\  1112--1122, 2018.

\bibitem[Wu et~al.(2023)Wu, Liu, Chrysos, and Cevher]{wu2023convergence}
Yongtao Wu, Fanghui Liu, Grigorios Chrysos, and Volkan Cevher.
\newblock On the convergence of encoder-only shallow transformers.
\newblock \emph{Advances in Neural Information Processing Systems},
  36:\penalty0 52197--52237, 2023.

\bibitem[Yang et~al.(2023)Yang, Wang, Shen, Panda, and Kim]{yang2023gated}
Songlin Yang, Bailin Wang, Yikang Shen, Rameswar Panda, and Yoon Kim.
\newblock Gated linear attention transformers with hardware-efficient training.
\newblock \emph{arXiv preprint arXiv:2312.06635}, 2023.

\bibitem[Yang et~al.(2024)Yang, Huang, Liang, and Chi]{yang2024context}
Tong Yang, Yu~Huang, Yingbin Liang, and Yuejie Chi.
\newblock In-context learning with representations: Contextual generalization
  of trained transformers.
\newblock \emph{arXiv preprint arXiv:2408.10147}, 2024.

\bibitem[Yu et~al.(2021)Yu, de~Lamare, Yang, and Cai]{yu2021tracking}
Yi~Yu, Rodrigo~C de~Lamare, Tao Yang, and Qiangming Cai.
\newblock Tracking analyses of m-estimate based lms and nlms algorithms.
\newblock In \emph{2021 IEEE Statistical Signal Processing Workshop (SSP)},
  pp.\  1--5. IEEE, 2021.

\bibitem[Zhang et~al.(2024)Zhang, Frei, and Bartlett]{zhang2024trained}
Ruiqi Zhang, Spencer Frei, and Peter~L Bartlett.
\newblock Trained transformers learn linear models in-context.
\newblock \emph{Journal of Machine Learning Research}, 25\penalty0
  (49):\penalty0 1--55, 2024.

\bibitem[Zhang \& Wu(2012)Zhang and Wu]{zhang2012inference}
Ting Zhang and Wei~Biao Wu.
\newblock Inference of time-varying regression models.
\newblock 2012.

\bibitem[Zhang \& Wu(2015)Zhang and Wu]{zhang2015time}
Ting Zhang and Wei~Biao Wu.
\newblock Time-varying nonlinear regression models: nonparametric estimation
  and model selection.
\newblock 2015.

\bibitem[Zhang et~al.(2025)Zhang, Singh, Latham, and Saxe]{zhang2025training}
Yedi Zhang, Aaditya~K Singh, Peter~E Latham, and Andrew Saxe.
\newblock Training dynamics of in-context learning in linear attention.
\newblock \emph{arXiv preprint arXiv:2501.16265}, 2025.

\bibitem[Zhang et~al.(2023)Zhang, Zhou, and Liu]{zhang2023makes}
Yuanhan Zhang, Kaiyang Zhou, and Ziwei Liu.
\newblock What makes good examples for visual in-context learning?
\newblock \emph{Advances in Neural Information Processing Systems},
  36:\penalty0 17773--17794, 2023.

\bibitem[Zhou et~al.(2018)Zhou, Zhu, Song, Fan, Zhu, Ma, Yan, Jin, Li, and
  Gai]{zhou2018deep}
Guorui Zhou, Xiaoqiang Zhu, Chenru Song, Ying Fan, Han Zhu, Xiao Ma, Yanghui
  Yan, Junqi Jin, Han Li, and Kun Gai.
\newblock Deep interest network for click-through rate prediction.
\newblock In \emph{Proceedings of the 24th ACM SIGKDD international conference
  on knowledge discovery \& data mining}, pp.\  1059--1068, 2018.

\end{thebibliography}
\end{document}